\pdfoutput=1
\documentclass{article}
\usepackage{iclr2027_conference,times}
\iclrfinalcopy
\usepackage{amsmath,amssymb,amsthm,mathtools}
\usepackage{booktabs,longtable,graphicx,tabularx}
\usepackage{tikz}
\usetikzlibrary{arrows.meta}
\usepackage{enumitem}
\usepackage{float}
\usepackage{aliascnt}
\usepackage{hyperref}
\usepackage{url}
\usepackage[capitalise,nameinlink,noabbrev]{cleveref}

\input{glyphtounicode}
\usepackage{microtype}
\usepackage{placeins}
\usepackage{flafter}
\usepackage{needspace}
\allowdisplaybreaks
\makeatletter
\long\def\@makecaption#1#2{%
  \vskip\abovecaptionskip
  \small
  \sbox\@tempboxa{#1: #2}%
  \ifdim \wd\@tempboxa >\hsize
    #1: #2\par
  \else
    \global\@minipagefalse
    \hb@xt@\hsize{\hfil\box\@tempboxa\hfil}%
  \fi
  \vskip\belowcaptionskip}
\def\thm@space@setup{\thm@preskip=3.5pt \thm@postskip=3.5pt}
\makeatother
\newtheorem{theorem}{Theorem}[section]
\newcommand{\sharedtheorem}[2]{%
  \newaliascnt{#1}{theorem}%
  \newtheorem{#1}[#1]{#2}%
  \aliascntresetthe{#1}%
  \crefname{#1}{#2}{#2s}%
  \Crefname{#1}{#2}{#2s}}
\sharedtheorem{lemma}{Lemma}
\sharedtheorem{proposition}{Proposition}
\sharedtheorem{corollary}{Corollary}
\crefname{corollary}{Corollary}{Corollaries}
\Crefname{corollary}{Corollary}{Corollaries}
\theoremstyle{definition}
\sharedtheorem{definition}{Definition}
\sharedtheorem{assumption}{Assumption}
\theoremstyle{remark}
\sharedtheorem{remark}{Remark}

\DeclareMathOperator{\TM}{TM}
\DeclareMathOperator{\Up}{U}
\DeclareMathOperator{\Dn}{L}
\DeclareMathOperator{\OS}{os}
\DeclareMathOperator{\Unif}{Unif}

\newcommand{\cX}{\mathcal X}
\newcommand{\cY}{\mathcal Y}
\newcommand{\cH}{\mathcal H}
\newcommand{\cA}{\mathcal A}
\newcommand{\cC}{\mathcal C}
\newcommand{\R}{\mathbb R}
\newcommand{\E}{\mathbb E}
\newcommand{\Pp}{\mathbb P}
\newcommand{\ind}{\mathbf 1}
\newcommand{\Smax}{S_{\max}}

\newcommand{\calib}{\mathrm{cal}}
\newcommand{\qry}{\mathrm{qry}}
\newcommand{\rlo}{\mathrm{lo}}
\newcommand{\rhi}{\mathrm{hi}}
\title{Byzantine-Robust Federated RAG via\\
Aligned Calibration and Fixed-Membership Conformal Prediction}
\author{Prasanjit Dubey$^{1}$, Aritra Guha$^{2}$ \& Xiaoming Huo$^{1}$ \\
$^{1}$H. Milton Stewart School of Industrial and Systems Engineering,\\
Georgia Institute of Technology, Atlanta, GA 30332, U.S.A.\\
$^{2}$AT\&T Chief Data Office}
\hypersetup{
  pdftitle={Byzantine-Robust Federated RAG via Aligned Calibration and Fixed-Membership Conformal Prediction},
  pdfauthor={Prasanjit Dubey, Aritra Guha and Xiaoming Huo}
}

\newcommand{\RealStableMinSizeSaving}{0.057}
\newcommand{\RealStableMaxSizeSaving}{0.077}

\newcommand{\RealOracleMinSize}{3.509}
\newcommand{\RealOracleMaxSize}{3.567}
\newcommand{\RealCommonMinAccuracy}{24.10}
\newcommand{\RealCommonMaxAccuracy}{28.12}
\newcommand{\RealViolationMinCoverage}{68.74}
\newcommand{\RealViolationMaxCoverage}{75.92}
\newcommand{\RealRobSizeLowerCells}{44}

\newcommand{\RealRobCoverageLowerCells}{43}

{}

\newcommand{\SynLowHighSize}{0.965}
\newcommand{\SynLowHighEmptyPercent}{8.92}
\newcommand{\SynHighLowFsSize}{1.836}
\newcommand{\SynHighLowPmergeSize}{1.487}

\newcommand{\SynViolatedCoverage}{77.13}

\newcommand{\SynStableFsReductionPercent}{19.19}

{}

\begin{document}
\maketitle
\fancyhead{}
\renewcommand{\headrulewidth}{0pt}
\begin{abstract}
Language models answer questions more accurately when they can consult relevant
documents, an approach called retrieval-augmented generation (RAG). Many valuable
collections, such as medical records or company files, cannot be pooled in one place
because of privacy rules or ownership. Federated RAG leaves each collection with its
owner: each owner, or \emph{node}, searches its own documents and scores candidate
answers, and a central hub combines the scores. Because the hub cannot inspect the
nodes, some of them, called Byzantine, may be compromised, faulty, or misled by
malicious instructions hidden in documents, and report arbitrary scores. We want the
hub to return a set of candidate answers that contains the correct one with a chosen
probability, the guarantee offered by conformal prediction. Conformal prediction keeps
every answer whose score passes a cutoff, and the hub sets that cutoff in a calibration
step, using questions with known answers. An unknown group of nodes, no larger than a
declared bound, may misreport both in this step and when new questions are answered.
Existing methods assume every node is honest or protect only the calibration step. Our
method rests on a simple observation: the honest nodes are the same in both steps. The
hub therefore has all nodes score the same calibration questions. It keeps a candidate
answer only if some plausible group of honest nodes, using its own scores in both steps,
would keep it. We prove three
guarantees. The resulting sets contain the correct answer with the chosen probability
in finite samples, whatever the Byzantine nodes report. No method using the same
information can return smaller sets without risking the loss of an answer the honest
nodes support. If nodes fail at random, the guarantee weakens only by the probability
that more nodes fail than declared. We tested the method in simulations and on real
question-answering tasks, including medical exams. We also used a panel of language
models as nodes, some of them hijacked by injected instructions. Our answer sets reached
the target probability whenever no more nodes misbehaved than declared. They were also
clearly smaller than those of simpler methods offering the same protection, most of all
when the declared bound was generous. Simply averaging the nodes' scores could miss the
target. In practice, an operator can therefore declare a cautious bound on the number of
bad nodes without paying much for it in the size of the answer sets.
\end{abstract}

\section{Introduction}
\label{sec:introduction}

Many of the document collections that could make language models more reliable cannot be
pooled in one place. Retrieval-augmented generation (RAG) lets a language model answer a
question using documents retrieved for it~\citep{lewis2020retrieval}. Yet valuable
collections, such as the records of different hospitals or the files of different firms,
are often kept apart by privacy rules or ownership. Federated RAG leaves each collection
with its owner~\citep{chakraborty2025fedragmap,he2025pfedrag,dhasade2026ragroute}: each
owner, or \emph{node}, retrieves from its own collection and scores candidate answers,
and a central hub combines the scores. For its output to be trusted, the hub
should return a set of answers that contains the correct one with a chosen probability.
Conformal prediction provides such sets~\citep{vovk2005algorithmic,angelopoulos2023gentle}.
It sets a cutoff from the scores of the correct answers to \emph{calibration questions},
whose answers are known, and keeps every candidate scoring at or below it (a lower score
means a more plausible answer).

The challenge is that the hub cannot inspect the nodes, so some may misreport, whether
compromised, faulty, or misled by instructions planted in retrieved text (\emph{prompt
injection}). Consider ``Which dry material conducts electricity: copper, wood, glass, or
rubber?'' sent to 16 nodes with separate corpus shards. At most two nodes may report
arbitrary scores, both when the cutoff is calibrated (the \emph{calibration phase}) and
when the question is answered (the \emph{query phase}). Such nodes are \emph{Byzantine},
the rest are honest, and the hub does not know which are which. The natural baseline, a
plain average of all nodes' reported scores (their \emph{reports}), is vulnerable. Small
calibration reports lower the cutoff and large query reports push the correct answer out; in our experiments this makes
the set miss the correct answer far more often than the target allows
(\cref{fig:unguarded-coverage}a in \cref{sec:experiments}). Sending scores with few bits
to save communication also perturbs even honest scores by rounding.

\begin{figure}[t]
\centering
\begingroup%
\definecolor{settingBlue}{HTML}{0072B2}%
\definecolor{settingOrange}{HTML}{D55E00}%
\definecolor{settingInk}{HTML}{263238}%
\definecolor{settingGray}{HTML}{69747B}%
\begin{tikzpicture}[
  x=1pt,y=0.85pt,
  font=\fontsize{8}{9.5}\selectfont,
  text=settingInk,
  line width=0.6pt,
  box/.style={draw=settingGray!55,rounded corners=3pt,fill=white},
  flow/.style={-{Stealth[length=3.5pt,width=3pt]},draw=settingGray},
  labeltext/.style={align=center,inner sep=0pt},
  heading/.style={labeltext,font=\fontsize{8}{9.5}\selectfont\bfseries},
  service/.style={draw=settingBlue,fill=settingBlue!7,rounded corners=2pt,
    minimum width=23pt,minimum height=15.5pt,inner sep=0pt},
  corrupt/.style={service,draw=settingOrange,fill=settingOrange!7,densely dashed}
]
\path[use as bounding box] (0,12) rectangle (396,272);

\path[box,fill=settingGray!5] (0,241) rectangle (396,271);
\node[labeltext,font=\fontsize{9}{10.5}\selectfont] at (198,260)
  {Which dry material conducts electricity?};
\node[labeltext] at (198,248) {Copper \quad Wood \quad Glass \quad Rubber};

\node[heading] at (62,225) {(a) Federated scoring};
\node[heading] at (205,225) {(b) Hub: fixed-set rule};
\node[heading] at (337,225) {(c) Answer sets};
\draw[settingGray!25] (134,9) -- (134,216);
\draw[settingGray!25] (275,9) -- (275,216);

\node[labeltext,text width=124pt] at (62,200)
  {Same calibration questions\\and new query at all nodes};
\draw[flow] (62,186) -- (62,177);
\foreach \i in {1,...,16} {
  \pgfmathtruncatemacro{\row}{int((\i-1)/4)}
  \pgfmathtruncatemacro{\col}{mod(\i-1,4)}
  \pgfmathsetmacro{\xx}{20+28*\col}
  \pgfmathsetmacro{\yy}{163-22*\row}
  \ifnum\i<3
    \node[corrupt] at (\xx,\yy) {\i};
    \draw[settingOrange,line width=0.9pt]
      (\xx+6,\yy+4.4) -- (\xx+10,\yy+9.1)
      (\xx+6,\yy+9.1) -- (\xx+10,\yy+4.4);
  \else
    \node[service] at (\xx,\yy) {\i};
  \fi
}
\node[labeltext,text width=124pt] at (62,65.5)
  {At most two Byzantine nodes\\Same identities in both phases\\Unknown to the hub};
\node[labeltext,text width=123pt] at (62,30)
  {Each node retrieves and scores\\using its own documents.};
\draw[flow] (118,130) -- (140,130);
\node[labeltext,fill=white,inner sep=1pt] at (127,141) {scores};

\path[box,draw=settingBlue,fill=settingBlue!7] (146,183) rectangle (265,211);
\node[labeltext] at (205.5,197)
  {Try a possible honest group\\of 14--16 nodes};
\draw[flow] (205.5,182) -- (205.5,167);
\path[box] (151,132) rectangle (260,165);
\node[labeltext] at (205.5,148.5)
  {Calibration: correct answers\\Group means $\to$ cutoff};
\path[box] (151,86) rectangle (260,119);
\node[labeltext] at (205.5,102.5)
  {Query: candidate answer\\Same group's mean};
\draw[flow,draw=settingBlue] (151,190) -- (143,190) -- (143,102.5) -- (150,102.5);
\draw[flow] (260,148.5) -- (268,148.5) -- (268,56) -- (261,56);
\draw[flow] (205.5,85) -- (205.5,74);
\path[box] (151,40) rectangle (260,72);
\node[labeltext] at (205.5,56)
  {Query mean $\le$ cutoff\\$+$ error allowance?};
\node[labeltext,text width=125pt,font=\fontsize{8}{9.5}\selectfont] at (205.5,25)
  {Retain the candidate if\\\textbf{any group} supports it.};

\node[labeltext,text width=112pt] at (337,199)
  {Worked example\\No rounding error};
\path[box,draw=settingBlue,fill=settingBlue!4,line width=0.9pt]
  (284,137) rectangle (390,182);
\node[heading,text=settingBlue] at (337,170) {Fixed-set};
\node[labeltext,font=\fontsize{9}{10.5}\selectfont] at (337,151)
  {$\{\text{copper}\}$};
\path[box,fill=settingGray!4] (284,66) rectangle (390,126);
\node[labeltext] at (337,112) {Joint-threshold\\or deletion};
\node[labeltext,font=\fontsize{9}{10.5}\selectfont] at (337,83)
  {$\{\text{copper},\text{wood}\}$};
\node[labeltext,text width=113pt,font=\fontsize{8}{9.5}\selectfont] at (337,28)
  {One extra option removed\\by fixed membership.};
\end{tikzpicture}%
\endgroup%
{}
\caption{Requiring one possible honest group to support a candidate in both phases (fixed
membership, b) removes an answer that two relaxations keep because they may use
different groups for the cutoff and the query (c; \cref{sec:deletion-method}). The hub does
not know which two nodes (marked $\times$) are Byzantine. (a) All 16 nodes score the same calibration questions and the query. (b) A group supports a candidate if its mean query score is at most the cutoff set by its own calibration scores, plus a rounding allowance; the proposed fixed-set rule keeps exactly the candidates that some group supports. (c) No group supports wood, so fixed-set returns $\{\text{copper}\}$, while joint-threshold and deletion return $\{\text{copper},\text{wood}\}$.}
\label{fig:problem-setting}
\end{figure}

We cast the problem as split conformal prediction with contaminated inputs
(\cref{sec:setup}). If the hub knew which nodes were honest and saw their exact scores, it
could average those scores and apply standard split conformal prediction; we call the
resulting set the \emph{honest-mean oracle}. The hub cannot compute it, but it knows a
declared \emph{budget} $A$: an upper bound, fixed in advance, on the number of Byzantine
nodes. We seek a prediction set
that meets three requirements. (i) It contains the correct answer with probability at least
$1-\alpha$, whatever the Byzantine nodes report. (ii) It contains every answer the
oracle might contain, given what the reports reveal. (iii) It is otherwise as small
as possible. Requirement (ii) is
stronger than (i). Coverage holds only on average over questions, so a set can meet it
while dropping answers the honest nodes support, as does discarding nodes that look
suspicious (\cref{sec:experiments}).

Existing methods guarantee parts of this problem but not the whole. TRAQ and C-RAG
certify answer sets or risk for RAG~\citep{li2024traq,kang2024crag}, and federated
conformal methods assume honest reporting~\citep{lu2023federated,humbert2023oneshot,zhu2024wfcp}.
Rob-FCP and PRISM-FCP defend calibration but not query scores~\citep{kang2024robfcp,lari2026prism}.
FC-RAG calibrates each node on its own questions and assumes the resulting summaries
approximate the all-node mean's cutoff~\citep{dubeyhuo2026bandwidth}. Robust conformal
methods certify sets against bounded perturbations or data
poisoning~\citep{gendler2022adversarial,zargarbashi2024robust,scholten2025reliable}.
None says which answers to keep when one unknown honest set is shared by every
calibration question and the query (\cref{sec:related}). Together, these methods leave three gaps. First, most assume honest reporting. Second, current defenses protect only calibration or bound each score separately, though nodes can misreport in both phases. Third, calibrating each node separately, as FC-RAG does, cannot determine the honest-mean oracle's cutoff (\cref{sec:alignment}).

We propose \emph{fixed-set} inference, which exploits the fact that the same nodes are
honest in the calibration phase and in the query phase. All nodes score the same
calibration questions. With $K$ nodes in total, any set of at least $K-A$ of them could
be the honest group; call it a \emph{possible honest group}. Such a group
\emph{supports} a candidate answer if the
group's average query score for it falls at or below the cutoff computed from that same
group's calibration scores, allowing for known rounding error. Fixed-set keeps exactly
the candidates that some possible honest group supports. It closes each gap in turn: its coverage holds whatever up to $A$ nodes report, one possible honest group must support a candidate in both phases, and shared calibration questions retain how the nodes' scores vary together. Rules that let the group
change between the two phases can keep answers that no single honest group supports,
such as wood in \cref{fig:problem-setting}. Unlike robust conformal methods that bound
each score separately, fixed-set holds one honest group fixed throughout.
\Cref{sec:method} describes the method and two simpler relaxations of it.

The method brings three practical benefits, which are our contributions.
\begin{enumerate}[leftmargin=1.3em,itemsep=0pt,topsep=1pt,parsep=1pt]
 \item \emph{Reliable answers without trusting any single node.} The answer set contains
 the correct answer with the chosen probability whatever the misbehaving nodes report,
 provided no more of them misbehave than declared. If nodes instead fail at random, the
 guarantee drops only by the chance that too many fail, which an operator can compute and
 advertise in advance (\cref{sec:theory}). Plain averaging offers no such protection: in
 our experiments it misses the target both under constructed attacks and when language
 models are hijacked by injected instructions (\cref{sec:experiments}).
 \item \emph{The smallest answer sets that are safe.} Among methods that see only the
 nodes' reports, none can return fewer answers without risking the loss of an answer the
 honest nodes support (\cref{sec:theory}). Every answer removed is one fewer candidate
 for a person or a downstream system to check.
 \item \emph{Caution at little cost.} Because the method tracks which groups of nodes
 could be honest, declaring a generous budget for the number of bad nodes adds few answers.
 In simulations where Byzantine nodes send the largest score in both phases, raising the
 budget from two to six adds about one fifth of an answer per question, against about one
 full answer for a simpler robust method. Operators can therefore err on the side of
 safety. On real question-answering tasks, including medical exams, the method also
 returns smaller sets than the simpler one; with language-model judges as nodes, its
 advantage again grows with the budget (\cref{sec:experiments}).
\end{enumerate}
{}
\section{Problem formulation}
\label{sec:setup}

\textbf{Frozen federation.}\quad
Each of $K$ nodes scores a finite list of candidate answers with a frozen pipeline, one
whose retrieval and scoring settings are fixed before calibration, and an unknown subset
of the nodes is Byzantine. Freezing makes a node's \emph{clean} score, the one it
computes before any rounding or tampering, mean the same thing during calibration and
prediction. Let $\cX$ be the set of questions and $\cY$ a finite set of $M$ candidate answers. In a
multiple-choice question $x\in\cX$ with $M$ listed options, $\cY=\{1,\ldots,M\}$ and
answer $y$ is the $y$th listed option. Write $[q]=\{1,\ldots,q\}$. Among $K$ authenticated nodes, a fixed
unknown set $\cA\subseteq[K]$ is Byzantine and $\cH=[K]\setminus\cA$ honest. The actual
number of Byzantine nodes, $a=|\cA|$, obeys the declared integer budget $A$
($0\le a\le A<K$), and $h=|\cH|=K-a$. Each honest node freezes its corpus, retriever,
model, prompt, and normalization before calibration. Their composition is a score
$s_i:\cX\times\cY\to\R$, smaller favoring inclusion; in our testbeds it is one minus the
model's probability for the answer given the node's passages. The components' training
data are independent of the calibration split. The honest-mean score is
\begin{equation}
  s_{\cH}(x,y)=\frac1h\sum_{i\in\cH}s_i(x,y).
  \label{eq:honest-mean}
\end{equation}

\textbf{Directional reconstruction errors.}\quad
Known bounds on transmission error let the hub protect coverage even when honest scores
are rounded to a few bits before they are sent. Each phase has its own transmission
\emph{channel} $r\in\{\calib,\qry\}$ (calibration or query). The hub decodes the bits it
receives into a number, the \emph{decoded report}; for honest node $i$'s clean score $u$
it is $Q_{i,r}(u)$. Here $Q_{i,r}$ is a deterministic quantizer \emph{registered} (fixed and
recorded) before calibration. Nonnegative $\rho_r^-$ and $\rho_r^+$ bound how far the
report falls below or exceeds $u$ (the \emph{reconstruction error}):
\begin{equation}
 -\rho_r^-\le Q_{i,r}(u)-u\le\rho_r^+.
 \label{eq:directional-quantizer}
\end{equation}
Quantizers may differ by node and phase; scores need not be bounded
(\cref{sec:protocol-details,sec:bit-audit}).

\textbf{Arbitrary messages, fixed identities.}\quad
Byzantine reports may depend on anything, labels and honest reports included, and may
switch strategy between phases. We assume honest nodes respond and that the hub,
corpora, and honest score functions are trusted and frozen. A node misled by an injected
prompt counts as Byzantine. Out of scope are poisoning of calibration data or of honest
nodes' corpora or models, changing membership, Sybil attacks (forged identities), and
unmodeled distribution shift (\cref{sec:protocol-details}).
\Cref{cor:random-membership} relaxes the cap $a\le A$.

\textbf{The honest-mean oracle and the clean rank event.}\quad
The \emph{honest-mean oracle} is the set the hub would obtain if it knew which
nodes were honest and could read their exact scores. Fix $n\ge1$ and let
$Z_j=(X_j,Y_j)$, $j\in[n+1]$, be clean examples exchangeable conditional on the frozen
system and honest set: their joint law is unchanged by reordering, and i.i.d.\ examples
are a special case. The first $n$ calibrate the rule and the last is the future
question. Write $R_j=s_{\cH}(X_j,Y_j)$, fix $\alpha\in(0,1)$, and set
$k=\lceil(n+1)(1-\alpha)\rceil$. For $k\le n$, let $R_{(k)}$ be the $k$th smallest
calibration score, an \emph{order statistic}. The split-conformal honest-mean
oracle~\citep{lei2018distribution} is $\cC^\circ_\alpha(x)=\{y:s_{\cH}(x,y)\le R_{(k)}\}$;
for $k=n+1$, it and all three rules proposed in \cref{sec:method} return $\cY$.

One event drives every guarantee below. Call
\begin{equation}
 \{R_{n+1}\le R_{(k)}\}
 \quad\text{equivalently}\quad
 \{Y_{n+1}\in\cC^\circ_\alpha(X_{n+1})\}
 \label{eq:clean-rank-event}
\end{equation}
the \emph{clean rank event}: the future question's clean honest-mean score falls at or
below the clean calibration cutoff. Exchangeability makes $R_{n+1}$ equally likely to take
each of the $n+1$ ranks (ties only help), so the event has probability at least
$k/(n+1)\ge1-\alpha$ (\cref{lem:rank}).

\textbf{Problem statement.}\quad
Using only the decoded reports, the budget $A$, and the error bounds, the hub must return
a set $\cC_\alpha(x)$ that meets three requirements. (i) \emph{Coverage}:
$\Pp\{Y_{n+1}\in\cC_\alpha(X_{n+1})\}\ge1-\alpha$, whatever the Byzantine nodes send.
(ii) \emph{Clean-score integrity}: the set keeps every $y$ for which the clean rank event
could hold under some possible honest group (any set of at least $K-A$ nodes) and clean
scores consistent with the reports
(made precise in \cref{thm:fixed-set}). (iii) \emph{Minimal size}: the set is as small
as possible given (i) and (ii). Integrity implies coverage, because the oracle's answers
are always kept. \Cref{app:checklist} lists the assumptions of
\cref{thm:coverage,thm:fixed-set} and what they do \emph{not} require.
\Cref{app:notation} tabulates the notation.
{}
\section{Proposed method: aligned fixed-membership inference}
\label{sec:method}

The method exploits \emph{fixed membership}: the honest set $\cH$ is the same in the
calibration and query phases. \Cref{sec:alignment} specifies the data the hub collects,
\cref{sec:fixed-membership-method} the fixed-set rule, and \cref{sec:deletion-method} two
relaxations that let the honest group change. Each rule rests on an \emph{envelope}, a
bound on the honest mean that holds for every possible honest group.

\subsection{Aligned calibration}
\label{sec:alignment}

For every calibration example $j\in[n]$, the hub sends $X_j$ and its candidates, but not
$Y_j$, to all $K$ nodes; at a query $x$ it does the same. This design is
\emph{alignment}. Let $\widetilde s^{\calib}_{ij}(y)$ and $\widetilde s^{\qry}_i(x,y)$ be
node $i$'s decoded calibration and query scores: $Q_{i,r}\circ s_i$ on channel $r$ if
$i\in\cH$, and arbitrary finite values if $i\in\cA$. The hub keeps
\begin{equation}
 V=(v_{ij})\in\R^{K\times n},\quad v_{ij}=\widetilde s^{\calib}_{ij}(Y_j);\qquad
 w(x,y)\in\R^K,\quad w_i(x,y)=\widetilde s^{\qry}_i(x,y).
 \label{eq:aligned-data}
\end{equation}
For $u\in\R^K$ and $H\subseteq[K]$, write $\bar u_H=|H|^{-1}\sum_{i\in H}u_i$.

Joint information is necessary: the law of the nodes' mean score is determined by their joint
law, but not by their per-node marginal laws (\cref{prop:nonidentification};
cf.~\citealp{makarov1982fixedmarginals}). Column $j$ of $V$ holds all nodes' scores on one
example, so $V$ retains the joint law. For
$U\sim\Unif[0,1]$ and $T(u)=1-|2u-1|$, the pairs $(U,U)$ and $(U,T(U))$ have the same
uniform marginals, yet the cumulative distribution functions (CDFs) of their means differ
by up to $1/4$. At a 75\% target,
per-node calibration is exact for the first pair but covers 99.7\% for the second; aligned
calibration covers 75.1\% for both (\cref{tab:synthetic-diagnostics}). Other routes to the joint law, such as sending cross-moments, need a correctly specified model of how the nodes' scores depend on each other; alignment needs no such model (\cref{rem:nonidentification-scope}).

\subsection{The fixed-set rule}
\label{sec:fixed-membership-method}

The possible honest groups are the \emph{feasible} subsets
\begin{equation}
 \mathfrak H_A=\{H\subseteq[K]:|H|\ge K-A\},\qquad
 c_j(H;V)=\frac1{|H|}\sum_{i\in H}v_{ij},
 \label{eq:feasible-honest-sets}
\end{equation}
where $c_j(H;V)$ is the mean calibration report of group $H$ on example $j$. Groups larger than $K-A$ are included because
$a$ may be smaller than $A$. For $k\le n$, let $\OS_k$ return the $k$th smallest of $n$
values. Group $H$'s calibration cutoff, and the largest such cutoff, are
\begin{equation}
 T_k(H;V)=\OS_k\{c_1(H;V),\ldots,c_n(H;V)\},\qquad
 \tau_{A,k}(V)=\max_{H\in\mathfrak H_A}T_k(H;V).
 \label{eq:joint-threshold}
\end{equation}
With clean reports, $T_k(\cH;V)$ is the oracle cutoff $R_{(k)}$. The padding
$g_\rho=\rho_{\calib}^-+\rho_{\qry}^+$ allows for calibration reports rounded down and
query reports rounded up. Group $H$ \emph{supports} $y$ if
$\bar w_H(x,y)\le T_k(H;V)+g_\rho$, and fixed-set keeps exactly the supported candidates:
\begin{align}
 \Delta_{A,k}(V,w)
 &=\min_{H\in\mathfrak H_A}\bigl\{\bar w_H-T_k(H;V)\bigr\},
 \label{eq:fixed-set-margin}\\
 \cC^{\rm fs}_\alpha(x)
 &=\left\{y:\Delta_{A,k}\{V,w(x,y)\}\le g_\rho\right\}
 =\bigl\{y:\exists H\in\mathfrak H_A,\ \bar w_H(x,y)\le T_k(H;V)+g_\rho\bigr\}.
 \label{eq:fixed-set-rule}
\end{align}

\subsection{Two relaxations}
\label{sec:deletion-method}

The two relaxations, \emph{joint-threshold} and \emph{deletion}, let the query use a
different group from calibration, and deletion also lets each calibration example use
its own group:
\begin{align*}
 y\in\cC^{\rm jt}_\alpha(x)
 &\iff\exists H,H'\in\mathfrak H_A:\
 \bar w_{H'}(x,y)\le T_k(H;V)+g_\rho,\\
 y\in\cC^{\rm del}_\alpha(x)
 &\iff\exists H',H_1,\ldots,H_n\in\mathfrak H_A:\
 \bar w_{H'}(x,y)\le\OS_k\bigl\{c_1(H_1;V),\ldots,c_n(H_n;V)\bigr\}+g_\rho.
\end{align*}
Each relaxation allows more choices of groups than the rule before it, so
$\cC^{\rm fs}_\alpha\subseteq\cC^{\rm jt}_\alpha\subseteq\cC^{\rm del}_\alpha$
(\cref{thm:fixed-set}). Both have closed forms. With $v\in\R^K$ sorted as
$v_{(1)}\le\cdots\le v_{(K)}$, the extreme feasible-subset means are the deletion means
\begin{align}
 \Up_A(v)&=\max_{H\in\mathfrak H_A}\bar v_H=\frac1{K-A}\sum_{\ell=A+1}^{K}v_{(\ell)},
 &
 \Dn_A(v)&=\min_{H\in\mathfrak H_A}\bar v_H=\frac1{K-A}\sum_{\ell=1}^{K-A}v_{(\ell)},
 \label{eq:deletion-means}
\end{align}
which drop the $A$ smallest and the $A$ largest reports, respectively. Minimizing over
$H'$ and maximizing over $H$ gives
\begin{equation}
 \cC^{\rm jt}_\alpha(x)
 =\left\{y:\Dn_A\{w(x,y)\}\le\tau_{A,k}(V)+g_\rho\right\},
 \label{eq:joint-threshold-set}
\end{equation}
and, because $\OS_k$ is nondecreasing in each argument, maximizing over each $H_j$
separately gives, with $v_{\cdot j}$ the $j$th column of $V$,
$\widehat R_j=\Up_A(v_{\cdot j})$, $\widehat R_{(k)}$ the $k$th smallest of
$\widehat R_1,\ldots,\widehat R_n$, and $\widehat s^{\qry}_{\rm del}(x,y)=\Dn_A\{w(x,y)\}$,
\begin{equation}
 \widehat q^{\rm del}_\alpha=\widehat R_{(k)}+g_\rho,\qquad
 \cC^{\rm del}_\alpha(x)=
 \{y:\widehat s^{\qry}_{\rm del}(x,y)\le\widehat q^{\rm del}_\alpha\}.
 \label{eq:deletion-set}
\end{equation}
Joint-threshold thus compares every query with one scalar threshold $\tau_{A,k}(V)$
computed once from calibration.

\textbf{Worked example.}\quad
Fixed membership removes wood from the introductory question's answer set (\cref{fig:problem-setting}), although both relaxations
keep it. Take $K=16$, $A=2$, $n=9$, $\alpha=0.1$, $k=9$, and no
rounding. Nodes 1--2 report $4/5$ on every calibration example and $9/10$ for every
option; nodes 3--16 report $0$ on calibration and $0,\ 1/10,\ 1/5,\ 3/10$ for copper,
wood, glass, and rubber. Every column of $V$ is the same, so
$\tau_{A,k}=\widehat R_{(k)}=4/35$ (the two $4/5$ reports averaged with twelve zeros).
For wood, $\Dn_A(w)=1/10\le4/35$, so both relaxations keep it. But every node's wood
report exceeds its own calibration report by $1/10$, so
$\bar w_H-T_k(H;V)=1/10>0$ for every $H$, and fixed-set drops wood. With nodes 3--16
honest, fixed-set and the oracle return $\{\text{copper}\}$ and both relaxations
$\{\text{copper},\text{wood}\}$: the relaxations' largest cutoff and smallest query mean
come from different groups, and no single group supports wood.

\textbf{Which rule to deploy.}\quad
Tighter sets cost more hub computation but no extra communication. Fixed-set enumerates
$|\mathfrak H_A|=\sum_{d\le A}\binom{K}{d}$ subsets: 137 at the default $K=16$, $A=2$;
697 at $A=3$; $43{,}745$ at $K=64$, $A=3$. Use fixed-set where that is affordable, as
here (\cref{sec:experiments}); joint-threshold when a single reusable threshold is
preferable, since it enumerates only during calibration; and deletion, which needs no
enumeration, at larger $A$ (\cref{sec:computation-details,app:exact-grid-decisions}).
{}
\section{Theoretical guarantees: coverage, minimality, and set size}
\label{sec:theory}

Our guarantees answer the three questions a user of the rules would ask. Does the set
still contain the correct answer whatever Byzantine nodes send
(\cref{thm:coverage,cor:random-membership})? Could any safe rule return a smaller set
(\cref{thm:fixed-set})? How many extra answers does robustness cost
(\cref{thm:set-sandwich})? All three concern one property: retaining every candidate the
honest-mean oracle would include. A \emph{transcript} is the full collection of decoded
reports, $V$ and every $w(x,y)$. It is \emph{admissible} if at most $A$ nodes are
Byzantine and every honest report obeys \cref{eq:directional-quantizer}. A statement is
\emph{pathwise} if it holds for every calibration sample, query, and admissible
transcript, before any averaging.

\subsection{Coverage under arbitrary reports}

The deletion means bracket the honest mean with no distributional model for
Byzantine reports: being the extreme feasible-subset means (\cref{sec:deletion-method}),
they satisfy $\Dn_A(v)\le\bar v_{\cH}\le\Up_A(v)$ whenever at most $A$ entries of $v$ come from nodes that are
Byzantine. Padding by $\rho_r^-$ and $\rho_r^+$ keeps the bracket when honest reports
are rounded, and neither bound can be tightened without more information
(\cref{thm:extremal-envelopes}).

\begin{theorem}[Finite-sample validity of the deletion rule]
\label{thm:coverage}
Assume $Z_1,\ldots,Z_n,Z_{n+1}$ are exchangeable conditional on the frozen
honest score functions and fixed honest set.  Under $a\le A<K$, suppose the
directional reconstruction bounds in \cref{eq:directional-quantizer} hold.
Then, uniformly over all finite Byzantine calibration and query reports,
$\Pp\{Y_{n+1}\in\cC^{\rm del}_\alpha(X_{n+1})\}\ge k/(n+1)\ge1-\alpha$.
Validity needs no bounded scores, continuity, density, cross-node independence, or
attack model.
\end{theorem}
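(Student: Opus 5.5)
The plan is to show pathwise that the clean rank event \cref{eq:clean-rank-event} implies $Y_{n+1}\in\cC^{\rm del}_\alpha(X_{n+1})$ for every admissible transcript. Since that event does not depend on any report, its probability bound $k/(n+1)\ge1-\alpha$ from \cref{lem:rank} then carries over, uniformly over Byzantine behaviour. The Byzantine reports may depend on labels and honest reports, but this is harmless: the inclusion holds deterministically on the event.

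\emph{Step 1 (calibration envelope).} Fix $j\in[n]$. For $i\in\cH$, \cref{eq:directional-quantizer} gives $v_{ij}\ge s_i(X_j,Y_j)-\rho_{\calib}^-$, so $\bar v_{\cdot j,\cH}\ge R_j-\rho_{\calib}^-$. Because $|\cH|=K-a\ge K-A$, we have $\cH\in\mathfrak H_A$, and \cref{eq:deletion-means} gives $\widehat R_j=\Up_A(v_{\cdot j})\ge\bar v_{\cdot j,\cH}\ge R_j-\rho_{\calib}^-$. The order-statistic map $\OS_k$ is nondecreasing in each argument and commutes with subtracting a constant. Hence $\widehat R_{(k)}\ge R_{(k)}-\rho_{\calib}^-$.

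\emph{Step 2 (query envelope).} For the query, apply the same feasibility of $\cH$ with $\Dn_A$ as a minimum over $\mathfrak H_A$. Together with the upper quantization bound, this gives $\widehat s^{\qry}_{\rm del}(X_{n+1},Y_{n+1})=\Dn_A\{w\}\le\bar w_{\cH}\le R_{n+1}+\rho_{\qry}^+$.

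\emph{Step 3 (combine).} On the clean rank event $R_{n+1}\le R_{(k)}$, chaining the two envelopes gives
$\widehat s^{\qry}_{\rm del}\le R_{(k)}+\rho_{\qry}^+\le\widehat R_{(k)}+\rho_{\calib}^-+\rho_{\qry}^+=\widehat q^{\rm del}_\alpha$.
So $Y_{n+1}\in\cC^{\rm del}_\alpha(X_{n+1})$. The case $k=n+1$ is trivial, since the set is then $\cY$.

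The only delicate point is the first step. There I must justify that the honest-group mean is dominated by $\Up_A$ regardless of $a<A$. This needs the closed form \cref{eq:deletion-means} for the max over all $|H|\ge K-A$, not just $|H|=K-A$: dropping a smallest remaining element never lowers an average, so enlarging $H$ cannot exceed the top-$(K-A)$ mean. The probability step rests entirely on \cref{lem:rank}, which is where exchangeability is used. No continuity or boundedness is needed, because ties only enlarge the event and all inequalities are deterministic.
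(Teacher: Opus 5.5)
Your proposal is correct and follows the paper's proof essentially step for step. The paper likewise shows pathwise that the clean rank event implies inclusion: the calibration and query deletion envelopes (which the paper takes from \cref{thm:extremal-envelopes}, and which you re-derive from $\cH\in\mathfrak H_A$), order-statistic monotonicity, and the chain $\widehat s^{\qry}_{\rm del}\le R_{(k)}+\rho_{\qry}^+\le\widehat q^{\rm del}_\alpha$, followed by \cref{lem:rank}.
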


Thus arbitrary reporting in both phases cannot push coverage below the chosen level. The
padded envelopes bracket the clean means on both channels, and order statistics preserve
the inequality. Exchangeability is needed only for the clean rank event, as in certified
robust conformal inference (\citealp{gendler2022adversarial,yan2024rscpplus}; \cref{app:proof-coverage}).

When nodes fail at random, coverage degrades gracefully. A \emph{failure law}, a
probability distribution over which nodes are Byzantine, can replace the cap $a\le A$
without changing the rule (the proof also uses \cref{thm:fixed-set}).
\begin{corollary}[Coverage under a random failure law]
\label{cor:random-membership}
Suppose the Byzantine set $\cA$ is drawn once, before calibration, from any probability law under
which the clean examples are exchangeable conditional on each realized value of $\cA$
(for instance, independently of the data). Suppose also that every node's reports obey
\cref{eq:directional-quantizer} whenever that node is honest, and that $A<K$ is fixed in
advance. Let
$\varepsilon_A=\Pp\{|\cA|>A\}$. Then for each of the three proposed rules, uniformly over
finite Byzantine reports, which may depend on $\cA$,
\begin{equation}
 \Pp\{Y_{n+1}\in\cC_\alpha(X_{n+1})\}\ \ge\ \frac{k}{n+1}\,(1-\varepsilon_A)
 \ \ge\ 1-\alpha-\varepsilon_A.
 \label{eq:random-membership-coverage}
\end{equation}
\end{corollary}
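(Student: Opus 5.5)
Condition on the realized Byzantine set and split on whether the declared budget holds. Write $E_A=\{|\cA|\le A\}$. By the law of total probability,
\[
\Pp\{Y_{n+1}\in\cC_\alpha(X_{n+1})\}\ \ge\ \sum_{S\subseteq[K]:\,|S|\le A}\Pp\{\cA=S\}\,\Pp\{Y_{n+1}\in\cC_\alpha(X_{n+1})\mid\cA=S\}.
\]
We simply discard the contribution from $\{|\cA|>A\}$, using only that it is nonnegative. Since the rules never consult $\cA$, nothing about the rule changes across these terms; only the data-generating situation does. The task therefore reduces to a lower bound of $k/(n+1)$ on each conditional coverage with $|S|\le A$. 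Multiplying by $\Pp(E_A)=1-\varepsilon_A$ then gives the first inequality in \cref{eq:random-membership-coverage}.

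Fix $S$ with $|S|\le A$ and condition on $\cA=S$. By hypothesis the clean examples are exchangeable given this realization. The honest set $\cH=[K]\setminus S$ is fixed and every honest node's reports obey \cref{eq:directional-quantizer}. The Byzantine reports may depend on $S$, but within the conditional world they are just some finite reports, and \cref{thm:coverage} holds uniformly over such reports. So we are exactly in the setting of \cref{thm:coverage} with $a=|S|\le A<K$.

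For the deletion rule, \cref{thm:coverage} gives conditional coverage at least $k/(n+1)$ directly. For fixed-set and joint-threshold, we invoke the integrity part of \cref{thm:fixed-set}. On every admissible transcript, $\cC^{\rm fs}_\alpha$ contains the honest-mean oracle set $\cC^\circ_\alpha$. The witness is the group $H=\cH\in\mathfrak H_A$: the quantizer bounds give $\bar w_{\cH}\le s_{\cH}+\rho^+_{\qry}$ and $T_k(\cH;V)\ge R_{(k)}-\rho^-_{\calib}$, since $\OS_k$ is monotone. Hence the clean rank event \cref{eq:clean-rank-event} implies that $\cH$ supports $Y_{n+1}$. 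Because of the nesting $\cC^{\rm fs}_\alpha\subseteq\cC^{\rm jt}_\alpha\subseteq\cC^{\rm del}_\alpha$, the same holds for the other two rules. Then \cref{lem:rank}, applied conditionally on $\cA=S$, bounds the probability of the clean rank event below by $k/(n+1)$.

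The second inequality is arithmetic. Since $k/(n+1)\ge1-\alpha$,
\[
\frac{k}{n+1}(1-\varepsilon_A)\ \ge\ (1-\alpha)(1-\varepsilon_A)\ =\ 1-\alpha-\varepsilon_A+\alpha\varepsilon_A\ \ge\ 1-\alpha-\varepsilon_A.
\]

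The main obstacle is justifying the conditioning step rigorously. The randomness in Byzantine reports, their dependence on $S$ and on the data, and the requirement that exchangeability survive conditioning must all be handled so that \cref{lem:rank} applies within each conditional world. I would make this precise by noting that the pathwise inclusion of the oracle set holds deterministically on every admissible transcript. The only probabilistic input is then the clean rank event, whose conditional law depends only on the clean data given $\cA=S$, which are exchangeable by assumption. The Byzantine behaviour never enters the probability computation at all.
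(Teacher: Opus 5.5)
Your proposal is correct and follows the paper's proof essentially step for step. You condition on each realized $\cA$ with $|\cA|\le A$, invoke \cref{thm:coverage} and the oracle containment of \cref{thm:fixed-set} in that conditional model, then average over $\cA$ and discard $\{|\cA|>A\}$. The only omission is the trivial case $k=n+1$, where all three rules return $\cY$ and coverage is one; the paper treats it separately because \cref{lem:rank} and $T_k$ require $k\le n$.
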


The guarantee weakens only when more nodes fail than the budget declares, and by at most
that event's probability. For independent failures, $\varepsilon_A$ is the Poisson--binomial
probability that more than $A$ nodes fail (\cref{app:random-membership}). It is
nondecreasing in every node's failure probability, so upper bounds on those give a valid
coverage \emph{floor}, the right side of \cref{eq:random-membership-coverage}, to
advertise. If the bounds are wrong, the corollary still holds at the true
$\varepsilon_A$.

\subsection{Fixed-set is the smallest rule with clean-score integrity}

Fixed-set improves on joint-threshold only by pairing each group's cutoff with that
same group's query mean: joint-threshold already uses the sharpest scalar bound on the
clean calibration order statistic, $\tau_{A,k}(V)+\rho_{\calib}^-$
(\cref{lem:sharp-joint-envelope}). A \emph{report-only} rule uses only the reports, the declared budget, and the error bounds.

\begin{theorem}[Rule hierarchy and minimality]
\label{thm:fixed-set}
Assume the conditions of \cref{thm:coverage}.  For $k\le n$:
\begin{enumerate}[itemsep=1pt,topsep=2pt]
 \item For every query and admissible transcript,
 \begin{equation}
   \cC^\circ_\alpha(x)
   \subseteq\cC^{\rm fs}_\alpha(x)
   \subseteq\cC^{\rm jt}_\alpha(x)
   \subseteq\cC^{\rm del}_\alpha(x),
   \label{eq:fixed-set-hierarchy}
 \end{equation}
 and each proposed set has coverage at least $k/(n+1)\ge1-\alpha$.
 \item Fix decoded reports $V$ and $w(x,y)$. By \cref{eq:directional-quantizer}, an honest
 clean value lies at most $\rho_r^+$ below and $\rho_r^-$ above its report on channel $r$.
 An \emph{entrywise compatible completion} is a set $H\in\mathfrak H_A$ with clean values
 in these intervals at every calibration and query entry of its nodes, and nothing else
 imposed. This is the \emph{interval model}: it assumes no common score function or
 cross-example law. The clean rank event holds for $y$ in a completion if $H$'s mean clean
 query value for $y$ is at most the $k$th smallest of $H$'s $n$ mean clean calibration
 values. Then
 \begin{equation}
  y\in\cC^{\rm fs}_\alpha(x)
  \iff
  \text{the clean rank event holds for $y$ in some such completion},
  \label{eq:fixed-set-identified}
 \end{equation}
 so $\cC^{\rm fs}$ is pointwise smallest among report-only rules that must keep every
 such $y$.
\end{enumerate}

\end{theorem}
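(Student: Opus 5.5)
The plan has four steps: prove the chain of inclusions in part 1 directly from the definitions, obtain coverage from the first inclusion, prove the characterization \cref{eq:fixed-set-identified} of part 2 by a monotonicity argument over completions, and read off minimality from that characterization.

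\textbf{Oracle inclusion.} For $\cC^\circ_\alpha\subseteq\cC^{\rm fs}_\alpha$, take $H=\cH$, which belongs to $\mathfrak H_A$ because $a\le A$. By \cref{eq:directional-quantizer}, every honest calibration report satisfies $v_{ij}\ge s_i(X_j,Y_j)-\rho_{\calib}^-$. Averaging over $\cH$ gives $c_j(\cH;V)\ge R_j-\rho_{\calib}^-$. Since $\OS_k$ is nondecreasing in each argument, $T_k(\cH;V)\ge R_{(k)}-\rho_{\calib}^-$. Likewise $\bar w_{\cH}(x,y)\le s_{\cH}(x,y)+\rho_{\qry}^+$. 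Hence, if $s_{\cH}(x,y)\le R_{(k)}$, then $\bar w_{\cH}(x,y)\le T_k(\cH;V)+g_\rho$, so $\cH$ supports $y$.

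\textbf{Relaxation inclusions.} For $\cC^{\rm fs}_\alpha\subseteq\cC^{\rm jt}_\alpha$, a witness $H$ for fixed-set gives the witness $H'=H$ for joint-threshold. For $\cC^{\rm jt}_\alpha\subseteq\cC^{\rm del}_\alpha$, take $H_1=\cdots=H_n=H$; then $\OS_k\{c_j(H_j;V)\}=T_k(H;V)$.

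\textbf{Coverage.} Every proposed set contains $\cC^\circ_\alpha(X_{n+1})$ pathwise, for every admissible transcript. So coverage of all three sets follows from the probability of the clean rank event \cref{eq:clean-rank-event}, which \cref{lem:rank} bounds below by $k/(n+1)\ge1-\alpha$. This argument does not depend on what the Byzantine nodes report.

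\textbf{Characterization \cref{eq:fixed-set-identified}.} Fix $H\in\mathfrak H_A$. In the interval model, a clean calibration value $u_{ij}$ ranges over $[v_{ij}-\rho_{\calib}^+,\,v_{ij}+\rho_{\calib}^-]$ and a clean query value $u_i$ over $[w_i-\rho_{\qry}^+,\,w_i+\rho_{\qry}^-]$. Nothing links these entries across examples or across nodes.

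For "$\Leftarrow$": suppose some completion on $H$ satisfies the rank event. Then $\bar w_H\le \bar u_H+\rho_{\qry}^+$, and $T_k(H;V)\ge \OS_k\{\bar u_{H,j}\}-\rho_{\calib}^-$ by monotonicity of $\OS_k$. Chaining these with the rank event gives $\bar w_H\le T_k(H;V)+g_\rho$.

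For "$\Rightarrow$": if $H$ supports $y$, build the extremal completion. Set every query entry to $w_i-\rho_{\qry}^+$, so its mean is $\bar w_H-\rho_{\qry}^+$. Set every calibration entry to $v_{ij}+\rho_{\calib}^-$, so each column mean is $c_j(H;V)+\rho_{\calib}^-$ and the $k$th order statistic is exactly $T_k(H;V)+\rho_{\calib}^-$. The support inequality is then exactly the rank event for this completion.

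\textbf{Minimality.} Minimality follows directly from the characterization. Any report-only rule required to keep every $y$ that has such a completion must contain $\cC^{\rm fs}_\alpha(x)$. This holds pointwise, because the rule cannot tell the realized transcript apart from the one generated by the extremal completion.

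The main obstacle is making this minimality claim precise. One has to argue that the extremal completion is genuinely admissible, meaning that $H$ can be the honest set with nodes outside $H$ Byzantine, and that the decoded reports are reproducible as quantized clean values. The interval model is what sidesteps the second issue: it imposes no common score function and no requirement that the quantizer actually hit the report. So I would state the identification relative to that model, and note that any additional structure could only shrink the set of compatible completions.
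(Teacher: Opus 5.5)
Your proposal is correct and follows the paper's proof essentially step for step: oracle containment via $H=\cH$ with $T_k(\cH;V)\ge R_{(k)}-\rho_{\calib}^-$ and $\bar w_{\cH}\le s_{\cH}+\rho_{\qry}^+$, coverage from \cref{lem:rank}, the characterization via the same extremal completion (calibration values $v_{ij}+\rho_{\calib}^-$, query values $w_i-\rho_{\qry}^+$), and minimality stated relative to the interval model. The only difference is cosmetic: you get $\cC^{\rm fs}_\alpha\subseteq\cC^{\rm jt}_\alpha\subseteq\cC^{\rm del}_\alpha$ by reusing the fixed-set witness $H$ in the existential definitions, whereas the paper goes through the closed forms $\bar w_H\ge\Dn_A(w)$ and $T_k(H;V)\le\tau_{A,k}(V)\le\widehat R_{(k)}$; both arguments are valid.
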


Fixed membership therefore removes every candidate that no single honest group can
explain, and nothing more. A retained candidate must admit one coherent explanation: one possible honest group whose
clean scores, within their error intervals, place it under that group's cutoff. When
none exists, fixed-set drops the candidate even when separate optimizations would keep it,
at no cost to oracle inclusion or coverage. Part 2 shows that this test is exact, with
$g_\rho$ tight, so no report-only rule with the clean-score integrity of
\cref{sec:setup} can discard more. This minimality is relative to the interval
model: a hub modeling how honest scores relate across examples could rule out
completions. Each inclusion in \cref{eq:fixed-set-hierarchy} can be strict (\cref{app:proof-fixed-set}).

\subsection{Report spread limits the extra answers}

\begin{theorem}[Pathwise proposed-set sandwich]
\label{thm:set-sandwich}
Fix an honest set $\cH$ with $a=K-|\cH|\le A<K$, assume \cref{eq:directional-quantizer},
and suppose all decoded reports on channel $r\in\{\calib,\qry\}$ lie in a common
interval of width $D_r$. With the membership term $\mu_r=\min\{1,A/(K-A)\}D_r$ and
$W_{\rm del}=\sum_{r}(\mu_r+\rho_r^-+\rho_r^+)$, for $k\le n$, \cref{eq:fixed-set-hierarchy}
extends pathwise to $\cC^{\rm del}_\alpha(x)\subseteq\{y:s_{\cH}(x,y)\le R_{(k)}+W_{\rm del}\}$.
\end{theorem}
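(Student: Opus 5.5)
The plan is to prove the outer inclusion $\cC^{\rm del}_\alpha(x)\subseteq\{y:s_{\cH}(x,y)\le R_{(k)}+W_{\rm del}\}$ pathwise. The inner inclusions of the chain then come directly from \cref{thm:fixed-set}, part 1. The argument has two halves. First, the deletion means of a report vector differ from the mean of the honest reports by at most $\mu_r$. Second, the honest reports differ from the clean values by at most $\rho_r^\pm$. Order statistics then carry the calibration bound through to $\widehat R_{(k)}$.

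\emph{Step 1 (membership lemma).} Let $v\in\R^K$ have all entries in an interval of width $D$, and let $G,H\in\mathfrak H_A$ with $H=\cH$ and $|G|=K-A$. I will show $|\bar v_G-\bar v_{\cH}|\le\min\{1,A/(K-A)\}D$. Write the difference as $\sum_i(p_i-q_i)v_i$, where $p$ is uniform on $G$ and $q$ is uniform on $\cH$. Since $\sum_i(p_i-q_i)=0$, the difference is at most $\mathrm{TV}(p,q)\,D$. Because $h=K-a\ge K-A=|G|$, the overlap mass is $\sum_{i\in G\cap\cH}\min\{p_i,q_i\}=|G\cap\cH|/h$. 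Also $|G\cap\cH|\ge K-A-a$, since $G$ contains at most $a$ Byzantine nodes. Hence
\[
\mathrm{TV}(p,q)\le1-\frac{K-A-a}{K-a}=\frac{A}{K-a}\le\frac{A}{K-A},
\]
and trivially $\mathrm{TV}\le1$. Applying this to the maximizing and the minimizing subsets in \cref{eq:deletion-means} gives
\[
\Up_A(v)\le\bar v_{\cH}+\mu,\qquad \Dn_A(v)\ge\bar v_{\cH}-\mu .
\]
This step is where the constant $\min\{1,A/(K-A)\}$ must come out exactly, so it is the main obstacle. The delicate points are (a) using $h\ge K-A$ to fix which side of each $\min\{p_i,q_i\}$ is active, and (b) using the common-interval hypothesis for Byzantine and honest entries alike.

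\emph{Step 2 (channel errors).} By \cref{eq:directional-quantizer}, each honest report satisfies $v_{ij}\le s_i(X_j,Y_j)+\rho_{\calib}^+$ and $w_i(x,y)\ge s_i(x,y)-\rho_{\qry}^-$. Averaging over $\cH$ gives
\[
\bar v_{\cdot j,\cH}\le R_j+\rho_{\calib}^+,\qquad \bar w_{\cH}(x,y)\ge s_{\cH}(x,y)-\rho_{\qry}^- .
\]
Combining with Step 1 yields
\[
\widehat R_j\le R_j+\mu_{\calib}+\rho_{\calib}^+\quad\text{for every } j,
\qquad
\Dn_A\{w(x,y)\}\ge s_{\cH}(x,y)-\mu_{\qry}-\rho_{\qry}^- .
\]
The $k$th order statistic is nondecreasing in each argument, so $\widehat R_{(k)}\le R_{(k)}+\mu_{\calib}+\rho_{\calib}^+$.

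\emph{Step 3 (assemble).} Suppose $y\in\cC^{\rm del}_\alpha(x)$, that is, $\Dn_A\{w(x,y)\}\le\widehat R_{(k)}+g_\rho$ with $g_\rho=\rho_{\calib}^-+\rho_{\qry}^+$. Then
\[
s_{\cH}(x,y)\le\Dn_A\{w(x,y)\}+\mu_{\qry}+\rho_{\qry}^-\le R_{(k)}+\mu_{\calib}+\rho_{\calib}^++\rho_{\calib}^-+\mu_{\qry}+\rho_{\qry}^++\rho_{\qry}^- .
\]
The right-hand side equals $R_{(k)}+W_{\rm del}$. No probabilistic step is used, so the inclusion holds for every sample, query and admissible transcript.
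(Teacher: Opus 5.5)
Your proof is correct, and its top-level assembly matches the paper's proof. Both bound $\Up_A$ of each calibration column from above and $\Dn_A$ of the query vector from below by the honest decoded mean plus $\mu_r$, pass to clean scores with $\rho_{\calib}^+$ and $\rho_{\qry}^-$, carry the calibration bound through $\OS_k$ by monotonicity, and combine with the deletion threshold $\widehat R_{(k)}+g_\rho$.

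The one genuine difference is how you prove the membership bound in Step~1. The paper imports it from \cref{lem:bounded-deletion}. That proof rescales the report interval to $[0,1]$ and notes that $v\mapsto\Up_A(v)-\bar v_{\cH}$ is a maximum of linear functions, so its maximum over the cube is attained at a binary vertex. It then runs a case analysis on the numbers $u$ of honest and $b$ of Byzantine ones, comparing against the boundary value $u=\bar h-b$. You instead write $\bar v_G-\bar v_{\cH}$ as the sum of $v$ against the signed weights $p-q$ and bound it by $\mathrm{TV}(p,q)D$. You get the overlap exactly because $h\ge K-A$ forces $\min\{p_i,q_i\}=1/h$ on $G\cap\cH$.

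Your route is shorter and handles $\Up_A$ and $\Dn_A$ in one stroke. It needs no rescaling or vertex reduction, and it shows the intermediate constant $A/(K-a)=A/h$ directly before weakening it to $A/(K-A)$. The paper's lemma buys two things the sandwich itself does not use: explicit arrays showing the coefficient $\min\{1,A/h\}$ is worst-case sharp, and the complementary one-sided bounds (the $-\rho^-$ and $+\rho^+$ sides) used elsewhere in the paper.
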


Robustness therefore costs only candidates whose clean scores lie near the oracle cutoff:
every extra candidate has a clean score within $W_{\rm del}$ above it, and no score
smoothness is needed.
The width combines uncertainty about which nodes are honest with reconstruction error,
so extra bits shrink the second part but never the first. How many extra candidates
appear depends on how many candidate scores lie near the cutoff
(\cref{cor:size-modulus,rem:size-density}).

The direction of an attack decides how many extra answers remain. Consider reports in a
range $[\rlo,\rhi]$ ($[0,1]$ here). For any number of corrupt
nodes, \emph{high--low} ($\rhi$ at calibration, $\rlo$ at query) yields the largest
proposed sets and \emph{low--high} ($\rlo$ then $\rhi$) the smallest, so they test
worst-case inflation and coverage. High--low and low--high are switching attacks: they change direction between phases. Low--high targets coverage, since its small calibration reports lower the cutoff and its large query reports raise every candidate's score. \emph{Stable maximum} sends $\rhi$ in both phases.
With $a=A$, low--high cancels the corrupt contribution: the honest group then gives both
the largest calibration threshold and the smallest query mean, so all three rules
coincide. Raising $A$ with reports fixed
never shrinks a set (\cref{app:endpoint-attacks}).

Two simpler baselines are also valid, but neither returns smaller sets than fixed-set
while keeping clean-score integrity. A \emph{guarded symmetric rule} discards the
$m\ge A$ largest and $m$ smallest reports, averages the rest, and adds a guard wide
enough to restore validity. With a guard computed from the full range of values a channel
can carry, the three proposed sets are contained in it pathwise (\cref{prop:dominance}).
The partial-conjunction $p$-value merger ranks each node's query score among that node's
own calibration scores (a conformal $p$-value). It then discards the $A$ smallest
$p$-values and thresholds a scaled order statistic of the rest. It is valid when every
honest node's $p$-value is, but it never estimates the honest mean, so it targets a
different event (\cref{prop:pmerge}). It can therefore return smaller sets than fixed-set
(\cref{sec:experiments}) without contradicting \cref{thm:fixed-set}.
{}
\section{Simulations and experiments}
\label{sec:experiments}

We test the rules in three settings of increasing realism. The first uses simulated
scores, where every factor can be varied. The second uses four multiple-choice
question-answering benchmarks, three of them
medical~\citep{mihaylov2018openbookqa,jin2020medqa,pal2022medmcqa,hendrycks2021mmlu},
scored by a small language model~\citep[GPT-2;][]{radford2019gpt2} over 16 document
shards. The third uses 16 copies of a stronger
model~\citep[Llama-3.2-3B;][]{grattafiori2024llama3} as nodes, some hijacked by prompt
injection. Unless stated otherwise, there are 16
nodes, at most two Byzantine, and the target coverage is 90\%. Every method sees the same
data and the same corrupted nodes, whose identities are hidden. We compare the three
proposed rules with the honest-mean oracle, two other rules with guarantees, and seven
common robust summaries without one, such as the median and the plain average. The full
design, all comparators, and every number are in
\cref{app:experiments,app:results,app:judge-panel}.

Fixed-set is closest to the oracle, and its advantage grows with the declared budget.
Call the answers a rule keeps beyond the honest-mean oracle its \emph{excess}. In
simulations where the corrupt nodes send the largest possible score in both phases,
fixed-set removes about three quarters of deletion's excess. Declaring a budget of six
instead of two costs deletion about one extra answer per question, but fixed-set only
about one fifth (\cref{fig:synthetic-sweeps}a). A larger budget admits more possible honest groups; deletion can take its cutoff from one group and its query score from another, while fixed-set must find a single group that supports the candidate. A cautious budget is therefore cheap.
Sending scores with eight bits instead of exact values costs almost nothing
(\cref{fig:synthetic-sweeps}b). Extra bits shrink only the rounding part of the width in \cref{thm:set-sandwich}, not the part due to not knowing which nodes are honest, so set sizes level off.

\begin{figure}[tb]
\centering
\includegraphics[width=\linewidth]{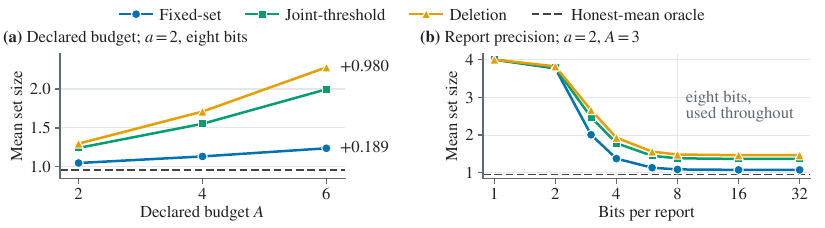}
\caption{Fixed-set stays close to the oracle as the declared budget grows (a), and
eight-bit reports lose almost nothing (b). Simulated scores; corrupt nodes send the
largest score in both phases; dashed line: honest-mean oracle; labels in (a): rise from
$A=2$ to 6. Details in \cref{tab:synthetic-budget,tab:synthetic-resolution}.}
\label{fig:synthetic-sweeps}
\end{figure}

Coverage holds whenever the assumptions do. In every simulated setting that meets the assumptions, the proposed rules reach the
target coverage (\cref{tab:synthetic-sensitivity}). They also reach it when nodes fail
at random, as in the language-model experiment below, where some runs have more
failures than declared (\cref{tab:judge-coverage}). \Cref{thm:coverage,thm:fixed-set} guarantee this within the budget; with random failures, \cref{cor:random-membership} guarantees only a lower floor, which the rules exceed.

Summaries without a guarantee can fail. When corrupt nodes report low scores during calibration and high scores at the query,
five of the seven common summaries miss the 90\% target, one by more than 30 points,
while the proposed rules do not (\cref{fig:unguarded-coverage}a). The plain average fails
the same way on the real benchmarks (\cref{tab:real-comparators}). An attack that behaves the same in both phases cannot cause this, because a summary's calibration and query scores then stay exchangeable (\cref{lem:phase-blind}). The proposed rules hold because their envelopes bound the honest mean in each phase, whatever the reports (\cref{thm:coverage}). Realistic failures are
milder, but hijacked language models that mislead only the calibration step still pull
the plain average below the target (\cref{fig:unguarded-coverage}b). They lower the high calibration scores that set its cutoff, so it drops the correct answer more often (\cref{app:judge-panel}).

\begin{figure}[tb]
\centering
\includegraphics[width=\linewidth]{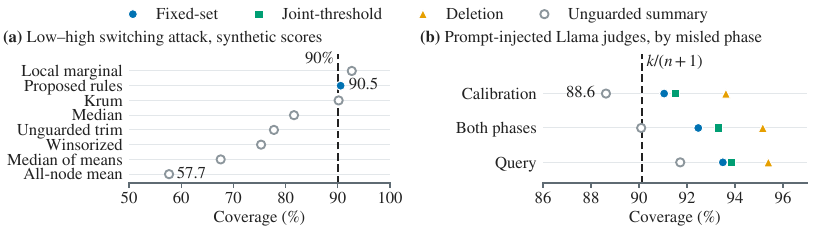}
\caption{Summaries without a guarantee can miss the coverage target where the proposed
rules hold. (a) Simulated scores; corrupt nodes report low scores at calibration and high
scores at the query. (b) Llama-3.2-3B judges, some hijacked by prompt injection, by the
phase they mislead. Comparators are defined in \cref{app:baselines}.}
\label{fig:unguarded-coverage}
\end{figure}

\begin{figure}[tb]
\centering
\includegraphics[width=\linewidth]{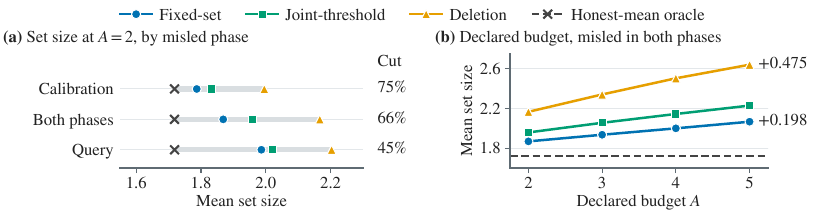}
\caption{With 16 Llama-3.2-3B judges, some hijacked by prompt injection, fixed-set removes
45--75\% of deletion's excess over the honest-mean oracle (a), and its advantage grows
with the declared budget (b). (a) Rows: phases in which judges are misled; bars span the
oracle to deletion, and Cut is the share of that excess fixed-set removes. (b) Judges
misled in both phases (\cref{tab:judge-sizes}).}
\label{fig:judge-panel}
\end{figure}

The savings carry over to real scorers. With the small GPT-2 scorer, fixed-set covers at least 90\% in every real-data condition within the declared budget
and, under stable maximum, removes 34--45\% of deletion's excess on every benchmark (\cref{tab:real-main}).
Because this scorer is close to chance, these numbers measure the cost of robustness,
not answer quality (\cref{app:score-quality}). With the stronger Llama judges, all
proposed rules still reach the target, and fixed-set removes 45--75\% of deletion's
excess. Its saving roughly doubles as the declared budget grows from two to five
(\cref{fig:judge-panel}), as in the simulations. Like wood in \cref{sec:deletion-method}, the removed answers are ones that no possible honest group supports. A tempting shortcut, dropping the two
judges that erred most on held-out questions, shrinks the sets further. But in two
thirds of the runs it also drops answers the honest judges support, which fixed-set never
does (\cref{tab:judge-trust}).

The comparisons have limits, and fixed-set is cheap to run. One rule with a guarantee, the $p$-value merger, can return smaller sets under some
attacks, because it protects a different target (\cref{prop:pmerge};
\cref{tab:synthetic-connected}). After calibration, fixed-set answers a batch of
667 questions in about 6\,ms of hub computation (\cref{tab:computation-benchmark}).
{}
\section{Discussion and conclusion}
\label{sec:discussion}

Robustness becomes affordable once the hub remembers that the same nodes are honest throughout:
fixed-set inference keeps every candidate for which the clean rank event can
still hold under one possible honest group and, in the interval model, no
others. Its payoff grows with the declared budget. Natural next steps are faster subset
search for large federations, open-ended generation, and partial alignment, in which
nodes share only some calibration questions.
\label{sec:iclr-main-end}
{}

\clearpage
\section*{Reproducibility statement}
Proofs and the full experimental design are in the appendix (see the roadmap).
The supporting tables and figures reuse
verified numerical assets from archived outputs; \cref{app:results}
describes reaggregation, implementation checks, and the disclosed correction
to constant-column uncertainty calculations. The exact finite-alphabet verifier
in the supplement passes all 15 of its checks. The accompanying
supplementary archive contains implementation code, dependency
locks, verified numerical summaries, and a reproduction \texttt{README.md}.
It supports local verification and regeneration of the reported numerical
assets; bulk model scores and original execution archives are not included,
and the README lists what the archive omits. The archive also holds the
registered manifests and every generated table and figure. The code
accompanies the submission and will be archived with a digital object
identifier before publication.

\section*{Use of generative AI and AI-assisted technologies}
We used generative AI tools (ChatGPT, Google Gemini, and Claude) for language
editing and code formatting support. All data, results, and mathematical
derivations are the authors' own work.

\clearpage
\appendix
\crefalias{section}{appendix}
\crefalias{subsection}{appendix}
\raggedbottom
\section*{Appendix roadmap}
\Crefrange{sec:related}{sec:bit-audit} give related work, the protocol, and
additional formal results. \Crefrange{app:notation}{app:experiments} give
notation, proofs, implementation details, and the experimental design. \Cref{app:results}
gives the supporting empirical results. \Cref{app:checklist} lists every
assumption the coverage and minimality theorems use. \Cref{app:judge-panel}
covers restricted subset families and the \texttt{Llama-3.2-3B-Instruct} judge
panel. Throughout, \emph{registered} means fixed and recorded before any evaluation data
were scored, and the \emph{operational} configuration is the default of
\cref{sec:experiments} ($K=16$, $a=A=2$, eight-bit reports; \cref{app:experiments}).

\section{Extended related work and novelty boundary}
\label{sec:related}

\paragraph{Conformal RAG.}
TRAQ composes retrieval and generation uncertainty into valid answer sets in a
centralized pipeline~\citep{li2024traq}, and C-RAG certifies bounds on
generation risk, including under distribution shift~\citep{kang2024crag}.
Our finite-label formulation is closest to conformal sets over a language
model's multiple-choice answers~\citep{kumar2023conformalmcqa}, with retrieval
and scoring folded into the frozen score $s_i$ of \cref{sec:setup}.
Conformal factuality~\citep{mohri2024conformal} and group-conditional subclaim
assessment~\citep{feng2025conditionalrag} are single-site;
\citet{campos2024cpnlp} survey conformal prediction for language tasks. None
treats nodes that can alter query-time messages.

\paragraph{Federated conformal prediction.}
Federated conformal prediction (FCP) either assumes honest clients or defends
calibration only. Honest-client methods target a client mixture under partial
exchangeability~\citep{lu2023federated}, correct label
shift~\citep{plassier2023labelshift}, calibrate prescribed group mixtures with
mergeable coresets~\citep{wen2026groupconditional}, handle joint shift with a
score-distribution sketch~\citep{shi2026jointshift}, calibrate in one
round~\citep{humbert2023oneshot}, or correct wireless channel
effects~\citep{zhu2024wfcp}. Rob-FCP~\citep{kang2024robfcp} and
PRISM-FCP~\citep{lari2026prism,lari2026partial} filter Byzantine calibration
summaries but leave query-time scores unprotected. ACon$^2$ forms an online
consensus over Byzantine data sources~\citep{park2023acon2} rather than
calibrating an honest mean, and contaminated split conformal prediction studies
corrupted examples rather than corrupted nodes~\citep{clarkson2024contamination}.
FC-RAG~\citep{dubeyhuo2026bandwidth} calibrates from examples partitioned across
nodes and assumes those summaries approximate the quantile of the same-example
mean; by \cref{prop:nonidentification}, node-wise marginals alone do not imply
this. Anytime-FC-RAG gives sequential validity for honest
swarms~\citep{dubeyhuo2026anytime}.

\paragraph{Certified robust conformal prediction and $p$-value merging.}
Randomly smoothed conformal prediction (RSCP) and its improved variant RSCP+
inflate the threshold with a randomized-smoothing
envelope~\citep{gendler2022adversarial,yan2024rscpplus}. Verifiably robust
conformal prediction (VRCP) obtains score
bounds by neural-network verification~\citep{jeary2024verifiably}. Reliable
prediction sets (RPS) and CDF-aware sets (CAS)
protect sets against training or calibration poisoning and test-time
evasion~\citep{scholten2025reliable,zargarbashi2024robust}. Our coverage proof
shares their mechanism, certified score bounds that preserve the clean rank
event. Here, however, the perturbations are coordinated, candidate-specific messages
from unknown nodes in both phases, and coupling one honest set across all of
them yields the fixed-set rule and its minimality (\cref{thm:fixed-set}). Our
$p$-merger baseline is the partial-conjunction
construction~\citep{benjamini2008partial,wang2019partial} built on
order-statistic merging~\citep{ruger1978maximale,vovkwang2020combining},
specialized to at most $A$ arbitrary reports; majority vote similarly merges
dependent conformal sets~\citep{gasparin2024merging}. We give a self-contained
proof and sharp constant, not a new merger, and the merger does not target the
honest mean.

\paragraph{Robust aggregation and communication.}
Coordinatewise medians, trimmed means, the median of means and geometric
medians are standard Byzantine-robust
tools~\citep{yin2018byzantine,chen2017byzantine,pillutla2022robust,guerraoui2024primer},
and estimating an honest mean under heterogeneity costs an explicit spread
term~\citep{karimireddy2022bucketing}. Work on quantized and compressed robust learning
studies related communication
tradeoffs~\citep{suresh2017distributed,rammal2024compression}. Krum's
guarantees rest on SGD assumptions that do not transfer to candidate
scores~\citep{blanchard2017krum}, so it is only a secondary baseline. Our
one-sided deletion means solve an extremal feasible-subset problem, and the
envelopes use deterministic error bounds (\cref{lem:order-lipschitz}) rather
than assuming quantization stays unbiased after sorting.

\paragraph{Federated RAG systems and attacks.}
\citet{chakraborty2025fedragmap} map federated RAG; routing and personalized
retrieval reduce cost~\citep{dhasade2026ragroute,he2025pfedrag}. SecureCollaRAG
filters documents under an independent malicious-source
model~\citep{wang2026securecollarag}, routing hijacking attacks a different
surface~\citep{mu2026routinghijacking}, corpus poisoning corrupts the honest
score functions themselves~\citep{zhong2023poisoning,zou2025poisonedrag}, and
RobustRAG certifies answers against malicious passages within one
pipeline~\citep{xiang2026robustrag}. None provides conformal coverage against
arbitrary node messages.

\FloatBarrier

\section{Why local score distributions cannot calibrate the honest mean}
\label{app:alignment-necessity}

Exact local score distributions omit the dependence needed to calibrate a
same-example mean. A cumulative distribution function (CDF) gives the
probability that a score is at or below each threshold; write
$\|F-G\|_\infty=\sup_t|F(t)-G(t)|$ for the largest CDF discrepancy.
The following specialization of classical fixed-marginal dependence facts~\citep{makarov1982fixedmarginals} gives an explicit calibration obstruction.

\begin{proposition}[Marginals do not identify the honest-average law]
\label{prop:nonidentification}
Even with two honest nodes, no Byzantine reports, infinite calibration data,
and continuous scores in $[0,1]$, the two node-wise marginal score laws do not
identify the law of their average.  There exist two worlds with identical
$\Unif[0,1]$ node marginals whose average-score CDFs $F_0,F_1$ satisfy
\[
       \|F_0-F_1\|_\infty=\frac14.
\]
Consequently, every possibly randomized estimator $\widehat F$ whose population
input consists only of the local marginal CDFs obeys the following lower
bound, where $\E_r$ averages over the estimator's randomness in world $r$:
\[
 \max_{r\in\{0,1\}}
 \E_r\|\widehat F-F_r\|_\infty\ge\frac18.
\]
At target coverage $3/4$, any common deterministic threshold valid in both
worlds has at least $1/4$ overcoverage in one of them.
\end{proposition}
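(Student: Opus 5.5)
The plan is to use the explicit coupling already previewed in \cref{sec:alignment}: world $0$ takes the node scores $(U,U)$ and world $1$ takes $(U,T(U))$ with $T(u)=1-|2u-1|$, where $U\sim\Unif[0,1]$. First I will check that the marginals agree. $T$ pushes $\Unif[0,1]$ forward to $\Unif[0,1]$, because each of its two linear branches covers $[0,1]$ with slope of absolute value $2$. So both nodes are $\Unif[0,1]$ in both worlds. Both worlds also satisfy the stated setting: two honest nodes, continuous scores in $[0,1]$, and no Byzantine reports.

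Next I will compute the two average-score CDFs in closed form. In world $0$ the average is $U$, so $F_0(t)=t$ on $[0,1]$. In world $1$ I split on $u\le 1/2$ versus $u>1/2$.
\begin{itemize}[leftmargin=1.3em,itemsep=0pt,topsep=1pt]
 \item On $u\le 1/2$ the average is $3u/2$, which ranges over $[0,3/4]$.
 \item On $u>1/2$ it is $1-u/2$, which ranges over $[1/2,3/4)$.
\end{itemize}
Adding the two branch probabilities gives
\[
F_1(t)=\begin{cases}2t/3, & 0\le t<1/2,\\ 8t/3-1, & 1/2\le t<3/4,\\ 1, & t\ge 3/4.\end{cases}
\]
Hence $F_1(t)-F_0(t)$ equals $-t/3$ on $[0,1/2)$, equals $5t/3-1$ on $[1/2,3/4)$, and equals $1-t$ beyond. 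Its absolute value is maximized at $t=3/4$, where $F_1=1$ and $F_0=3/4$, so $\|F_0-F_1\|_\infty=1/4$. This bookkeeping step is the only place where an arithmetic slip could occur. In particular I must confirm that the supremum is attained at the jump point $t=3/4$, where $F_1$ reaches $1$ while $F_0$ is still $3/4$, and that it is not merely approached as a limit.

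The estimation lower bound is a two-point (Le Cam) argument. The estimator's population input, the pair of marginal CDFs, is identical in the two worlds, so $\widehat F$ has the same law under $\E_0$ and $\E_1$. By the triangle inequality pathwise, $\|\widehat F-F_0\|_\infty+\|\widehat F-F_1\|_\infty\ge 1/4$. Taking expectations under this common law gives $\E_0\|\widehat F-F_0\|_\infty+\E_1\|\widehat F-F_1\|_\infty\ge1/4$. The larger of the two terms is therefore at least $1/8$.

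For the threshold claim, a deterministic threshold $t$ is valid at level $3/4$ in both worlds if $F_0(t)\ge 3/4$ and $F_1(t)\ge 3/4$. The first condition forces $t\ge 3/4$. The second forces only $t\ge 21/32$, so the first is binding. For every $t\ge3/4$ we have $F_1(t)=1$, so world $1$ has coverage $1$, an overcoverage of $1/4$. This matches the $99.7\%$ versus $75.1\%$ illustration in \cref{sec:alignment}.
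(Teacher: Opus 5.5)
Your proposal is correct and follows the paper's proof essentially step for step: the same tent-map coupling, the same piecewise $F_1$ and gap bookkeeping, the same triangle-inequality and averaging argument for the $1/8$ bound, and the same observation that validity in World 0 forces $t\ge 3/4$, where $F_1=1$. One cosmetic point: $F_1$ is continuous at $t=3/4$ (since $8\cdot\tfrac{3}{4}/3-1=1$), so $3/4$ is a kink rather than a jump point, but the gap $1/4$ is indeed attained there exactly as you claim.
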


With $U\sim\Unif[0,1]$, the score pairs are $(U,U)$ in World 0, with
average-score CDF $F_0$, and $(U,T(U))$ in World 1, with CDF $F_1$, where
$T(u)=2u$ for $u\le1/2$ and $T(u)=2-2u$ otherwise; both node marginals are
uniform. More precise local summaries cannot recover the missing dependence;
alignment supplies it by keeping together the reports produced for the same
example. The conclusion concerns calibration of the honest arithmetic mean,
not other targets. \Cref{app:proof-impossibility} proves the result.

\section{Detailed protocol and communication model}
\label{sec:protocol-details}

This protocol keeps three sources of uncertainty separate: unknown honest
identities, reconstruction error from finite-bit reports, and missing honest
responses. Bounded scores make communication costs and efficiency bounds
explicit. At a fixed input and candidate, disagreement among honest nodes is
measured by the clean pointwise range
\begin{equation*}
 R_{\cH}(x,y)=\max_{i\in\cH}s_i(x,y)-
                 \min_{i\in\cH}s_i(x,y).
\end{equation*}
For the explicit finite-bit protocol and all experiments, fix
$0<\Smax<\infty$ and include deterministic clipping in the frozen score so
that $s_i\in[0,\Smax]$.  Then $R_{\cH}\le\Smax$.

\subsection{Threat and communication models}

\paragraph{Adversarial reports and missing messages.}
Authentication gives each registered node one report slot; the primary
guarantee requires
\begin{equation*}
       0\le a\le A<K.
\end{equation*}
Byzantine reports are unrestricted as in \cref{sec:setup}.  The hub's input gateway authenticates one
slot per registered node, rejects malformed contents, and fills every rejected
or missing authenticated-node slot with a registered finite sentinel, a
preassigned replacement report.  Such a
slot-level absence or malformed signed packet is handled by the protocol's
adversarial/dropout accounting, not as an infrastructure failure. Transport,
scheduler, storage, or service failures are
retried under the registered retry policy, fixed before evaluation. An unresolved infrastructure
failure aborts the run rather than being relabeled as a sentinel
report or deterministic model fallback (the frozen score map's registered
output for a retrieval or model failure; \cref{app:protocol}). An aborted run
carries no coverage guarantee.
The honest-dropout budget $D$ is the number of honest nodes allowed to miss a
report (unrelated to the report width $D_r$); the primary protocol sets $D=0$: honest nodes
respond, while a Byzantine nonresponse is simply one of the at most $A$
arbitrary reports.

\paragraph{Information available to the hub and nodes.}
The default protocol broadcasts shared calibration inputs and candidates, so
it is not a private-input protocol.  It does not pool private node corpora.
The vector form, in which every node scores all $M$ candidates of each
calibration example, does not transmit the
calibration label to nodes, but it assumes the calibration inputs themselves may
be shared.  Sensitive inputs require consent, secure execution, or a separate
privacy mechanism, none of which is analyzed here.  Secure aggregation can
hide individual submissions but cannot make their contents truthful; input
validation is complementary~\citep{bonawitz2017secure,bell2023acorn}.
Moreover, an ordinary sum-only secure-aggregation interface does not provide
the identity-resolved report array needed to enumerate feasible honest sets,
delete selected identities, or validate per-node packets.  Implementing those
rules while hiding reports would require a richer secure-computation and
authenticated-input-validation protocol, which we do not analyze.  Not
transmitting the recorded label is an interface restriction, not a
confidentiality guarantee: a capable node may infer the answer from the input.

\paragraph{Reconstructing finite-bit scores.}
Quantization replaces each real score by one of finitely many values that the
hub can decode. For integer $b\ge1$, let $Q_b$ be the nearest-neighbor scalar quantizer with $2^b$
equally spaced reconstruction levels including $0$ and $\Smax$:
\begin{equation}
 Q_b(u)=\frac{\Smax}{2^b-1}
 \operatorname{round}\!\left(\frac{(2^b-1)u}{\Smax}\right),
 \qquad
 \rho(b)=\frac{\Smax}{2(2^b-1)}.
 \label{eq:quantizer}
\end{equation}
For $u\in[0,\Smax]$, at an exact cell midpoint, \textup{round} selects the
smaller reconstruction level, making $Q_b$ deterministic.  Thus
$|Q_b(u)-u|\le\rho(b)$ on the declared score domain.
Heterogeneous registered maps $Q_{i,r}$ satisfying
\cref{eq:directional-quantizer} are allowed; when $Q_{i,r}=Q_{b_i^r}$ with
depths $b_i^r\ge1$, set $b_r=\min_i b_i^r$ and
$\rho_r^-=\rho_r^+=\rho(b_r)$.

\paragraph{Aggregation notation.}
For real reports $v_1,\ldots,v_K$, let $v_{(1)}\le\cdots\le v_{(K)}$ be their
order statistics. For a symmetric comparator, choose a registered trim $m$ satisfying
$A\le m$ and $2m<K$, and define
\begin{equation*}
 \TM_m(v_1,\ldots,v_K)
 =\frac1{K-2m}\sum_{\ell=m+1}^{K-m}v_{(\ell)}.
\end{equation*}
The primary symmetric comparator sets $m=A$; over-trimming is an efficiency
diagnostic.

\paragraph{Label-aware scalar variant.}
The vector protocol does not transmit the label to nodes and makes calibration
information operationally match deployment.  If label disclosure is acceptable,
each honest node may instead transmit only the true-label score.  The guarded
arbitrary-message results remain unchanged and calibration communication falls
by a factor of $M$.  The unguarded matched-pipeline result, which relies on
identical scoring behavior in calibration and prediction
(\cref{lem:phase-blind}), then requires a separate exchangeability
justification, because Byzantine nodes see the label during calibration but
not deployment.

\section{Computational cost and exact decisions}
\label{sec:computation-details}
Fixed-set enumerates every feasible honest set, obtained by deleting at most $A$
identities; their number is
\begin{equation*}
  N_A(K)=\sum_{d=0}^{A}\binom{K}{d}.
\end{equation*}
Thus
$N_2(16)=137$, $N_3(16)=697$, and $N_3(64)=43{,}745$.  When $k\le n$,
complement sums (subtracting deleted-node sums from the total) and
linear-time order-statistic selection give calibration cost
$O\{nK+n\sum_{d=0}^A\binom{K}{d}(d+1)\}$ and, after storing the $T_k(H;V)$
values, fixed-set query cost
$O\{MK+M\sum_{d=0}^A\binom{K}{d}(d+1)\}$.  Ordinary sorting adds the usual
logarithmic factor.
The registered quantized implementation uses integer code sums and exact
rational comparisons to preserve boundary ties and the set hierarchy;
\cref{app:exact-grid-decisions} gives the arithmetic contract.

\paragraph{Measured CPU cost.}
Exact fixed-set decisions complete in milliseconds for a batch of queries
at the operational budget in a controlled local benchmark.
We use the retained OpenBookQA score cache from both corpus partitions,
with $K=16$, $M=4$, eight-bit clean reports, nominal 90\% coverage, and
declared budgets $A\in\{2,3\}$. The first 333 cache-order examples calibrate
the rule and the remaining 667 form one query batch. This fixed split
measures computation; it adds no coverage estimate to the registered study.
All methods receive the same quantized arrays. Measurements use an Intel
Core i7-1068NG7 CPU at a nominal 2.30\,GHz, 32\,GiB RAM, Python 3.12.7,
and NumPy 1.26.4, with one process and native thread pools limited to one.
Each stage receives three warm-ups and 21 timed repetitions with rotating
execution order; the code archive retains every timing and its quartiles.

The benchmark separates reusable calibration state from query work while
preserving the production arithmetic and diagnostic outputs. Here
``production API'' is the released library function, which performs
calibration and query in one call; the benchmark's ``staged'' code splits
the two phases so that cached calibration state can be timed. Across both
partitions, budgets, and three methods, all 12 staged-versus-unstaged output
comparisons match the unchanged grid implementation (the exact
integer-arithmetic implementation of \cref{app:exact-grid-decisions}) exactly,
and all prediction sets match the public production
API. \Cref{tab:computation-benchmark} also times the unchanged combined grid
call. Timings exclude retrieval, model scoring, communication, input
loading, quantization, grid validation, and true-label score extraction.

\begingroup
\let\resulttablefloat\table
\renewcommand{\table}[1][]{\resulttablefloat[H]}%
\begin{table}[H]
\centering
\small
\setlength{\tabcolsep}{3pt}
\caption{Cached fixed-set inference processes 667-query batches in milliseconds. Times are medians of 21 measurements, with ranges across the two retained partitions; $K=16$, $n=333$, $M=4$, eight-bit reports, and nominal 90\% coverage. Query uses staged cached kernels; Combined uses the production function.}
\label{tab:computation-benchmark}
\begin{tabular}{@{}clrrrrr@{}}
\toprule
$A$ & Rule & Cal. & Query batch & Combined & Cache & Query peak\\
 & & ms & ms & ms & KiB & MiB\\
\midrule
2 & Fixed-set & 2.96--3.19 & 6.27--6.47 & 10.04--10.92 & 21.16 & 6.33\\
2 & Joint-threshold & 3.22--3.28 & 0.68--0.76 & 4.13--4.14 & 0.33 & 0.41\\
2 & Deletion & 0.35--0.40 & 0.57--0.60 & 0.92--1.00 & 0.33 & 0.41\\
\midrule
3 & Fixed-set & 13.44--14.42 & 21.13--22.54 & 35.18--38.15 & 104.28 & 30.55\\
3 & Joint-threshold & 13.82--14.58 & 0.77--0.82 & 15.54--15.70 & 0.33 & 0.41\\
3 & Deletion & 0.33--0.82 & 0.59--0.69 & 1.04--1.14 & 0.33 & 0.41\\
\bottomrule
\end{tabular}
\end{table}
\endgroup{}

For fixed-set at $A=2$, dividing the batch time by 667 gives
9.39--9.70\,$\mu$s per query (amortized, not isolated latency).

The memory columns distinguish retained state from allocations during
query processing. Cached state counts reachable Python objects and owned
NumPy storage; query peak is the maximum of three separate
\texttt{tracemalloc} measurements per partition, excluding preallocated
inputs and cache. It includes traced NumPy buffers but is not total process
memory. \Cref{tab:computation-benchmark} reports ranges of partition-specific medians
for timing, not confidence intervals. Processor scheduling and frequency
are uncontrolled; the benchmark is a local measurement of these kernels,
not a claim about end-to-end deployment latency or scaling to larger budgets.

\section{Endpoint attacks and conservative budgets}
\label{app:endpoint-attacks}

\Cref{tab:attack-directions} lists the three endpoint attacks of
\cref{sec:theory}; the corollary below makes their roles precise.

\begin{table}[H]
\centering
\small
\caption{Each endpoint attack sends one end of the report range in each phase:
high--low yields the largest proposed sets, and low--high at $a=A$ cancels the
corrupt contribution. All three rules retain their
coverage guarantee under the stated assumptions; efficiency depends on
the direction of the reports.}
\label{tab:attack-directions}
\begin{tabularx}{\linewidth}{@{}lccX@{}}
\toprule
Attack & Calibration & Query & Role in the comparison\\
\midrule
Stable maximum & $\rhi$ & $\rhi$ & Measures gains from fixed membership across phases\\
High--low & $\rhi$ & $\rlo$ & Maximizes proposed-set inclusion for a fixed honest transcript\\
Low--high & $\rlo$ & $\rhi$ & At $a=A$, recovers the padded honest decoded comparison\\
\bottomrule
\end{tabularx}
\end{table}

\begin{corollary}[Endpoint attacks and conservative budgets]
\label{cor:endpoints-budget}
If $k=n+1$, all set conclusions below are immediate, since every rule then
returns $\cY$.  Suppose $k\le n$.
Fix the honest decoded reports and suppose Byzantine reports must lie in
$[\rlo,\rhi]$.  For each of $\cC^{\rm fs}$, $\cC^{\rm jt}$, and $\cC^{\rm del}$,
setting every Byzantine calibration report to $\rhi$ and every Byzantine query
report to $\rlo$ produces a prediction set containing the set under every other
Byzantine transcript.  Thus the high--low endpoint attack is pathwise binding
for set inclusion, and hence for set size and full-set status, within this
bounded-message class.  It also maximizes proposed-set coverage for fixed
clean data and is therefore not an undercoverage stress.  Symmetrically,
setting every Byzantine calibration report to $\rlo$ and every Byzantine query
report to $\rhi$ produces a set contained in the set under every other Byzantine
transcript in $[\rlo,\rhi]$, whatever the number $a$ of Byzantine nodes. Low--high therefore
minimizes proposed-set coverage for fixed clean data within this class.
Neither extremal result extends to the $p$-value merger or to point accuracy (whether the
top-ranked answer is correct).

Conversely, suppose $a=A$ and low--high sends calibration values no larger
than every honest decoded value and query values no smaller than every honest
decoded value.  Define the honest-decoded padded set
\begin{equation*}
 \cC^{\rm dec,\cH}_\alpha(x)=
 \left\{y:\frac1h\sum_{i\in\cH}w_i(x,y)
 \le T_k(\cH;V)+g_\rho\right\}.
\end{equation*}
Then all three proposed rules equal $\cC^{\rm dec,\cH}_\alpha(x)$ pathwise.
If, in addition, the registered reconstruction errors are zero, then
\begin{equation*}
 \cC^{\rm fs}_\alpha(x)=\cC^{\rm jt}_\alpha(x)
 =\cC^{\rm del}_\alpha(x)=\cC^\circ_\alpha(x)
\end{equation*}
pathwise.  If the $n+1$ clean true-label scores are exchangeable and almost
surely distinct, their common coverage is exactly $k/(n+1)$ for $k\le n$.

Finally, for a fixed transcript, fixed directional padding, and
$0\le A_1\le A_2<K$, each of the three proposed sets at budget $A_1$ is
contained in its counterpart at budget $A_2$.
\end{corollary}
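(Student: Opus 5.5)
The whole corollary rests on monotonicity: each proposed set depends on the Byzantine reports only through quantities that are nondecreasing in calibration reports and nonincreasing in query reports. For fixed-set, each group $H$ gives a mean $c_j(H;V)$ that is nondecreasing in every entry $v_{ij}$. Since $\OS_k$ is nondecreasing in each argument, $T_k(H;V)$ is nondecreasing in every calibration report, while $\bar w_H(x,y)$ is nondecreasing in every query report. So the support condition $\bar w_H\le T_k(H;V)+g_\rho$ is preserved when we raise calibration reports or lower query reports. Taking the union over $H\in\mathfrak H_A$, $\cC^{\rm fs}$ is monotone in the same direction. For joint-threshold and deletion, the same argument applies directly to the defining existential formulas of \cref{sec:deletion-method}, or to the closed forms: $\tau_{A,k}$, $\Up_A$ and $\widehat R_{(k)}$ are nondecreasing, and $\Dn_A$ is nondecreasing. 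Now take any transcript with Byzantine entries in $[\rlo,\rhi]$ and the honest decoded entries held fixed. Raising each Byzantine calibration report to $\rhi$ and lowering each Byzantine query report to $\rlo$ can only enlarge the set. Reversing the replacement can only shrink it. These inclusions give the set-size and full-set statements, and, with the clean data fixed, the coverage-extremality statements, since the indicator $\ind\{Y_{n+1}\in\cC\}$ is monotone in the set. The caveat about the $p$-merger and point accuracy I would handle by a remark: the merger ranks per-node $p$-values and is not built from these monotone means, and the top-ranked answer is not monotone. A small counterexample suffices there, and I would not grind it out.

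For the converse with $a=A$, $\mathfrak H_A$ consists of groups of size at least $h$. I first show that $T_k(H;V)\le T_k(\cH;V)$ for every feasible $H$. Take any $H$ with $|H|\ge h$. Fix a column $j$ in which every Byzantine value is at most every honest value. Then the mean over $H$ is at most the mean of the $|H|$ smallest entries, which is at most the mean of the $h$ smallest entries. The $h$ smallest entries can be taken to be the Byzantine ones together with the smallest honest ones. This is the step I expect to be delicate: for $|H|>h$ the group must include Byzantine nodes, and I need the averaging inequality that the mean of the smallest $m$ values is nondecreasing in $m$. That inequality gives $\Dn$-type bounds, but here I need an upper bound on the cutoff. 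The right statement is $c_j(H)\le\max_{|G|\ge h}\bar v_G=\Up_A(v_{\cdot j})$, and the maximum is attained at the top $h$ entries, which are exactly the honest nodes. So $\widehat R_j=c_j(\cH)$, and then $\tau_{A,k}=\widehat R_{(k)}=T_k(\cH;V)$. The query side is symmetric: $\Dn_A(w)=\bar w_{\cH}$. Hence
\[
\cC^{\rm del}_\alpha\subseteq\cC^{\rm dec,\cH}_\alpha\subseteq\cC^{\rm fs}_\alpha,
\]
where the second inclusion holds because $H=\cH$ is a witness. Combined with the hierarchy in \cref{thm:fixed-set}, all three sets are equal.

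When the registered errors are zero, $g_\rho=0$ and honest decoded values are clean, so $\cC^{\rm dec,\cH}_\alpha=\cC^\circ_\alpha$. The exact coverage statement is the standard conformal rank argument: with distinct exchangeable scores, $R_{n+1}$ is uniform over the $n+1$ ranks, giving probability exactly $k/(n+1)$. For the budget monotonicity I would use $\mathfrak H_{A_1}\subseteq\mathfrak H_{A_2}$. Enlarging the index set of an existential quantifier in all three definitions can only add candidates, and the padding is fixed by hypothesis.
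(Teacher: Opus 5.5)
Your main argument matches the paper's proof. The endpoint extremalities come from coordinatewise monotonicity of subset means, $\OS_k$, $\tau_{A,k}$, $\Up_A$ and $\Dn_A$. The exact-budget low--high case comes from $\Up_A(v_{\cdot j})$ and $\Dn_A(w)$ both collapsing onto the honest multiset. Budget monotonicity comes from $\mathfrak H_{A_1}\subseteq\mathfrak H_{A_2}$.

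Your routes differ only cosmetically. You close the low--high case with $\cC^{\rm del}\subseteq\cC^{\rm dec,\cH}\subseteq\cC^{\rm fs}$ plus the hierarchy of \cref{thm:fixed-set}, rather than computing $\Delta_{A,k}$ directly. You prove budget monotonicity from the existential definitions rather than the closed forms.

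Delete your first attempt at the cutoff bound. A subset mean is bounded above by the mean of the largest $|H|$ entries, not the smallest. The corrected bound $c_j(H;V)\le\Up_A(v_{\cdot j})$ is exactly what the paper uses via \eqref{eq:extremal-identity}.

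The step that would fail is your deferred counterexample for the $p$-value merger. What matters is monotonicity, not whether the rule is built from means, and the merger has it:
\begin{itemize}
\item by \eqref{eq:node-pvalue}, a Byzantine node's $p_i(x,y)$ depends only on that node's own reports;
\item it is nondecreasing in its calibration reports and nonincreasing in its query report;
\item honest $p$-values do not involve Byzantine reports at all;
\item $p_{(2A+1)}$ is nondecreasing in every coordinate.
\end{itemize}
High--low therefore sets every Byzantine $p$-value to its maximum $1$, and low--high (for $\rlo<\rhi$) sets it to its minimum $1/(n+1)$. So both inclusion extremalities, and the coverage statements derived from them, also hold for the merger, and no counterexample exists. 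The merger sizes in \cref{tab:synthetic-main} are consistent with this: high--low equals stable maximum, and low--high is smallest.

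The paper's own proof gives no argument for this clause, so there is nothing to fall back on. Only the point-accuracy half of the caveat can be established. For instance, a query attack that reports $\rlo$ on one wrong candidate and $\rhi$ on the correct one can flip the common ranker's top answer where both endpoint attacks leave it correct.
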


\paragraph{Interpretation.}
High coverage under high--low can coexist with an uninformative
all-answer set: this attack tests efficiency, not resistance to undercoverage.

The final monotonicity makes the tradeoff in choosing $A$ explicit.  In the
primary no-dropout protocol the additive padding $g_\rho$ is independent of
$A$, but conservative budgets still enlarge the feasible-set family and widen
the deletion aggregates.  Underestimating $A$ invalidates all three guarantees
when $a>A$; no universal fragility ordering against the symmetric rule or
$p$-merger follows without a distribution and attack model.

\section{Symmetric comparison and its limitations}
\label{sec:symmetric-details}

We first define the guarded symmetric comparator, whose guard is an additive
threshold adjustment,
then state precisely when all proposed sets are contained in it.
Take $A\le m$, $2m<K$, symmetric reconstruction error
$\rho_r=\rho_r^-=\rho_r^+$, and
define
\begin{align*}
 \widehat R_j^{\rm sym,m}&=\TM_m(\widetilde s^{\calib}_{1j}(Y_j),\ldots,
                                  \widetilde s^{\calib}_{Kj}(Y_j)),\\
 \widehat s^{\rm sym,m}(x,y)&=\TM_m(\widetilde s^{\qry}_1(x,y),\ldots,
                                  \widetilde s^{\qry}_K(x,y)).
\end{align*}
For $k\le n$, let $\widehat R^{\rm sym,m}_{(k)}$ be the $k$th order statistic of
$\widehat R^{\rm sym,m}_1,\ldots,\widehat R^{\rm sym,m}_n$.  For this comparison,
take $D_r$ to be the full decoded-alphabet width, the width of the range of
values a report can decode to, and set
$g_r(m)=mD_r/(K-A)$.  For $k\le n$, the comparator uses threshold
$\widehat R^{\rm sym,m}_{(k)}+g_{\calib}(m)+g_{\qry}(m)
 +\rho_{\calib}+\rho_{\qry}$ and prediction set
\[
 \cC^{\rm sym,m}_\alpha(x)=
 \left\{y:\widehat s^{\rm sym,m}(x,y)
 \le \widehat R^{\rm sym,m}_{(k)}+g_{\calib}(m)+g_{\qry}(m)
 +\rho_{\calib}+\rho_{\qry}\right\}.
\]
When $k=n+1$, the comparator, like the proposed rules, returns $\cY$.  Write
$\cC^{\rm sym}=\cC^{\rm sym,A}$ for the primary comparator.

\begin{proposition}[Dominance over every eligible full-alphabet symmetric guard]
\label{prop:dominance}
Suppose $A\le m$, $2m<K$, all methods use the same reports and symmetric
quantization bounds, and $g_r(m)=mD_r/(K-A)$ is computed from the full
decoded-alphabet width.  Then
\begin{equation*}
 \cC^{\rm fs}_\alpha(x)\subseteq\cC^{\rm jt}_\alpha(x)
 \subseteq\cC^{\rm del}_\alpha(x)
 \subseteq\cC^{\rm sym,m}_\alpha(x)
       \quad\text{for every }x
\end{equation*}
pathwise.  Over-trimming itself does not break the result when its guard grows
with $m$.  The claim can fail if an $m>A$ comparator retains the undersized
$AD_r/(K-A)$ guard or uses genuinely tighter certified honest-range
information; it is not an unconditional dominance statement.  For example,
with $K=5$, $A=1$, $m=2$, $n=k=1$ (so $\alpha\ge1/2$), zero reconstruction error, reports in
$[0,1]$, calibration reports $v=(0,0,0,1,1)$ and query reports
$w=(0,0,1,1,1)$, deletion keeps the candidate because
$\Dn_A(w)=\Up_A(v)=1/2$, whereas the comparator with the undersized guard
$2AD_r/(K-A)=1/2$ drops it, since $\TM_2(w)=1>\TM_2(v)+1/2=1/2$.
\end{proposition}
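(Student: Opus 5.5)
The first two inclusions are part~1 of \cref{thm:fixed-set}, which applies because every admissible transcript with symmetric bounds $\rho_r^-=\rho_r^+=\rho_r$ is covered. The padding also agrees: $g_\rho=\rho_{\calib}^-+\rho_{\qry}^+=\rho_{\calib}+\rho_{\qry}$, which is exactly the reconstruction part of the symmetric threshold. What remains is the inclusion $\cC^{\rm del}_\alpha\subseteq\cC^{\rm sym,m}_\alpha$. By \cref{eq:deletion-set}, it suffices to prove two pathwise inequalities between deletion means and trimmed means. For every query vector $w$ and every calibration column $v$ whose entries lie in the decoded alphabet of width $D_r$, I want
\[
 \TM_m(w)\le \Dn_A(w)+\frac{mD_{\qry}}{K-A},
 \qquad
 \Up_A(v)\le \TM_m(v)+\frac{mD_{\calib}}{K-A}.
\]
Given these, suppose $y\in\cC^{\rm del}_\alpha(x)$, so $\Dn_A(w)\le\widehat R_{(k)}+g_\rho$. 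Apply the second inequality to each column $j$: $\widehat R_j=\Up_A(v_{\cdot j})\le\widehat R^{\rm sym,m}_j+g_{\calib}(m)$. Since $\OS_k$ is nondecreasing in each argument and commutes with adding a constant, this gives $\widehat R_{(k)}\le\widehat R^{\rm sym,m}_{(k)}+g_{\calib}(m)$. Chaining with the first inequality yields $\TM_m(w)\le\widehat R^{\rm sym,m}_{(k)}+g_{\calib}(m)+g_{\qry}(m)+\rho_{\calib}+\rho_{\qry}$, which is membership in $\cC^{\rm sym,m}_\alpha(x)$. The case $k=n+1$ is trivial, since both sides equal $\cY$.

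The main step is the first deterministic inequality; the second follows by applying it to $-v$, since $\Up_A(v)=-\Dn_A(-v)$ and $\TM_m(v)=-\TM_m(-v)$. Sort $w_{(1)}\le\cdots\le w_{(K)}$. Because $A\le m$ and $2m<K$, we have $m<K-m\le K-A$, so ranks $m+1,\dots,K-m$ form a nonempty block inside ranks $1,\dots,K-A$. Let $\mu$ be the mean of ranks $m+1,\dots,K-A$.
\begin{enumerate}[leftmargin=1.3em,itemsep=0pt,topsep=1pt]
 \item Ranks $K-m+1,\dots,K-A$ are the largest within that range, so $\TM_m(w)\le\mu$.
 \item Write $(K-A)\Dn_A(w)=\sum_{\ell\le m}w_{(\ell)}+(K-A-m)\mu$. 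All entries lie in an interval of width $D_{\qry}$, so each of the $m$ smallest entries is at least $\mu-D_{\qry}$.
 \item Substituting gives $\Dn_A(w)\ge\mu-mD_{\qry}/(K-A)\ge\TM_m(w)-mD_{\qry}/(K-A)$.
\end{enumerate}

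The only delicate point is which $D_r$ to use. Byzantine reports are arbitrary, so the width bound must apply to every entry, honest or not. This is why the guard has to be computed from the full decoded-alphabet width, as the hypothesis requires, and not from the honest spread. I would end by noting that the stated counterexample shows where this breaks: with the undersized guard $AD_r/(K-A)$ at $m>A$, step~3 loses exactly the factor $m/A$, so the claimed containment fails.
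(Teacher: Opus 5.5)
Your proof is correct and follows the paper's route. It uses the two deterministic bounds $\TM_m(w)-\Dn_A(w)\le mD_{\qry}/(K-A)$ and $\Up_A(v)-\TM_m(v)\le mD_{\calib}/(K-A)$, which hold for any reports in the full decoded alphabet, moves them to $k$th order statistics by monotonicity, chains them through the deletion threshold, and takes the first two inclusions from \cref{thm:fixed-set}. The only difference is cosmetic: you prove one bound via the intermediate mean $\mu$ and get the other by negation, whereas the paper checks both directly by a rank decomposition around $\TM_m$ and also notes that both bounds are attained.
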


The guard is needed because trimming both tails moves the aggregate away from
the honest mean when honest nodes disagree, even without an attack. For exact
honest score values $x_i$, write
$R_{\cH}=\max_{i\in\cH}x_i-\min_{i\in\cH}x_i$ and
$\bar x_{\cH}=h^{-1}\sum_{i\in\cH}x_i$, and write $(x_{\cH},v_{\cA})$ for the
report vector with these honest entries and Byzantine entries $v_{\cA}$.

\begin{lemma}[Sharp contaminated symmetric trimmed mean]
\label{lem:sharp-tm}
If $a\le A\le m$ and $2m<K$, then for arbitrary finite Byzantine reports
\begin{equation*}
 \left|\TM_m(x_{\cH},v_{\cA})-\bar x_{\cH}\right|
 \le \frac{m}{h}R_{\cH}.
\end{equation*}
The attacked trimmed mean lies between the means of the smallest and largest
$h-m$ honest values, and the coefficient $m/h$ is worst-case sharp, even
when $a=0$.
Deterministic symmetric quantization adds at most $\rho_r$.
\end{lemma}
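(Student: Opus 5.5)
The lemma concerns the symmetric trimmed mean of a vector with $h$ honest entries $x_i$ and $a\le A\le m$ arbitrary entries. I would prove it in four steps: a sandwich of the attacked trimmed mean between two honest sub-means, a conversion of that sandwich into the $m/h$ bound, an example showing the constant is sharp, and a one-line quantization add-on.

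\textbf{Step 1: the sandwich.} Let $x_{[1]}\le\cdots\le x_{[h]}$ be the sorted honest values. Define the honest sub-means
\[
L=\frac1{h-m}\sum_{\ell=1}^{h-m}x_{[\ell]},\qquad U=\frac1{h-m}\sum_{\ell=m+1}^{h}x_{[\ell]}.
\]
I would show $L\le\TM_m(x_{\cH},v_{\cA})\le U$ by a rank comparison. The retained positions are $m+1,\ldots,K-m$, so there are $K-2m$ of them. Since at most $a\le m$ entries are Byzantine, for every $\ell$ in this range the $\ell$th order statistic of the full vector satisfies
\[
x_{[\ell-a]}\le v_{(\ell)}\le x_{[\ell]}.
\]
This is the standard interlacing: inserting $a$ points shifts ranks by at most $a$. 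Using $a\le m$ gives the cruder bounds $v_{(\ell)}\ge x_{[\ell-m]}$ and $v_{(\ell)}\le x_{[\ell]}$. Averaging, $\TM_m$ lies between the average of $K-2m$ consecutive honest order statistics starting at index $1$ and one ending at index $K-m$, and $K-m\le h+(a-m)\le h$.

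There is a mismatch here. Those are averages of $K-2m$ consecutive honest order statistics, whereas $L$ and $U$ average $h-m$ of them, and $K-2m\le h-m$. I would settle it by monotonicity: an average of the lowest $p$ sorted values is nondecreasing in $p$, so the mean of the lowest $K-2m$ honest values is at least the mean of the lowest $h-m$ values wherever the window starts at index $1$. The symmetric argument handles the top. The careful part is the index bookkeeping between $a$ and $m$, and I expect it to be the main obstacle. To keep it clean I would first reduce to $a=m$ by noting that an honest entry can be treated as Byzantine without affecting the sandwich in the required direction. If that reduction fails, I would instead bound directly using partial sums.

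\textbf{Step 2: the bound.} Next, $U-\bar x_{\cH}\le\frac mh R_{\cH}$. To see this, write
\[
hU-h\bar x_{\cH}=\frac{m}{h-m}\sum_{\ell=m+1}^{h}x_{[\ell]}-\sum_{\ell\le m}x_{[\ell]}.
\]
Then $U-\bar x_{\cH}=\frac mh\bigl(U-\text{mean of bottom }m\bigr)\le\frac mh R_{\cH}$. Symmetrically, $\bar x_{\cH}-L\le\frac mh R_{\cH}$.

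\textbf{Step 3: sharpness.} I would take $a=0$, $K=h$, with $m$ honest values equal to $0$ and the rest equal to $1$, so $R_{\cH}=1$. Then $\TM_m=(h-2m)/(h-2m)=1$ while $\bar x_{\cH}=(h-m)/h$, so the gap is exactly $m/h$.

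\textbf{Step 4: quantization.} Decoded honest reports satisfy $|Q(u)-u|\le\rho_r$, and the sandwich argument depends only on which values are honest. Applying Steps 1–2 to the decoded honest values and comparing their mean to the clean mean changes the estimate by at most $\rho_r$.
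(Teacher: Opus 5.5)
\textbf{Gap in Step 1: the lower half of the sandwich fails as written.} When you weaken $v_{(\ell)}\ge x_{[\ell-a]}$ to $v_{(\ell)}\ge x_{[\ell-m]}$, you get only $\TM_m\ge\frac{1}{K-2m}\sum_{\ell=1}^{K-2m}x_{[\ell]}$. That is the mean of the \emph{bottom} $K-2m$ honest values.

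The monotonicity fact you quote says the mean of the lowest $p$ sorted values is nondecreasing in $p$. Since $K-2m\le h-m$, this makes your window mean \emph{at most} $L$. Your sentence claiming ``at least'' reverses the inequality.

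The loss is real. Take $a=0$, $K=h=5$, $m=2$ and honest values $(0,1,1,1,1)$: your window is $\{x_{[1]}\}=\{0\}$, while $L=2/3$. Pushed through Step 2, the crude window only yields $\bar x_{\cH}-\TM_m\le\frac{2m-a}{h}R_{\cH}$, which is twice the claimed constant at $a=0$.

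Your upper half is fine. The window $x_{[m+1]},\dots,x_{[K-m]}$ is the bottom $K-2m$ of the top $h-m$ honest values, so monotonicity goes the right way there.

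The fallback of relabeling honest nodes as Byzantine to reach $a=m$ does not close the gap as stated. For the lower bound it works only if you relabel the $m-a$ \emph{smallest} honest values; relabeling the largest reproduces the crude window. So the choice must be made separately for each side.

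\textbf{Repair.} The paper keeps the shift by $a$. The lower window is then $x_{[m+1-a]},\dots,x_{[h-m]}$, which has exactly $K-2m$ terms. Since $h-m=(K-2m)+(m-a)$, these are the \emph{largest} $K-2m$ of the first $h-m$ honest order statistics, so their mean is at least $L$. Alternatively, you can obtain the lower half from your correct upper half by negating every report. With that fixed, your Steps 2 and 3 are correct and match the paper's decomposition and sharpness array.

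\textbf{Step 4 also needs one more ingredient.} Applying Steps 1--2 to the decoded values bounds the error by $\frac mh$ times the \emph{decoded} range. That range can exceed the clean range $R_{\cH}$ by up to $2\rho_r$. So comparing means alone gives $\frac mh R_{\cH}+(1+\frac{2m}{h})\rho_r$, not an additive $\rho_r$.

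To get the stated form, hold the Byzantine entries fixed. Moving each honest coordinate by at most $\rho_r$ moves every order statistic, and hence $\TM_m$, by at most $\rho_r$ (\cref{lem:order-lipschitz}). Then apply the clean bound and the triangle inequality.
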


Thus, with a registered bound $\bar R_r$ on the honest score range, the
symmetric comparator is valid with per-channel guard $m\bar R_r/(K-A)+\rho_r$,
which $m=A$ minimizes; the $m>A$ sweep is a conservatism diagnostic.  Phase
switching makes some padding unavoidable for this rule:

\begin{proposition}[Low--high coverage lower bound]
\label{prop:low-high-lower}
Let $a=A$, $2A<K$, $h=K-A$, and suppose $k\le n$.  Let honest scores have the
location form
$s_i(Z_j)=S_j+\xi_i$: all honest nodes share the example-specific score $S_j$,
and $\xi_i$ is node $i$'s fixed offset. The offsets have range $R$ and the $S_j$ are
i.i.d.\ continuous. Let $S_{(k)}$ be the $k$th order statistic of
$S_1,\ldots,S_n$. Write $\xi_{(1)}\le\cdots\le\xi_{(h)}$ for the ordered
honest offsets. Assume exact honest reports and symmetric trim $m=A$.
Let Byzantine reports lie below every honest report in calibration and above
every honest report at prediction.  Define
\begin{align*}
 L_\xi&=\frac1{h-A}\sum_{i=1}^{h-A}\xi_{(i)},&
 U_\xi&=\frac1{h-A}\sum_{i=A+1}^{h}\xi_{(i)},&
 \Delta_\xi&=U_\xi-L_\xi.
\end{align*}
For a symmetric-trimmed threshold with total padding $g$, coverage equals
\begin{equation}
 \Pp\{S_{n+1}\le S_{(k)}-(\Delta_\xi-g)\}.
 \label{eq:low-high-coverage}
\end{equation}
For every $g<\Delta_\xi$, a bounded uniform law for $S$ with support width
less than $\Delta_\xi-g$ makes this probability zero.  Moreover,
\begin{equation}
 \sup_{\operatorname{range}(\xi)\le R}\Delta_\xi
 =R\min\left\{1,\frac{A}{K-2A}\right\}.
 \label{eq:low-high-minimax}
\end{equation}
Hence any fixed total-padding constant uniformly valid over this subclass must
be at least the right-hand side of \cref{eq:low-high-minimax}.  This lower bound
is generally smaller than the sufficient $2AR/(K-A)$ guard.
\end{proposition}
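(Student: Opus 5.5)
The plan is to compute the attacked symmetric trimmed means in closed form, reduce the coverage event to a statement about the common shifts $S_j$ alone, and then solve the offset optimization separately.

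\textbf{Step 1: the attacked aggregates.} Fix $m=A=a$, so $h=K-A$ and $h-A=K-2A>0$.

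In calibration on example $j$, the $A$ Byzantine reports lie strictly below every honest value $S_j+\xi_i$. The sorted report vector is therefore the $A$ Byzantine entries followed by $S_j+\xi_{(1)}\le\cdots\le S_j+\xi_{(h)}$. Trimming $A$ from each end deletes exactly the Byzantine block and the top $A$ honest values. Hence $\widehat R^{\rm sym,A}_j=S_j+L_\xi$ deterministically.

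At the query the Byzantine block sits above every honest value. Trimming then deletes the bottom $A$ honest values and the Byzantine block, so the true-label query aggregate is $S_{n+1}+U_\xi$.

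\textbf{Step 2: the coverage formula.} The order statistic commutes with a common translation, so $\widehat R^{\rm sym,A}_{(k)}=S_{(k)}+L_\xi$. The coverage event $S_{n+1}+U_\xi\le S_{(k)}+L_\xi+g$ is then exactly $S_{n+1}\le S_{(k)}-(\Delta_\xi-g)$, which is \cref{eq:low-high-coverage}.

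For the zero-coverage claim, let $S$ be uniform on an interval of width $w<\Delta_\xi-g$. Then $S_{(k)}-S_{n+1}\le w<\Delta_\xi-g$ surely, so the event is empty.

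\textbf{Step 3: the extremal offset problem.} Write
\[
(h-A)\Delta_\xi=\sum_{i>A}\xi_{(i)}-\sum_{i\le h-A}\xi_{(i)},
\]
which compares the top $h-A$ offsets with the bottom $h-A$. Split into two cases.

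If $h-A\le A$, equivalently $K-2A\le A$, the two index blocks are disjoint. Each term in the top block exceeds its partner in the bottom block by at most $R$, so $\Delta_\xi\le R$. Equality holds by placing the top $h-A$ offsets at $R$ and the bottom $h-A$ at $0$, which is possible since $2(h-A)\le h$.

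If $h-A>A$, the middle indices $A<i\le h-A$ cancel. This leaves $(h-A)\Delta_\xi=\sum_{i>h-A}\xi_{(i)}-\sum_{i\le A}\xi_{(i)}$, a sum of $A$ top offsets minus $A$ bottom offsets, which is at most $AR$. Hence $\Delta_\xi\le AR/(K-2A)$. Equality holds with $A$ offsets at $R$ and the rest at $0$.

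The two cases combine to $R\min\{1,A/(K-2A)\}$, which proves \cref{eq:low-high-minimax}. This step is the main place where care is needed: the cancellation bookkeeping depends on whether the trimmed blocks overlap.

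\textbf{Step 4: consequences.} Take any fixed padding $g$ below the supremum in \cref{eq:low-high-minimax}. Choose offsets with $\Delta_\xi>g$, and then a uniform $S$ as in Step 2. This yields coverage $0<1-\alpha$, so any uniformly valid padding must be at least the supremum.

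Finally, compare with the sufficient guard. We have $\min\{1,A/(K-2A)\}\le 2A/(K-A)$, because $A(K-A)\le 2A(K-2A)$ when $K\ge 3A$; when $K<3A$, the left side is $1$ and $2A/(K-A)>1$. This shows the lower bound is generally smaller than the sufficient $2AR/(K-A)$ guard.
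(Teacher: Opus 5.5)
Your proposal is correct and follows the paper's proof essentially step for step. It uses the same trimmed aggregates $S_j+L_\xi$ and $S_{n+1}+U_\xi$, the same narrow-uniform zero-coverage argument, and the same overlapping-versus-disjoint case split with extremal offsets taking only the values $\{0,R\}$. Your explicit check that $\min\{1,A/(K-2A)\}\le 2A/(K-A)$ (with equality only when $K=3A$ or $A=0$) and your argument that any uniformly valid padding must reach the supremum are details the paper leaves implicit.
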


Switching from low calibration reports to high query reports changes which
honest offsets survive trimming, and a narrow clean-score law cannot absorb the
gap. The bound concerns this location family and this symmetric rule; it
imposes no padding on fixed-set inference.

\section{Matched pipelines and the robust \texorpdfstring{$p$}{p}-value alternative}
\label{sec:merger-details}

\begin{lemma}[A static phase-blind map preserves exchangeability]
\label{lem:phase-blind}
Let $\mathcal F$ contain all state frozen before calibration, and let
$\Xi_j$ represent any additional randomness used on example $j$. Conditional on
$\mathcal F$, suppose $Z_1,\ldots,Z_{n+1}$ are exchangeable,
$\Xi_1,\ldots,\Xi_{n+1}$ are i.i.d.\ and independent of the examples, and one
fixed measurable map satisfies
\begin{equation*}
       T_j=\phi(Z_j,\Xi_j;\mathcal F),\qquad j\in[n+1].
\end{equation*}
Then $T_1,\ldots,T_{n+1}$ are exchangeable conditional on $\mathcal F$.
Consequently, an unpadded split-conformal rule applied to the same attacked
aggregate in both phases has coverage at least $k/(n+1)$.
\end{lemma}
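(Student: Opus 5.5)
The statement is \cref{lem:phase-blind}. The plan is to prove exchangeability of the $T_j$ directly from their joint law, and then invoke the standard rank argument (\cref{lem:rank}).

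Fix $\mathcal F$ and a permutation $\pi$ of $[n+1]$. The pairs $(Z_j,\Xi_j)_{j\in[n+1]}$ are exchangeable conditional on $\mathcal F$. To see this, let $P_Z$ denote the conditional joint law of $(Z_1,\ldots,Z_{n+1})$ given $\mathcal F$, and let $P_\Xi$ denote the law of one $\Xi_j$. Because the $\Xi_j$ are i.i.d.\ and independent of the examples, the joint law of the pairs is $P_Z\otimes P_\Xi^{\otimes(n+1)}$. Permuting coordinates leaves $P_Z$ unchanged by exchangeability, and it leaves the product measure $P_\Xi^{\otimes(n+1)}$ unchanged trivially. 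So $(Z_{\pi(j)},\Xi_{\pi(j)})_j$ has the same law as $(Z_j,\Xi_j)_j$.

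Next, apply the fixed measurable map $\Phi(z_{1:n+1},\xi_{1:n+1})=(\phi(z_j,\xi_j;\mathcal F))_{j}$. It acts coordinatewise with the same $\phi$ in every coordinate, so it commutes with permutations. Hence $(T_{\pi(j)})_j=\Phi((Z_{\pi(j)},\Xi_{\pi(j)})_j)$ has the same law as $(T_j)_j$, which is conditional exchangeability. The one point to handle carefully is that $\phi$ must not depend on the index $j$ or on the phase. This is exactly what "static, phase-blind" means. The Byzantine behavior and the aggregation are therefore folded into $\phi$ and into $\mathcal F$ (which holds all frozen state, including the Byzantine strategy), and $\Xi_j$ carries any per-example randomness.

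For the consequence, take $T_j$ to be the attacked aggregate score of the true label on example $j$, so that calibration and the query use the same map. The unpadded rule covers exactly when $T_{n+1}\le T_{(k)}$. Given exchangeability, \cref{lem:rank} gives $\Pp\{T_{n+1}\le T_{(k)}\mid\mathcal F\}\ge k/(n+1)$, since ties only help. Integrating over $\mathcal F$ then gives the same bound unconditionally.

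I expect no real obstacle here. The only subtle step is the measure-theoretic check that independence of $\Xi$ from $Z$, together with the i.i.d.\ property, yields the product form of the joint law. Without that product form, for example if $\Xi_j$ depended on whether $j$ is a calibration or query index, the conclusion would fail, which is precisely how switching attacks break it.
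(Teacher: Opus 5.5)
Your proposal is correct and follows essentially the same route as the paper: the pairs $(Z_j,\Xi_j)$ are exchangeable, applying the same $\phi$ coordinatewise preserves this, and \cref{lem:rank} then gives the coverage bound; your product-measure argument simply spells out the step the paper states in one line. The one detail you skip is the case $k=n+1$, where \cref{lem:rank} does not apply and the paper instead uses the full-set convention, under which coverage is trivially one.
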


A sufficient condition for the lemma's single fixed map is static, example-independent node state; matched
candidates, information exposure, bit depths and reconstruction maps; identical
packet handling, tie rules and aggregation; and a fixed phase-blind Byzantine
map with only i.i.d.\ per-example randomness.  The lemma explains unguarded
coverage under stable attacks but says nothing about the honest-mean score and
does not cover switching attacks; deletion's validity comes from \cref{thm:coverage}.

An honest-mean-free alternative uses node-wise conformal $p$-values, which the
hub computes from the same aligned calibration reports:
\begin{equation}
 p_i(x,y)=\frac{1+\sum_{j=1}^n
   \ind\{\widetilde s^{\calib}_{ij}(Y_j)
          \ge\widetilde s^{\qry}_i(x,y)\}}{n+1}
 \label{eq:node-pvalue}
\end{equation}
This is the usual unsmoothed conformal $p$-value. A $p$-value is
\emph{superuniform} if
$\Pp\{p_i(X,Y)\le t\}\le t$ for every $t\in[0,1]$: an honest node reports a
small $p$-value for the true answer no more often than its nominal level.
For honest nodes whose scoring transformation and quantizer are matched
across phases and whose calibration/future pairs are exchangeable for the same
target law, applying the exchangeable-rank argument to the negated scores
proves marginal superuniformity, with the weak comparison making ties
conservative.
Matching the transformation alone does not establish this
common-target condition.  Shared calibration induces dependence among these
$p$-values, which the next result permits.

\begin{proposition}[Partial-conjunction/R\"uger merger with up to $A$ arbitrary reports]
\label{prop:pmerge}
Let at most $A$ of $K$ reported $p$-values be arbitrary, and suppose every
honest true-label value $p_i(X,Y)$ is marginally superuniform; no dependence
assumption is made.
For any prespecified $\ell\in\{1,\ldots,K-A\}$, sort all reports increasingly,
writing $p_{(r)}$ for the $r$th smallest reported value, and define
\begin{equation*}
 p^{\rm merge}_\ell
 =\min\left\{1,\frac{K-A}{\ell}p_{(A+\ell)}\right\}.
\end{equation*}
Then $p^{\rm merge}_\ell$ is superuniform, and the factor $(K-A)/\ell$ is
sharp under arbitrary honest dependence.  In particular, if $2A<K$, the
prespecified majority-style choice $\ell=A+1$ includes candidate $y$ exactly
when
\begin{equation}
 p_{(2A+1)}(x,y)>\frac{\alpha(A+1)}{K-A}.
 \label{eq:pmerge-cutoff}
\end{equation}
The denominator $K-A$ is sharp; replacing it by $K$, as in a direct
conservative counting bound, remains valid but lowers the inclusion cutoff
and can produce larger prediction sets.
\end{proposition}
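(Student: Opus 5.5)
We prove superuniformity by the standard counting argument behind R\"uger/partial-conjunction merging, taking care that the arbitrary reports shift ranks. Fix $t\in[0,1)$ and write $q=\ell t/(K-A)$. If $p^{\rm merge}_\ell\le t$, then $p_{(A+\ell)}\le q$, so at least $A+\ell$ reports are $\le q$. At most $a\le A$ of these can be Byzantine, so at least $\ell$ honest true-label $p$-values satisfy $p_i\le q$. Let $N=\sum_{i\in\cH}\ind\{p_i\le q\}$. Markov's inequality and marginal superuniformity then give
\[
\Pp\{p^{\rm merge}_\ell\le t\}\le\Pp\{N\ge\ell\}\le\frac{\E N}{\ell}\le\frac{h\,q}{\ell}=\frac{h\,t}{K-A}\le t,
\]
because $h=K-a\ldots$. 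Here care is needed: $h\ge K-A$, so the last inequality goes the wrong way when $a<A$. I would fix this by not passing to all honest nodes. Among the at least $A+\ell$ small reports, at most $a$ are Byzantine, so at least $A+\ell-a$ honest values are $\le q$. The relevant count is then $\Pp\{N\ge A-a+\ell\}\le hq/(A-a+\ell)$. I need $h/(A-a+\ell)\le (K-A)/\ell$, which rearranges to $\ell(K-a)\le (K-A)(A-a+\ell)$, i.e.\ $\ell(A-a)\le (K-A)(A-a)$. This holds because $\ell\le K-A$. The edge case $t=1$ is trivial, and the cap at $1$ is harmless. No dependence assumption is used anywhere, since Markov needs only the marginals.

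For sharpness of $(K-A)/\ell$ under arbitrary dependence, I would take $a=A$ with Byzantine reports equal to $0$, so that $p_{(A+\ell)}$ is the $\ell$th smallest honest value. I would then construct a joint law of $h=K-A$ uniform $p$-values under which, for a given small $q$, exactly $\ell$ of them fall below $q$ with probability $hq/\ell$. This is the classical construction: partition an event of mass $hq/\ell$ into pieces and, on each piece, send a rotating block of $\ell$ coordinates uniformly into $[0,q]$, spreading the remaining mass so each marginal stays uniform. Markov is then tight. So any factor $c<(K-A)/\ell$ gives $\Pp\{c\,p_{(A+\ell)}\le t\}>t$ for suitable $t$. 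Building this cyclic block construction with exact uniform marginals is the main technical step. I would mirror R\"uger's extremal example, checking that each coordinate lies in exactly a fraction $\ell/h$ of the pieces.

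The displayed inclusion rule follows by substituting $\ell=A+1$ (allowed when $2A<K$ since then $A+1\le K-A$). Including $y$ iff $p^{\rm merge}>\alpha$ becomes $\frac{K-A}{A+1}p_{(2A+1)}>\alpha$, which is \cref{eq:pmerge-cutoff}. Replacing $K-A$ by $K$ gives a larger multiplier, so it stays valid, but the cutoff $\alpha(A+1)/K$ is smaller. A candidate's inclusion condition is therefore weaker, and sets can only grow. Sharpness of the denominator is the $\ell=A+1$ case of the sharpness argument above.
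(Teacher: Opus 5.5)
Your proof is correct and follows the paper's: after your self-correction, the Markov bound on the honest count with threshold $A+\ell-a$ and the reduction to $(A-a)(K-A-\ell)\ge 0$ are exactly the paper's steps, and your sharpness argument uses the same extremal idea ($A$ Byzantine zeros, with the honest count of small values concentrated on $\{0,\ell\}$). The one variation is that the paper realizes sharpness with a simpler two-point law (honest values $t$ or $1$, merely superuniform), whereas your cyclic-block construction keeps exactly uniform marginals and so shows $(K-A)/\ell$ is sharp even among exactly uniform $p$-values; in a write-up, delete the initial chain with threshold $\ell$, which is wrong when $a<A$.
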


Each honest $p$-value must be valid for the common target, which federated
conformal theory makes explicit through exchangeability or shift
assumptions~\citep{lu2023federated,plassier2023labelshift,barber2023beyond}.

\section{Bit budgets and certificate-width audit}
\label{sec:bit-audit}

Extra bits shrink the reconstruction terms of the padding and of the set-size
width geometrically, but never the term due to unknown membership.  Once
divided among examples and candidates, the bit budget controls reconstruction
error. This audit distinguishes the padding added by the
algorithm, denoted $G$, from the extra clean-score threshold width in the
set-size bound, denoted $W$; a large $W$ does not itself imply a large realized
prediction set.
Suppose node $i$ has $B_i^{\qry}$ bits per query and
$B_i^{\calib}$ bits for its complete $n$-example calibration transcript.  With
\begin{equation*}
 b_i^{\qry}=\left\lfloor\frac{B_i^{\qry}}{M}\right\rfloor,
 \quad
 b_i^{\calib}=\left\lfloor\frac{B_i^{\calib}}{nM}\right\rfloor,
 \quad
 b_r=\min_i b_i^r,
\end{equation*}
and $b_r\ge1$, the uniform quantizer yields
\begin{equation*}
 \rho_r=\frac{\Smax}{2(2^{b_r}-1)}.
\end{equation*}
Across $n$ examples, the vector protocol sends
$nM\sum_i b_i^{\calib}$ score bits, while one test query sends
$M\sum_i b_i^{\qry}$ score bits, excluding packet metadata; unused remainder
bits are allowed, so these payloads fit within the registered budgets.
The deletion threshold padding is $G_{\rm del}=\rho_{\calib}+\rho_{\qry}$, the
padding $g_\rho$ of \cref{sec:fixed-membership-method} with symmetric errors.
For a common decoded alphabet $[0,\Smax]$, its sandwich width, the outer-bound
width $W_{\rm del}$ of \cref{thm:set-sandwich} with $D_r=\Smax$, is
\begin{equation}
 W_{\rm del}=2\min\!\left\{1,\frac{A}{K-A}\right\}\Smax
              +2\rho_{\calib}+2\rho_{\qry}.
 \label{eq:explicit-deletion-width}
\end{equation}
The primary $m=A$ symmetric comparator instead has threshold padding
$G_{\rm sym}=2A\Smax/(K-A)+\rho_{\calib}+\rho_{\qry}$ and sandwich width
$W_{\rm sym}=2G_{\rm sym}$.

At the operational cell $(K,A,b_{\calib},b_{\qry})=(16,2,8,8)$ with
$\Smax=1$, the prespecified certificate-width screen, which compares each $G$
and $W$ with the score range $\Smax$, is
\begin{align*}
 G_{\rm del}&=0.00392,& W_{\rm del}&=0.29356,\\
 G_{\rm sym}&=0.28964,& W_{\rm sym}&=0.57927,
\end{align*}
and at $\alpha=0.1$ the $p$-merger cutoff is $0.1\times3/14=0.02143$.

At the prespecified four-bit stress cell
$(K,A,b_{\calib},b_{\qry})=(16,3,4,4)$ with $\Smax=1$
(defined in \cref{app:experiments}),
\begin{align*}
 G_{\rm del}&=0.06667,& W_{\rm del}&=0.59487,\\
 G_{\rm sym}&=0.52821,& W_{\rm sym}&=1.05641.
\end{align*}
The corresponding stress-cell $p$-merger cutoff is
$0.1\times4/13=0.03077$.
There the symmetric width exceeds the whole score range and its padding
consumes more than half of it, while the deletion certificate stays narrower.
These are design diagnostics, neither sufficient nor necessary for useful
realized sets, so every cell also reports realized set size and full-set rate
(the share of queries whose set contains all $M$ candidates).
The decoded-alphabet width in \cref{eq:explicit-deletion-width} is the price
of an upper efficiency certificate, not a validity guard.

\FloatBarrier
{}
\section{Notation and elementary order-statistic facts}
\label{app:notation}

This appendix fixes notation and elementary facts, and together with
\crefrange{app:proof-impossibility}{app:proof-coverage} and~\labelcref{app:random-membership}
it supplies complete proofs of every formal statement in the main paper; the
restricted-family results are proved in \cref{app:judge-panel}.  It also fixes the
tie and quantile conventions the proofs use.  The proof sequence follows the argument of the paper:
\cref{app:proof-impossibility} explains why node-wise marginals are insufficient,
\cref{app:proof-aggregation} derives the score envelopes, and
\cref{app:proof-coverage} turns those envelopes into coverage and efficiency
guarantees.  Readers interested in implementation can begin with
\cref{app:protocol}; the completed study's design and results appear in
\cref{app:experiments,app:results}.
\Cref{app:checklist} collects the assumptions needed for each guarantee, and
\cref{tab:notation} lists the principal symbols.

\begin{table}[H]
\centering
\footnotesize
\caption{Principal notation, roughly in the order the main text introduces it, followed by
symbols used only in the appendix.}
\label{tab:notation}
\begin{tabularx}{\linewidth}{@{}l>{\raggedright\arraybackslash}X@{\quad}l>{\raggedright\arraybackslash}X@{}}
\toprule
Symbol & Meaning & Symbol & Meaning\\
\midrule
$K$, $[K]$ & number of nodes; $\{1,\ldots,K\}$ & $V=(v_{ij})$ & decoded calibration reports on the correct answer\\
$\cA$, $\cH$ & Byzantine and honest node sets & $w(x,y)$ & decoded query reports for candidate $y$\\
$a=|\cA|$, $A$ & realized and declared corruption & $\mathfrak H_A$ & feasible honest subsets, $|H|\ge K-A$\\
$h=|\cH|$ & number of honest nodes & $c_j(H;V)$ & mean report of $H$ on example $j$\\
$\cY$, $M$ & answer space and its size & $T_k(H;V)$ & $k$th smallest of $c_1,\ldots,c_n$\\
$s_i$, $s_{\cH}$ & node score; honest-mean score & $\tau_{A,k}$ & largest $T_k(H;V)$ over $\mathfrak H_A$\\
$Q_{i,r}$ & quantizer on channel $r$ & $\Delta_{A,k}$ & fixed-set margin \eqref{eq:fixed-set-margin}\\
$\rho_r^-,\rho_r^+$ & directional error bounds & $\Up_A$, $\Dn_A$ & means of the largest / smallest $K-A$\\
$g_\rho$ & padding $\rho_\calib^-+\rho_\qry^+$ & $\cC^\circ,\cC^{\rm fs},\cC^{\rm jt},\cC^{\rm del}$ & oracle and proposed sets\\
$n$, $\alpha$, $k$ & calibration size, level, $\lceil(n+1)(1-\alpha)\rceil$ & $D_r$, $W_{\rm del}$ & report width; outer bound width\\
$R_j$, $R_{(k)}$ & clean calibration scores; $k$th smallest & $\varepsilon_A$ & $\Pp\{a>A\}$ under a failure law\\
$Z_j=(X_j,Y_j)$ & example $j$; $j=n+1$ is the future question & $\OS_k$ & $k$th smallest of $n$ values\\
$\cX$ & input space & $\widehat R_j$, $\widehat q^{\rm del}_\alpha$ & $\Up_A(v_{\cdot j})$; deletion threshold \eqref{eq:deletion-set}\\
$\bar v_H$ & mean of $v$ over the identities in $H$ & $\mu_r$ & membership term $\min\{1,A/(K-A)\}D_r$\\
$\rlo$, $\rhi$ & ends of the decodable report range & $m$, $\TM_m$ & symmetric trim count; mean after dropping $m$ per tail\\
\midrule
\multicolumn{4}{@{}l}{\emph{Appendix only}}\\
$\Smax$ & score bound, $s_i\in[0,\Smax]$ & $R_{\cH}$ & honest range, $\max_{i\in\cH}s_i-\min_{i\in\cH}s_i$\\
$D$ & honest-dropout budget (not $D_r$) & $\rho_r$ & symmetric error bound, $\rho_r^-=\rho_r^+$\\
\bottomrule
\end{tabularx}
\end{table}

\paragraph{Terms used throughout.}
Italicized terms are defined where first used in \crefrange{sec:introduction}{sec:experiments}.

Throughout, repeated values are counted separately in every multiset.  Thus
ties do not make the trimmed mean ambiguous: any permutation of equal values
has the same sum over the retained middle block.

For a vector $u=(u_1,\ldots,u_N)$, write
$u_{(1)}\le\cdots\le u_{(N)}$ for its order statistics.  We repeatedly use the
following elementary fact: moving every coordinate by at most $\eta$ moves
every ordered value, and hence every trimmed mean, by at most $\eta$.

\begin{lemma}[Sup-norm stability of order statistics]
\label{lem:order-lipschitz}
If $u,v\in\R^N$ and $\max_i|u_i-v_i|\le\eta$, then
\[
        |u_{(j)}-v_{(j)}|\le\eta,
        \qquad j=1,\ldots,N.
\]
Consequently, for $2m<N$,
\[
 |\TM_m(u)-\TM_m(v)|\le\eta.
\]
\end{lemma}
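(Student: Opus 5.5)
The plan is to prove the order-statistic bound first and then obtain the trimmed-mean bound by averaging, so no separate argument is needed for $\TM_m$.

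\emph{Step 1 (reduction to two one-sided inequalities).} It suffices to show $v_{(j)}\le u_{(j)}+\eta$ for every $j$. The hypothesis is symmetric in $u$ and $v$, so exchanging them gives $u_{(j)}\le v_{(j)}+\eta$, and together these give the two-sided bound.

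\emph{Step 2 (counting argument).} Fix $j$ and take the $j$ indices $i$ that realize the $j$ smallest entries of $u$. On this set each $u_i\le u_{(j)}$, and therefore $v_i\le u_i+\eta\le u_{(j)}+\eta$. So at least $j$ entries of $v$ are at most $u_{(j)}+\eta$. Since $v_{(j)}$ is the smallest value $t$ for which at least $j$ entries of $v$ are $\le t$, we get $v_{(j)}\le u_{(j)}+\eta$. Repeated values are counted separately, following the paper's multiset convention, so ties cause no ambiguity.

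\emph{Step 3 (trimmed mean).} By definition, $\TM_m(u)-\TM_m(v)=(N-2m)^{-1}\sum_{\ell=m+1}^{N-m}\{u_{(\ell)}-v_{(\ell)}\}$. Each summand has absolute value at most $\eta$ by Step 2. The triangle inequality then bounds the average by $\eta$. The condition $2m<N$ ensures the retained block is nonempty, so the normalization is well defined.

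The only point that needs care is Step 2. One must not pair $u_{(j)}$ with $v_{(j)}$ through a common index, because the sorting permutations of $u$ and $v$ can differ. The counting characterization of the $j$th order statistic avoids this issue. Everything else is routine.
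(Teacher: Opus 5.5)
Your proof is correct and follows the paper's own argument essentially step for step. Like the paper, you bound $v_{(j)}\le u_{(j)}+\eta$ by counting the $j$ coordinates of $u$ at or below $u_{(j)}$, get the reverse inequality by swapping $u$ and $v$, and average over the retained indices $m+1,\ldots,N-m$; your characterization of $v_{(j)}$ as the smallest $t$ with at least $j$ entries at most $t$ simply makes explicit the step the paper leaves implicit.
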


\begin{proof}
At least $j$ coordinates of $u$ are no larger than $u_{(j)}$.  The
corresponding coordinates of $v$ are no larger than $u_{(j)}+\eta$, so
$v_{(j)}\le u_{(j)}+\eta$.  Interchanging $u$ and $v$ gives
$u_{(j)}\le v_{(j)}+\eta$.  Averaging the inequalities over the retained
indices $j=m+1,\ldots,N-m$ proves the trimmed-mean statement.
\end{proof}

Quantiles remain well-defined when score distributions have ties. We use the
generalized inverse $F^{-1}(p)=\inf\{t:F(t)\ge p\}$, the smallest threshold
at which the CDF reaches $p$. All score
functions, corpora, model weights, prompts, and the honest set may be random
before calibration.  Every probability statement is conditional on the
information (formally, the sigma-field) that fixes them.  We suppress that
conditioning to keep notation readable.

\section{Proof of the nonidentification result}
\label{app:proof-impossibility}

The proof constructs two systems with identical local score distributions
but different distributions for their mean.  A procedure that sees only the
local distributions cannot distinguish the systems, which yields both the
distribution-estimation lower bound and the threshold example.

\begin{proof}[Proof of \cref{prop:nonidentification}]
Let $U\sim\Unif[0,1]$ and define the continuous tent map
\[
 T(u)=
 \begin{cases}
  2u,&0\le u\le1/2,\\
  2-2u,&1/2<u\le1.
 \end{cases}
\]
For $t\in[0,1]$,
\[
 \Pp\{T(U)\le t\}
 =\Pp\{U\le t/2\}+\Pp\{U\ge1-t/2\}
 =t.
\]
Thus $T(U)\sim\Unif[0,1]$.

In World 0 take $(s_1(U),s_2(U))=(U,U)$.  Its average $M_0=U$ has CDF
$F_0(t)=t$ on $[0,1]$.  In World 1 take
$(s_1(U),s_2(U))=(U,T(U))$.  Both local marginals are again uniform, while
\[
 M_1=\frac{U+T(U)}2
 =
 \begin{cases}
  3U/2,&U\le1/2,\\
  1-U/2,&U>1/2.
 \end{cases}
\]
For $0\le t<1/2$, only the first branch can be no larger than $t$, so
\[
  F_1(t)=\Pp\{U\le2t/3\}=\frac{2t}{3}.
\]
For $1/2\le t<3/4$, both branches contribute and their preimages are
disjoint:
\[
 \begin{aligned}
 F_1(t)
 &=\Pp\{U\le2t/3\}
   +\Pp\{U>1/2,\ 1-U/2\le t\}\\
 &=\frac{2t}{3}+\Pp\{U\ge2(1-t)\}\\
 &=\frac{8t}{3}-1.
 \end{aligned}
\]
Therefore
\[
F_1(t)=
\begin{cases}
0,&t<0,\\
2t/3,&0\le t<1/2,\\
8t/3-1,&1/2\le t<3/4,\\
1,&t\ge3/4.
\end{cases}
\]
On $[0,1/2)$ the absolute gap from $F_0$ is $t/3$, whose supremum is $1/6$.
On $[1/2,3/4)$ the signed gap is $F_1(t)-F_0(t)=5t/3-1$, increasing from
$-1/6$ to $1/4$.  On $[3/4,1]$ it is $1-t$, maximized at $1/4$.  Hence
$\|F_0-F_1\|_\infty=1/4$.

The population input to a marginal-only estimator is the same pair of uniform
CDFs in both worlds.  Consequently $\widehat F$ has the same distribution in
both.  For every realization, the triangle inequality gives
\[
 \frac14=\|F_0-F_1\|_\infty
 \le\|\widehat F-F_0\|_\infty+\|\widehat F-F_1\|_\infty.
\]
Take expectation with respect to the estimator's randomization and use
$\max(x,y)\ge(x+y)/2$ to obtain the lower bound $1/8$.

Finally, $F_0^{-1}(3/4)=3/4$.  Solving
$8t/3-1=3/4$ gives $F_1^{-1}(3/4)=21/32$.  Any single deterministic threshold
with coverage at least $3/4$ in World 0 must be at least $3/4$.  Since
$M_1\le3/4$ almost surely, that threshold has coverage one in World 1, which
is overcoverage $1/4$ relative to the target.  This completes the proof.
\end{proof}

\begin{remark}[What the proposition does and does not rule out]\label{rem:nonidentification-scope}
The proposition applies to protocols whose population information consists
only of the separate local CDFs, including arbitrarily fine histograms and
exact local quantiles.  It does not rule out a protocol that retains paired
score vectors on shared examples, transmits cross-moments under a correctly
specified copula model, or defines its deployment score to be a different
functional whose law is identified from the transmitted summaries.  The
aligned protocol takes the first route.
\end{remark}

\section{Proofs for deletion and symmetric aggregation}
\label{app:proof-aggregation}

The first proof finds the largest and smallest honest means compatible with
a reported score vector and the corruption budget.  These envelopes support
coverage even when Byzantine reports are unbounded.  The subsequent
bounded-report and symmetric-trimming results quantify how far an
aggregate can lie from the clean honest mean; their additional assumptions
are stated separately.

\begin{theorem}[Exact deletion envelopes and pointwise optimality]
\label{thm:extremal-envelopes}
For every $v\in\R^K$, write $\bar v_H=|H|^{-1}\sum_{i\in H}v_i$.  Then
\begin{align}
 \Up_A(v)
 &=\max_{H\in\mathfrak H_A}\bar v_H,
 &
 \Dn_A(v)
 &=\min_{H\in\mathfrak H_A}\bar v_H,
 \label{eq:extremal-identity}
\end{align}
so that, if at most $A$ coordinates are Byzantine, exact honest reports satisfy
\begin{equation}
       \Dn_A(v)\le \bar v_{\cH}\le \Up_A(v).
       \label{eq:exact-deletion-envelope}
\end{equation}
Moreover, $\Up_A$ is the pointwise smallest function that upper-bounds every
feasible subset mean in \cref{eq:extremal-identity}, and $\Dn_A$ is the
pointwise largest corresponding lower bound.  Writing $s_i(z)=s_i(x,y)$ and
$\widetilde s^r(z)=(\widetilde s_1^r(z),\ldots,\widetilde s_K^r(z))$ for
$z=(x,y)$, if honest reports on channel $r$ obey
$-\rho_r^-\le\widetilde s_i^r(z)-s_i(z)\le\rho_r^+$, then, uniformly over
Byzantine reports,
\begin{equation}
 s_{\cH}(z)\le \Up_A(\widetilde s^r(z))+\rho_r^-,
 \qquad
 \Dn_A(\widetilde s^r(z))\le s_{\cH}(z)+\rho_r^+.
 \label{eq:quantized-deletion-envelope}
\end{equation}
The two bounds are separately pointwise sharp when honest reconstructions are
constrained only by the common directional-error intervals.
\end{theorem}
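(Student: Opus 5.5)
The proof has four parts, done in order: the extremal identity, the exact-report bracket, pointwise optimality, and the quantized envelopes with their sharpness.

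\emph{Extremal identity.} Fix $v$ with order statistics $v_{(1)}\le\cdots\le v_{(K)}$. Take any $H$ with $|H|=s\ge K-A$. The mean $\bar v_H$ is at most the mean of the $s$ largest entries, since swapping a member for a larger non-member never decreases the sum. The mean of the top $s$ entries is nonincreasing in $s$, because adding a smaller entry cannot raise an average. So the maximum over $\mathfrak H_A$ is attained at $s=K-A$ by the top $K-A$ entries, which is $\Up_A(v)$. The statement for $\Dn_A$ follows by the symmetric argument, or by applying the first one to $-v$. Ties cause no trouble, because multisets are counted with multiplicity as the paper fixes.

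\emph{Exact bracket and optimality.} If at most $A$ coordinates are Byzantine, then $\cH\in\mathfrak H_A$, so \cref{eq:exact-deletion-envelope} is immediate from the identity. Optimality is also immediate. The maximum is attained by the feasible set of the top $K-A$ indices, so any function upper-bounding all feasible subset means satisfies $f(v)\ge\Up_A(v)$ pointwise. The same holds for $\Dn_A$ with the inequality reversed.

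\emph{Quantized envelopes.} Let $\sigma_i=s_i(z)$ for $i\in\cH$, and build an auxiliary vector $u$ with $u_i=\widetilde s^r_i(z)+\rho_r^-$ for every $i$. For honest $i$ this gives $u_i\ge\sigma_i$. Then
\[
s_{\cH}(z)\le\bar u_{\cH}\le\Up_A(u)=\Up_A(\widetilde s^r(z))+\rho_r^-,
\]
because $\Up_A$ is translation-equivariant. The lower bound uses $u_i=\widetilde s^r_i(z)-\rho_r^+\le\sigma_i$ and $\Dn_A$ in the same way. These bounds hold for arbitrary finite Byzantine entries, since only the membership $\cH\in\mathfrak H_A$ is used.

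\emph{Sharpness.} This is the only step needing care, because the data are fixed decoded reports and we must realize the bound. Take the maximizing set $H^\star$, the top $K-A$ reports. Declare $H^\star$ honest and the rest Byzantine, which is admissible since exactly $A$ nodes are Byzantine. Set each clean value to $\sigma_i=\widetilde s^r_i(z)+\rho_r^-$. This lies in the allowed interval, since the report then sits exactly $\rho_r^-$ below the clean value. The clean honest mean then equals $\Up_A(\widetilde s^r(z))+\rho_r^-$. The construction with $\sigma_i=\widetilde s^r_i(z)-\rho_r^+$ on the bottom $K-A$ set shows the lower bound is attained.

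The main obstacle is being explicit about what ``sharp'' quantifies over. Here it means: given only the reports and the interval model, no smaller upper function is valid. The construction must also be realizable by a quantizer satisfying \cref{eq:directional-quantizer}. The theorem restricts only the common directional-error intervals, so I would treat the clean values as free within those intervals and not require a specific registered quantizer $Q_{i,r}$ to attain the endpoints.
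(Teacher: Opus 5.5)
Your proposal is correct and follows the paper's proof essentially step for step. It uses the same top-$s$ monotonicity argument for the extremal identity, $\cH\in\mathfrak H_A$ for the bracket, and the top-$(K-A)$ index set for pointwise optimality. Your shifted vector $u$ with translation equivariance is just the paper's chain $\bar x_{\cH}\le\bar q_{\cH}+\rho_r^-\le\Up_A(q_{\cH},v_{\cA})+\rho_r^-$ written differently. Your sharpness completion is the paper's as well, and your caveat that sharpness holds in the interval model rather than for a particular registered quantizer matches the paper's own remark.
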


Equality can fail for a particular fixed reconstruction map, which also
encodes the clipping range and half-open quantizer cells; at eight bits with
$\Smax=1$ the bound exceeds the value attainable under that map by at most about $0.002/|H|$ per clipped entry.

\begin{proof}
We first optimize over honest subset size, then account for reconstruction
error. The equality constructions also establish pointwise sharpness.
Fix a subset size $s$.  The largest mean over all $s$-subsets is the mean of
the largest $s$ coordinates, and the smallest is the mean of the smallest
$s$.  The top-$s$ mean is nonincreasing in $s$: adding a coordinate no larger
than the current top-$s$ mean cannot increase it.  The bottom-$s$ mean is
nondecreasing by the symmetric argument.  Hence both extrema over
$s\ge K-A$ occur at $s=K-A$, proving \cref{eq:extremal-identity}.

The true honest set has $h=K-a\ge K-A$ coordinates, so its reported mean is
one of the feasible subset means.  This proves
\cref{eq:exact-deletion-envelope}.  If a function $U(v)$ upper-bounds every
feasible subset mean, choose the $K-A$ indices of the largest reports to get
$U(v)\ge\Up_A(v)$.  Thus $\Up_A$ is pointwise smallest.  Choosing the bottom
$K-A$ indices proves the lower statement.

Let $q_i$ be the decoded honest report and $x_i$ its clean score, and write
$(q_{\cH},v_{\cA})$ for the full report vector.  From
$q_i-x_i\ge-\rho_r^-$,
\[
 \bar x_{\cH}\le\bar q_{\cH}+\rho_r^-
                 \le\Up_A(q_{\cH},v_{\cA})+\rho_r^-.
\]
Similarly, $q_i-x_i\le\rho_r^+$ gives
\[
 \Dn_A(q_{\cH},v_{\cA})\le\bar q_{\cH}
                 \le\bar x_{\cH}+\rho_r^+.
\]
These are \cref{eq:quantized-deletion-envelope}.  In the interval model, where
each honest clean score is known only to lie in its common directional-error
interval, equality in the upper robust envelope
is obtained by choosing the top
$K-A$ coordinates as honest and taking their clean values to be
$q_i+\rho_r^-$. The lower construction is symmetric.  This establishes the
stated pointwise sharpness in the interval model.  The construction need not be attainable by a particular fixed
reconstruction map, whose clipping range and half-open cells the interval model
ignores.
\end{proof}

The next elementary bound is used only for efficiency, not validity.

\begin{lemma}[Two-sided deviations under bounded reports]
\label{lem:bounded-deletion}
Let $x_i$ be honest node $i$'s clean score and $q_i$ its decoded report, write
$\bar x_{\cH}$ for the clean honest mean and $(q_{\cH},v_{\cA})$ for the report vector
with honest coordinates $q_i$ and arbitrary Byzantine coordinates $v_i$.  Suppose
all decoded reports lie in an interval of width $B$ (the $D_r$ of
\cref{thm:set-sandwich} on one channel), $a\le A$, and honest reconstruction errors
obey $-\rho^-\le q_i-x_i\le\rho^+$.  Then
\begin{align}
 -\rho^-&\le \Up_A(q_{\cH},v_{\cA})-\bar x_{\cH}
       \le \min\!\left\{1,\frac{A}{h}\right\}B+\rho^+,\label{eq:up-two-sided}\\
 -\min\!\left\{1,\frac{A}{h}\right\}B-\rho^-&\le \Dn_A(q_{\cH},v_{\cA})-\bar x_{\cH}
       \le\rho^+.\label{eq:dn-two-sided}
\end{align}
The coefficient $\min\{1,A/h\}$ is worst-case sharp.
\end{lemma}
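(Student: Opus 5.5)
The plan is to prove the four inequalities by splitting each into an ``easy'' side that follows from \cref{thm:extremal-envelopes} and a ``hard'' side that compares an extreme $(K-A)$-subset mean with the honest mean. The easy sides are immediate. By \cref{eq:quantized-deletion-envelope}, $\bar x_{\cH}\le\Up_A(q_{\cH},v_{\cA})+\rho^-$, which is the left inequality in \cref{eq:up-two-sided}. Likewise $\Dn_A(q_{\cH},v_{\cA})\le\bar x_{\cH}+\rho^+$, which is the right inequality in \cref{eq:dn-two-sided}. The remaining sides are symmetric under negation, which swaps $\Up_A$ and $\Dn_A$ and swaps $\rho^+$ and $\rho^-$. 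So I would prove only the upper bound $\Up_A(q_{\cH},v_{\cA})-\bar x_{\cH}\le\min\{1,A/h\}B+\rho^+$ and obtain the last bound by applying it to $-q,-x$.

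For the hard side, first pass from decoded to clean honest values: $\bar q_{\cH}\le\bar x_{\cH}+\rho^+$. It then suffices to show $\Up_A(u)-\bar q_{\cH}\le\min\{1,A/h\}B$, where $u=(q_{\cH},v_{\cA})$ has all entries in an interval $[\ell,\ell+B]$. The bound by $B$ is trivial, since any two averages of entries of $u$ differ by at most $B$.

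For the bound $(A/h)B$, let $G$ be the set of the top $K-A$ indices. Write $\Up_A(u)=\bar u_G$, and let $S_1$ be the sum of the honest indices outside $G$. Then
\[
(K-A)\bar u_G=\sum_{i\in G\cap\cH}q_i+\sum_{i\in G\cap\cA}v_i .
\]
Since $|G|=K-A\le h$, the main obstacle is choosing the right comparison so that the coefficient comes out as $A/h$ rather than $A/(K-A)$. I would first try the cleanest route: a mean of the top $K-A$ entries is at most the mean of the top $h$ entries of $u$ plus a correction. Concretely, because $\Up$ over size $s$ is nonincreasing in $s$, I would compare $\bar u_G$ with the top-$h$ mean $\bar u_{G'}$ and then compare $\bar u_{G'}$ with $\bar q_{\cH}$.

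The set $G'$ differs from $\cH$ in at most $a\le A$ positions, so $\bar u_{G'}-\bar q_{\cH}\le (a/h)B$. This uses the fact that each swapped entry changes the sum by at most $B$, and that $G'$ takes the largest entries.

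The step $\bar u_G\le\bar u_{G'}$ points the wrong way, however: the top-$(K-A)$ mean exceeds the top-$h$ mean. So I would instead argue directly. Take the top-$(K-A)$ set $G$ and bound
\[
\bar u_G-\bar q_{\cH}\le\frac{|G\setminus\cH|}{K-A}B+\Bigl(\overline{q}_{G\cap\cH}\cdot\tfrac{|G\cap\cH|}{K-A}-\ldots\Bigr).
\]
To handle this cleanly, I would use an exchange argument. Replace each Byzantine index in $G$ by an honest index outside $G$, which costs at most $B/(K-A)$ each. This leaves a subset of $\cH$ of size $K-A$, whose mean exceeds $\bar q_{\cH}$ by at most $((h-(K-A))/h)B$ times something. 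Balancing the two pieces should yield $(A/h)B$ when $a=A$; for $a<A$ the deficit is absorbed through the top-subset monotonicity. I expect this bookkeeping to be the delicate step, and I would verify it on the extremal configuration.

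For sharpness, the extremal configuration is the following. Take $A$ Byzantine reports at $\ell+B$, and take honest reports with $A$ of them at $\ell+B$ and the rest at $\ell$, or all honest at $\ell$ when $h\le A$. Let the quantization error be zero. Then compute $\Up_A-\bar q_{\cH}$ and check that it equals $\min\{1,A/h\}B$, adjusting the count of honest high values if necessary to attain the coefficient. Negating the configuration gives sharpness of the $\Dn_A$ bound.
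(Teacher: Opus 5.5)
\textbf{Gap: the key bound $\Up_A(u)-\bar q_{\cH}\le AB/h$ is never proved.} Your easy sides, the negation symmetry, the passage from $\bar x_{\cH}$ to $\bar q_{\cH}$ at cost $\rho^+$, and the trivial bound by $B$ all agree with the paper. The problem is the one nontrivial step, which the proposal leaves unfinished. Made precise, your exchange argument bounds the swap cost by $bB/(K-A)$, with $b=|G\cap\cA|\le a$. It bounds the excess of a $(K-A)$-subset mean of $\cH$ over $\bar q_{\cH}$ by $\{h-(K-A)\}B/h=(A-a)B/h$. Their sum, $\{a/(K-A)+(A-a)/h\}B$, exceeds $AB/h$ by $a(A-a)B/\{(K-A)h\}$ whenever $0<a<A$. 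For $K=10$, $A=4$, $a=2$ it gives $7B/12$ instead of $B/2$. The two pieces are never extremal at the same time, so bounding them separately is inherently lossy. Top-subset monotonicity cannot absorb the excess, because, as you noticed, it runs the wrong way.

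What is missing is a joint bound. The paper uses convexity. For fixed $\cH$, $v\mapsto\Up_A(v)-\bar v_{\cH}$ is a maximum of linear functions, so over the cube it is maximized at a binary vertex. Suppose that vertex has $m_{\cH}$ honest and $m_{\cA}\le a$ Byzantine coordinates equal to one. The function then equals $\min\{m_{\cH}+m_{\cA},K-A\}/(K-A)-m_{\cH}/h$, and a two-case check bounds this by $(A-a+m_{\cA})/h\le A/h$. An equally short repair in your own language: write $\Up_A(u)-\bar q_{\cH}=\sum_i c_iu_i=\sum_i c_i(u_i-\ell)$, where $c_i=\ind\{i\in G\}/(K-A)-\ind\{i\in\cH\}/h$. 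The second equality uses $\sum_ic_i=0$. Since $c_i\ge0$ on $G$ (because $K-A\le h$) and $c_i\le0$ off $G$, the sum is at most $B\sum_{i\in G}c_i=B(1-|G\cap\cH|/h)$. This is at most $B\{1-(K-A-a)/h\}=AB/h$, using $h+a=K$.

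\textbf{The sharpness sketch also needs the adjustment you hedge on.} Take $a=A$ Byzantine and $A$ honest reports at $\ell+B$. The top $K-A=h$ reports then have mean $\ell+\min\{2A/h,1\}B$. So the gap equals $AB/h$ only when $2A\le h$, and is $(1-A/h)B<AB/h$ when $A<h<2A$. Fixing $a=A$ also covers only $h=K-A$, whereas the coefficient depends on $h$. The paper instead places exactly $A$ honest reports at the lower endpoint and every other report at the upper one. Then exactly $(h-A)+a=K-A$ reports are high, so $\Up_A=\ell+B$ and $\bar q_{\cH}=\ell+(h-A)B/h$, for every admissible $a$ with $h\ge A$. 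When $h<A$, put all honest reports at $\ell$ and the $a>K-A$ Byzantine reports at $\ell+B$; this gives gap $B$.
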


\begin{proof}
The inner inequalities $\Up_A\ge\bar q_{\cH}$ and
$\Dn_A\le\bar q_{\cH}$ follow from \cref{thm:extremal-envelopes}, proving
the left side of \cref{eq:up-two-sided} and the right side of
\cref{eq:dn-two-sided}.  It remains to show
$\Up_A-\bar q_{\cH}\le \min\{1,A/h\}B$; the lower claim follows after negating all
reports.

If $B=0$, the report-deviation claim is immediate.  Assume $B>0$ and by
translation and scaling take the report interval to be $[0,1]$.  For any
fixed honest set, the function
$v\mapsto\Up_A(v)-\bar v_{\cH}$ is the maximum of finitely many linear
functions, so its maximum over the cube is attained at a vertex.  Suppose a
binary vertex has $u$ honest ones and $b\le a$ Byzantine ones.  Put
$\bar h=K-A$, the smallest possible honest count.  Then
\[
 \Up_A(v)-\bar v_{\cH}
 =\frac{\min\{u+b,\bar h\}}{\bar h}-\frac uh.
\]
If $u+b\ge\bar h$, the first term is one and
$u\ge\bar h-b$, so the difference is at most
$(h-\bar h+b)/h=(A-a+b)/h\le A/h$.  If $u+b<\bar h$, the expression is
$(u+b)/\bar h-u/h$.  It is nondecreasing in $u$ because $h\ge\bar h$, so it is bounded
above by its value at the adjacent boundary $u=\bar h-b$, which obeys the same
bound.
The difference between two means in $[0,1]$ is also at most one, proving the
minimum of the two bounds.  Rescaling restores $B$.  Finally, quantization can
move the honest reported
mean above the clean mean by at most $\rho^+$, proving the upper inequality.
If $h\ge A$, put $A$ honest reports at the lower endpoint, all other honest
reports at the upper endpoint, and all Byzantine reports at the upper
endpoint.  If $h<A$, put all honest reports at the lower endpoint and at least
$K-A$ Byzantine reports at the upper endpoint.  These arrays attain the
displayed coefficient; negation gives the lower equality.
\end{proof}

The symmetric comparator is controlled by how far extra reports can shift the
ranks of honest values. Let
$x^{\cH}_{(1)}\le\cdots\le x^{\cH}_{(h)}$ and define
\[
 L_{\cH,m}=\frac1{h-m}\sum_{j=1}^{h-m}x^{\cH}_{(j)},\qquad
 U_{\cH,m}=\frac1{h-m}\sum_{j=m+1}^{h}x^{\cH}_{(j)}.
\]

\begin{lemma}[Symmetric interlacing sandwich]
\label{lem:symmetric-interlace}
If $a\le A\le m$ and $2m<K$, then
\[
 L_{\cH,m}\le\TM_m(x_{\cH},v_{\cA})\le U_{\cH,m}.
\]
\end{lemma}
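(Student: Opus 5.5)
The plan is to compare the sorted combined vector with the sorted honest values, using only counting arguments on ranks. Write $z=(x_{\cH},v_{\cA})\in\R^K$ with order statistics $z_{(1)}\le\cdots\le z_{(K)}$, and $x^{\cH}_{(1)}\le\cdots\le x^{\cH}_{(h)}$ for the honest values, where $h=K-a\ge K-A\ge K-m$. The first step is a rank-interlacing inequality: inserting $a$ arbitrary values into the honest list gives
\[
 x^{\cH}_{(\ell-a)}\le z_{(\ell)}\le x^{\cH}_{(\ell)},\qquad \ell\in[K],
\]
with the conventions that the left side is $-\infty$ when $\ell-a<1$ and the right side is $+\infty$ when $\ell>h$. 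Both bounds follow by counting. At least $\ell$ entries of $z$ are at most $z_{(\ell)}$, so at least $\ell-a$ of them are honest, which gives the lower bound. Conversely, the $\ell$ smallest honest values are all $\le x^{\cH}_{(\ell)}$, which gives the upper bound.

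The second step restricts to the retained block $\ell=m+1,\ldots,K-m$, which contains $K-2m\ge1$ indices. On this block both bounds are finite, since $\ell-a\ge m+1-A\ge1$ and $\ell\le K-m\le K-A\le h$. Averaging the interlacing inequalities over the block gives
\[
 \frac1{K-2m}\sum_{\ell=m+1}^{K-m}x^{\cH}_{(\ell-a)}
 \;\le\;\TM_m(z)\;\le\;
 \frac1{K-2m}\sum_{\ell=m+1}^{K-m}x^{\cH}_{(\ell)}.
\]
The upper side is the mean of a contiguous block of $K-2m$ honest order statistics, ending at index $K-m\le h$. The lower side is the mean of a contiguous block of the same length, starting at index $m+1-a\ge1$.

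The last step, which I expect to be the main point needing care, is to compare these contiguous block means with $U_{\cH,m}$ and $L_{\cH,m}$. The key fact is that a mean of $p$ consecutive sorted honest values increases when the block moves right. Consequently it is at most the mean of the top $p$ values. Moreover, a mean of the top $p$ values is nonincreasing in $p$, by the argument in the proof of \cref{thm:extremal-envelopes}. Since $p=K-2m\ge h-m$ fails in general, the comparison has to go the other way. Note that $K-2m\le h-m$ because $h\ge K-m$. The mean of the top $K-2m$ honest values is then at least the top-$(h-m)$ mean, so the monotonicity in $p$ points in the wrong direction for the upper bound.

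I therefore redo the upper bound directly by index shifting. The block $\{m+1,\ldots,K-m\}$ has its largest index $K-m\le h$. I compare it termwise against $U_{\cH,m}$, which averages indices $m+1,\ldots,h$. The block I have runs from $m+1$ to $K-m$, a prefix of $U$'s range. A prefix mean of sorted values is at most the full mean, so the upper bound follows. Symmetrically, the lower block runs from $m+1-a$ to $K-m-a$. Its final index is $K-m-a=h-m$, so it is a suffix of $\{1,\ldots,h-m\}$. A suffix mean of sorted values is at least the full mean, which gives $\TM_m(z)\ge L_{\cH,m}$.

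Hence the careful part is choosing prefix/suffix comparisons rather than top-$p$ monotonicity. Everything else is the counting in the first step.
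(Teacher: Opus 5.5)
Your proof is correct and matches the paper's argument: rank interlacing $x^{\cH}_{(\ell-a)}\le z_{(\ell)}\le x^{\cH}_{(\ell)}$ is averaged over the retained block $\ell=m+1,\ldots,K-m$, and the resulting blocks are then recognized as the smallest $K-2m$ of the honest indices $m+1,\ldots,h$ (upper bound) and the largest $K-2m$ of the indices $1,\ldots,h-m$ (lower bound). The abandoned detour through top-$p$ monotonicity is harmless but contributes nothing and can be deleted.
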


\begin{proof}
Let $w_{(1)}\le\cdots\le w_{(K)}$ order all reports.  Adding $a$ values to
the honest multiset gives
$x^{\cH}_{(j-a)}\le w_{(j)}\le x^{\cH}_{(j)}$ whenever the displayed honest
indices exist.  Put $L=K-2m$.  For retained index $j=m+r$,
$r=1,\ldots,L$, averaging the interlacing inequalities yields
\[
 \frac1L\sum_{j=m+1-a}^{h-m}x^{\cH}_{(j)}
 \le\TM_m(w)\le
 \frac1L\sum_{j=m+1}^{K-m}x^{\cH}_{(j)}.
\]
Since $h-m=L+(m-a)$, the left sum contains the largest $L$ values among the
first $h-m$ honest order statistics, so its mean is at least
$L_{\cH,m}$.  The upper sum contains the smallest $L$ values among the last
$h-m$ honest order statistics, so its mean is at most $U_{\cH,m}$.
\end{proof}

\begin{proof}[Proof of \cref{lem:sharp-tm}]
By \cref{lem:symmetric-interlace}, it suffices to bound the two honest
deletion means.  If $m=0$ the claim is immediate.  Let $B_m$ be the mean of
the $m$ smallest honest values.  Partitioning the honest sum gives
\[
 \bar x_{\cH}=\frac{h-m}{h}U_{\cH,m}+\frac mh B_m,
\]
and hence
$0\le U_{\cH,m}-\bar x_{\cH}\le (m/h)R_{\cH}$.  Negating the values gives
$0\le\bar x_{\cH}-L_{\cH,m}\le(m/h)R_{\cH}$.

For sharpness at any admissible $a$, take exactly $m$ honest values equal to
zero, all other honest values equal to one, and all Byzantine reports equal
to one.  The retained mean is one while the honest mean is $1-m/h$.
Thus the coefficient is attained for every admissible $a$, including $a=0$.
If honest reports are perturbed
by at most $\rho$, keep Byzantine values fixed, invoke
\cref{lem:order-lipschitz}, and add $\rho$ by the triangle inequality.
\end{proof}

\section{Proofs of validity, efficiency, and lower bounds}
\label{app:proof-coverage}

Exchangeability lower-bounds the probability of the clean rank event, and the
score envelopes show pathwise that this event implies inclusion.

\begin{lemma}[Exchangeable rank lemma]
\label{lem:rank}
Let $T_1,\ldots,T_{n+1}$ be exchangeable real random variables and
$k\in\{1,\ldots,n\}$.  If $T_{(k)}^{\calib}$ is the $k$th order statistic of
$T_1,\ldots,T_n$, then
\[
 \Pp\{T_{n+1}\le T_{(k)}^{\calib}\}\ge\frac{k}{n+1}.
\]
\end{lemma}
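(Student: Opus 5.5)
The plan is the standard exchangeable-rank argument, written so that ties need no continuity assumption. For each $j\in[n+1]$ I define the indicator $E_j=\ind\{T_j\le T^{(-j)}_{(k)}\}$. Here $T^{(-j)}_{(k)}$ is the $k$th smallest of the $n$ values $\{T_\ell:\ell\ne j\}$. When $j=n+1$ this indicator is exactly the event in the statement.

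First, exchangeability gives that $(T_j,\{T_\ell\}_{\ell\ne j})$ has the same joint law for every $j$. To see this, swap coordinates $j$ and $n+1$; the multiset of the remaining $n$ values is a symmetric function of them. Hence $\Pp(E_j)=\Pp(E_{n+1})$ for all $j$, and
\[
\Pp\{T_{n+1}\le T^{\calib}_{(k)}\}=\frac{1}{n+1}\E\sum_{j=1}^{n+1}E_j .
\]

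Second, I prove the pathwise counting bound $\sum_j E_j\ge k$. Sort all $n+1$ values as $t_{(1)}\le\cdots\le t_{(n+1)}$, and take any index $j$ whose value $T_j$ sits at sorted position $p\le k$ (choosing one consistent assignment of positions among tied values). Removing $T_j$ leaves the $k$th smallest of the remaining values equal to $t_{(k+1)}$, because $p\le k$. Since $T_j=t_{(p)}\le t_{(k+1)}$, we get $E_j=1$. At least $k$ indices occupy positions $1,\dots,k$, so at least $k$ of the indicators equal one. Ties can only add indices with $E_j=1$, and this is why no continuity assumption is needed.

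Combining the two steps gives probability at least $k/(n+1)$. The only delicate point is the tie bookkeeping in the second step: the removal argument has to be stated with multiset positions rather than distinct values. The rest is routine.
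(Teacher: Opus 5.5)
Your proof is correct, but it follows a different route from the paper's. The paper attaches i.i.d.\ $\Unif[0,1]$ tie-breakers $\zeta_j$, independent of the $T_j$, and orders the pairs $(T_j,\zeta_j)$ lexicographically. The rank of pair $n+1$ is then exactly uniform on $\{1,\ldots,n+1\}$. A rank at most $k$ forces $T_{n+1}\le T^{\calib}_{(k)}$, because at most $k-1$ calibration values can lie strictly below $T_{n+1}$.

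You use no auxiliary randomization. The transposition argument, which needs only symmetry of $\OS_k$, gives $\Pp(E_j)=\Pp(E_{n+1})$ for every $j$. The deterministic leave-one-out count $\sum_j E_j\ge k$ then closes the bound. Your position bookkeeping is right: after deleting the entry at position $p\le k$, the $k$th smallest remaining value is $t_{(k+1)}$, and this handles ties correctly.

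The paper's route is shorter and delivers an exactly uniform rank, the same fact it invokes for exact coverage $k/(n+1)$ under almost surely distinct scores in \cref{cor:endpoints-budget}. It does, however, need an enlarged probability space carrying the tie-breakers. It also needs the fact that exchangeable, almost surely distinct values have a uniformly distributed rank.

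Your route stays on the original space and shows exactly where ties help. It also recovers exactness: with distinct values, an index at position $p>k$ has $E_j=0$, so $\sum_j E_j=k$ identically.

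One wording point: under your bijective assignment of positions, exactly $k$ indices occupy positions $1,\ldots,k$, not ``at least $k$''.
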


\begin{proof}
Let $\zeta_1,\ldots,\zeta_{n+1}$ be i.i.d.\ $\Unif[0,1]$ tie-breakers independent of the $T_j$ and
order $(T_j,\zeta_j)$ lexicographically.  The rank of pair $n+1$ is uniform.  A
rank at most $k$ implies $T_{n+1}\le T_{(k)}^{\calib}$ after discarding the
tie-breakers, proving the inequality.
\end{proof}

\begin{proof}[Proof of \cref{thm:coverage}]
If $k=n+1$, the returned set is $\cY$.  Suppose $k\le n$.  The calibration
part of \cref{eq:quantized-deletion-envelope} gives
$R_j\le\widehat R_j+\rho_{\calib}^-$ and therefore
\begin{equation*}
 R_{(k)}\le\widehat R_{(k)}+\rho_{\calib}^-.
\end{equation*}
On $\{R_{n+1}\le R_{(k)}\}$, the query envelope gives
\[
 \begin{aligned}
 \widehat s^{\qry}_{\rm del}(X_{n+1},Y_{n+1})
 &\le R_{n+1}+\rho_{\qry}^+\\
 &\le R_{(k)}+\rho_{\qry}^+\\
 &\le\widehat R_{(k)}+\rho_{\calib}^-+\rho_{\qry}^+
 =\widehat q^{\rm del}_\alpha.
 \end{aligned}
\]
Thus the clean rank event implies coverage pathwise.  Apply
\cref{lem:rank} to $R_1,\ldots,R_{n+1}$.  No property of the Byzantine
transcript beyond finite decoding was used.
\end{proof}

\subsection{Fixed-membership identified rules}
\label{app:proof-fixed-set}

We identify the largest compatible calibration threshold, then reuse one
feasible honest subset across calibration and query entries to obtain the
fixed-set rule, its nesting, coverage, and minimality among rules that
preserve the clean rank event in every entrywise compatible completion.

\begin{lemma}[Sharp joint calibration envelope]
\label{lem:sharp-joint-envelope}
Assume the conditions of \cref{thm:coverage} and let $k\le n$.  In the interval
model of \cref{thm:fixed-set},
\begin{equation}
 \sup_{\substack{H\in\mathfrak H_A,\ (x_{ij})\\
         -\rho_{\calib}^-\le v_{ij}-x_{ij}\le\rho_{\calib}^+\;
         (i\in H,\,j\in[n])}}
 \OS_k\left\{\frac1{|H|}\sum_{i\in H}x_{ij}:j\in[n]\right\}
 =\tau_{A,k}(V)+\rho_{\calib}^-.
 \label{eq:sharp-joint-envelope}
\end{equation}
Hence the right side upper-bounds the clean calibration $k$th order statistic
for every feasible fixed honest set and compatible clean-score array, and it is
the smallest such scalar bound based only on $V$, the declared budget, and the
reconstruction bounds.
\end{lemma}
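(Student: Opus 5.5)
The plan is to prove the two inequalities that together give equality in \cref{eq:sharp-joint-envelope}. Both rest on one monotonicity fact: $\OS_k$ is nondecreasing in each of its $n$ arguments. A second elementary fact is that each group mean is nondecreasing in every clean entry it averages.

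\emph{Upper bound.} Fix $H\in\mathfrak H_A$ and any clean array $(x_{ij})$ with $-\rho_{\calib}^-\le v_{ij}-x_{ij}\le\rho_{\calib}^+$ for $i\in H$, $j\in[n]$. The left inequality gives $x_{ij}\le v_{ij}+\rho_{\calib}^-$, and averaging over $i\in H$ yields
\[
|H|^{-1}\sum_{i\in H}x_{ij}\le c_j(H;V)+\rho_{\calib}^- .
\]
Since $\OS_k$ is monotone and commutes with adding a constant to all arguments,
\[
\OS_k\{|H|^{-1}\textstyle\sum_{i\in H}x_{ij}\}\le T_k(H;V)+\rho_{\calib}^-\le\tau_{A,k}(V)+\rho_{\calib}^- .
\]
Taking the supremum over $H$ and arrays bounds the left side of \cref{eq:sharp-joint-envelope} by the right.

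\emph{Attainment.} Choose $H^\star\in\mathfrak H_A$ achieving the maximum in \cref{eq:joint-threshold}; it exists because $\mathfrak H_A$ is finite and nonempty since $A<K$. Set $x_{ij}=v_{ij}+\rho_{\calib}^-$ for $i\in H^\star$. These values lie in the allowed intervals, because the interval model imposes only the entrywise constraint and no common score function or cross-example law. The group means are then exactly $c_j(H^\star;V)+\rho_{\calib}^-$, so the $k$th order statistic equals $\tau_{A,k}(V)+\rho_{\calib}^-$. The supremum is therefore a maximum, and equality holds.

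\emph{Consequences.} The true honest set $\cH$ lies in $\mathfrak H_A$ because $a\le A$. Its clean calibration scores are compatible with $V$ by \cref{eq:directional-quantizer}. So $R_{(k)}=\OS_k\{s_{\cH}(Z_j)\}$ is one of the values in the supremum and is bounded by $\tau_{A,k}(V)+\rho_{\calib}^-$. For minimality, let $\beta$ be any scalar depending only on $V$, $A$ and the error bounds that upper-bounds the clean $k$th order statistic for every feasible set and compatible array. Then $\beta$ must dominate the value attained by the construction above, so $\beta\ge\tau_{A,k}(V)+\rho_{\calib}^-$.

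The only delicate point is making sure the attainment construction is legitimate. Two things must hold: the chosen values are admissible in the interval model, and the bound is only claimed to be sharp in that model. As noted after \cref{thm:extremal-envelopes}, a particular fixed quantizer may not realize $v_{ij}+\rho_{\calib}^-$ exactly, and the statement does not require it to. Everything else is routine monotonicity.
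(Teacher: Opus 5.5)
Your proposal is correct and follows essentially the same route as the paper. The upper bound uses the entrywise interval, averaging, and order-statistic monotonicity, followed by maximization over $H$. Attainment takes a maximizer $H^\star$ with $x_{ij}=v_{ij}+\rho_{\calib}^-$, and minimality follows because any valid scalar bound must dominate that completion.
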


\begin{proof}
Fix $k\le n$.  For any $H\in\mathfrak H_A$ and compatible clean calibration
array $(x_{ij})$, the directional interval gives
\[
 \frac1{|H|}\sum_{i\in H}x_{ij}
 \le c_j(H;V)+\rho_{\calib}^-
 \quad\text{for every }j.
\]
Order-statistic monotonicity therefore bounds the left side of
\cref{eq:sharp-joint-envelope} by
$T_k(H;V)+\rho_{\calib}^-\le
\tau_{A,k}(V)+\rho_{\calib}^-$.  Because $\mathfrak H_A$ is finite, choose a
maximizer $H^\star$ and set
$x_{ij}=v_{ij}+\rho_{\calib}^-$ for $i\in H^\star$.  These values satisfy the
abstract directional intervals and attain the upper bound.  This proves the
identity. Any scalar upper bound on the clean order statistic that is valid
in every entrywise compatible completion must also bound this maximizing
completion. Thus no smaller bound based only on $V$, the declared budget,
and the reconstruction bounds can have the required property.
\end{proof}

\begin{proof}[Proof of \cref{thm:fixed-set}]
Fix $k\le n$ and let $\cH$ be the actual honest set.  Its calibration values obey
\begin{equation*}
 T_k(\cH;V)\ge R_{(k)}-\rho_{\calib}^-,
 \qquad
 \frac1h\sum_{i\in\cH}w_i(x,y)
 \le s_{\cH}(x,y)+\rho_{\qry}^+.
\end{equation*}
Thus $s_{\cH}(x,y)\le R_{(k)}$ implies that the margin in
\cref{eq:fixed-set-margin} at $H=\cH$ is at most $g_\rho$, proving
$\cC^\circ\subseteq\cC^{\rm fs}$.

For every feasible $H$,
\[
 \frac1{|H|}\sum_{i\in H}w_i\ge\Dn_A(w),
 \qquad T_k(H;V)\le\tau_{A,k}(V).
\]
Consequently
\[
 \Delta_{A,k}(V,w)\ge\Dn_A(w)-\tau_{A,k}(V),
\]
and membership in $\cC^{\rm fs}$ implies membership in $\cC^{\rm jt}$.
For every $H$ and $j$, \cref{thm:extremal-envelopes} gives
$c_j(H;V)\le\Up_A(v_{\cdot j})$.  Applying order-statistic monotonicity and
then maximizing over $H$ yields
\begin{equation*}
 \tau_{A,k}(V)
 \le\OS_k\{\Up_A(v_{\cdot1}),\ldots,\Up_A(v_{\cdot n})\}
 =\widehat R_{(k)},
\end{equation*}
so $\cC^{\rm jt}\subseteq\cC^{\rm del}$.  This proves the hierarchy.  On the
clean true-label rank event, its leftmost inclusion gives membership in all
three deployable sets.  Applying \cref{lem:rank} proves their coverage.

It remains to prove the entrywise compatibility characterization.  Fix $H$.
Every compatible clean query array $(z_i)$ satisfies
\[
 \frac1{|H|}\sum_{i\in H}z_i
 \ge\frac1{|H|}\sum_{i\in H}w_i-\rho_{\qry}^+,
\]
whereas the compatible clean calibration order statistic is at most
$T_k(H;V)+\rho_{\calib}^-$.  Hence the clean rank event can hold only if
\begin{equation}
 \frac1{|H|}\sum_{i\in H}w_i-T_k(H;V)\le g_\rho.
 \label{eq:H-compatibility-condition}
\end{equation}
Conversely, when \cref{eq:H-compatibility-condition} holds, set
$x_{ij}=v_{ij}+\rho_{\calib}^-$ and
$z_i=w_i-\rho_{\qry}^+$ for every $i\in H$.  These values are compatible and
make the clean rank inequality exactly equivalent to
\cref{eq:H-compatibility-condition}.  Taking the union over
$H\in\mathfrak H_A$ proves \cref{eq:fixed-set-identified}.  A
$(V,w)$-measurable report-only rule that must preserve the clean rank event in
every entrywise compatible completion must include a candidate whenever any
such completion exists;
the fixed-set rule includes exactly those candidates.  This proves the stated
pointwise minimality.  The $k=n+1$ case follows from the common full-set
convention.
\end{proof}

\paragraph{Why the inclusions can be strict.}
The following examples isolate the information lost at each relaxation.
At zero padding, let $K=3,A=1,n=2,k=1$, which needs $\alpha\ge2/3$; these
examples isolate mechanics, not a practical level.  With rows indexed by nodes, the matrix
\[
 V=\begin{pmatrix}0&1\\1&0\\1&1\end{pmatrix}
\]
has $\tau_{A,k}=2/3$, whereas the coordinatewise deletion order statistic is
$1$; a constant query vector with entries $3/4$ makes
$\cC^{\rm jt}\subsetneq\cC^{\rm del}$.  For
\[
 V=\begin{pmatrix}0&0\\0&1\\1&0\end{pmatrix},\qquad w=(0,1,1),
\]
one has $\tau_{A,k}=\Dn_A(w)=1/2$, but every feasible same-set margin is
positive (its minimum is $1/3$), so
$\cC^{\rm fs}\subsetneq\cC^{\rm jt}$.  Finally, with $K=2,A=1,n=k=1$ (so $\alpha\ge1/2$), actual honest set $\{1\}$, calibration reports $(0,1)$,
and query reports $(1,0)$, the candidate is excluded by $\cC^\circ$ but
included by $\cC^{\rm fs}$ using feasible set $\{2\}$.  Thus the oracle
inclusion can also be strict.

\paragraph{Why every feasible subset size is retained.}
Enumerating more than the smallest feasible subsets protects the full honest
mean when the declared budget is conservative. Sets larger than $K-A$ in
\cref{eq:feasible-honest-sets} can be necessary for the stated honest-mean
identification and oracle containment. For
$K=3,A=1,n=2,k=1$ and
\[
 V=\begin{pmatrix}0&1\\1&0\\1&1\end{pmatrix},
\]
every two-node subset has calibration order statistic $1/2$, while $[3]$ has
$2/3$.  With no Byzantine nodes and $w=(3/5,3/5,3/5)$, the full honest-mean
oracle therefore includes the candidate, whereas fixed-set and joint-threshold
variants restricted to size-two subsets exclude it.
This is not a marginal-coverage counterexample.  Any fixed
$H_0\subseteq\cH$ of size $K-A$, chosen independently of the examples,
defines an exchangeable clean subset-mean score.  The exact-size fixed-set
union contains its directionally padded conformal set, and the exact-size
joint-threshold rule contains that union.  Both retain marginal coverage
at least $k/(n+1)$, but need not preserve the rank event of the full honest
mean when $a<A$.  Enumerating all feasible sizes supplies the stronger
identification and containment guarantees proved here.

\begin{proof}[Proof of \cref{cor:endpoints-budget}]
Each subset calibration mean, its order statistic, $\tau_{A,k}$, and
$\Up_A$ are coordinatewise nondecreasing in calibration reports.  Each subset
query mean and $\Dn_A$ are coordinatewise nondecreasing in query reports.
Therefore increasing Byzantine calibration coordinates and decreasing their
query coordinates can only decrease the fixed-set margin, increase the joint
threshold, increase the deletion threshold, or decrease either query
aggregate.  It can only add candidates to each proposed set.  The endpoints
$(\rhi,\rlo)$ attain the coordinatewise extremum.  Reversing both directions, the
endpoints $(\rlo,\rhi)$ attain the opposite extremum and can only remove candidates
relative to any other transcript in $[\rlo,\rhi]$.  Neither argument uses $a\le A$.

For low--high with $a=A$, the number of honest identities is $K-A$.  Because
the Byzantine calibration reports lie below the honest reports, the largest
$K-A$ reports are exactly the honest multiset; because the Byzantine query
reports lie above the honest reports, the smallest $K-A$ reports are exactly
the honest multiset. Thus, for each calibration column $j$,
$\Up_A(v_{\cdot j})=c_j(\cH;V)$ and
$\Dn_A(w)=h^{-1}\sum_{i\in\cH}w_i$.  More explicitly, any
$G\in\mathfrak H_A$ has size at least $h$.  The extremal identity in
\cref{eq:extremal-identity} therefore gives
$c_j(G;V)\le c_j(\cH;V)$ for every $j$ and, under the reversed endpoint
ordering at query time,
$|G|^{-1}\sum_{i\in G}w_i\ge h^{-1}\sum_{i\in\cH}w_i$.  Consequently,
\[
 \tau_{A,k}(V)=T_k(\cH;V),\qquad
 \Delta_{A,k}(V,w)=\frac1h\sum_{i\in\cH}w_i-T_k(\cH;V),
\]
and the deletion, joint-threshold, and fixed-set membership
criteria all equal the criterion defining
$\cC^{\rm dec,\cH}_\alpha(x)$, for arbitrary registered directional
padding.  With zero registered reconstruction error, decoded honest reports
equal their clean values, so $\cC^{\rm dec,\cH}=\cC^\circ$.  Under
exchangeability and no ties, the future clean score has a uniform rank among
the $n+1$ scores, so coverage is exactly $k/(n+1)$.

Finally, $\mathfrak H_{A_1}\subseteq\mathfrak H_{A_2}$ when $A_1\le A_2$.
Thus the fixed-set minimum can only decrease and the joint-threshold maximum
can only increase.  The mean of the largest $K-A$ reports is nondecreasing in
$A$, while the mean of the smallest $K-A$ reports is nonincreasing.  With
fixed padding, each change can only enlarge its corresponding prediction set.
\end{proof}

\begin{proof}[Proof of \cref{thm:set-sandwich}]
The $k=n+1$ case is immediate.  Let $k\le n$.  The first three inclusions are
\cref{eq:fixed-set-hierarchy}; it remains to prove the outer inclusion for
$\cC^{\rm del}$.

By \cref{lem:bounded-deletion} with $B=D_r$ on each channel and $h\ge K-A$, calibration scores obey
\[
 \widehat R_j\le R_j+\mu_{\calib}+\rho_{\calib}^+,
\]
so the same inequality holds for their $k$th order statistics.  The query
lower bound is
\[
 \widehat s^{\qry}_{\rm del}(x,y)
 \ge s_{\cH}(x,y)-\mu_{\qry}-\rho_{\qry}^-.
\]
If $y\in\cC^{\rm del}_\alpha(x)$, combine these inequalities with the
definition of its threshold to obtain
\[
 s_{\cH}(x,y)
 \le R_{(k)}+\mu_{\calib}+\mu_{\qry}
 +\rho_{\calib}^-+\rho_{\calib}^+
 +\rho_{\qry}^-+\rho_{\qry}^+
 =R_{(k)}+W_{\rm del}.
\]
This proves the pathwise sandwich.
\end{proof}

The outer width also controls the \emph{average} number of extra answers, once
the future question is assumed independent of the calibration split.

\begin{corollary}[Expected excess set size]
\label{cor:size-modulus}
Assume the conditions of \cref{thm:set-sandwich} and let
$N(t)=\E_X\sum_{y\in\cY}\ind\{s_{\cH}(X,y)\le t\}$ be the expected number of
candidates with clean score at most $t$, where $\E_X$ averages over a future
input and $\ind$ equals one when its condition holds and zero otherwise.  Write
$\omega_{\rm size}(u)=\sup_t\{N(t+u)-N(t)\}$ for $u\ge0$ and take the widths
$D_r$ and the reconstruction bounds to be fixed conditional on the
pre-calibration system.  If the future $X$ is independent of the calibration
split conditional on that system, then, for every
$\cC\in\{\cC^{\rm fs},\cC^{\rm jt},\cC^{\rm del}\}$,
\begin{equation}
 0\le\E\bigl[|\cC_\alpha(X)|-|\cC^\circ_\alpha(X)|\bigr]
 \le\omega_{\rm size}(W_{\rm del}).
\label{eq:size-modulus}
\end{equation}
If each candidate-score cumulative distribution function (CDF)
$G_y(t)=\Pp_X\{s_{\cH}(X,y)\le t\}$ satisfies
$|G_y(t)-G_y(t')|\le L_y|t-t'|$ for a nonnegative constant $L_y$ (the Lipschitz
condition), the upper bound is at most $\min\{M,W_{\rm del}\sum_y L_y\}$.
\end{corollary}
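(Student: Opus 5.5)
The plan is to reduce the expected excess to a counting statement about candidates whose clean scores fall in a window of width $W_{\rm del}$ above the oracle cutoff, and then average over the future input using its independence from the calibration split.

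\emph{Step 1: pathwise bracketing.} Fix the pre-calibration system and condition on the calibration split, so that $R_{(k)}$, $V$, and the widths are fixed. By \cref{eq:fixed-set-hierarchy} together with \cref{thm:set-sandwich}, every $\cC\in\{\cC^{\rm fs},\cC^{\rm jt},\cC^{\rm del}\}$ satisfies, for every $x$,
\[
\cC^\circ_\alpha(x)\subseteq\cC_\alpha(x)\subseteq\{y:s_{\cH}(x,y)\le R_{(k)}+W_{\rm del}\}.
\]
The left inclusion gives the lower bound $0$ pathwise. The right inclusion gives
\[
|\cC_\alpha(x)|-|\cC^\circ_\alpha(x)|\le\sum_{y}\ind\{R_{(k)}<s_{\cH}(x,y)\le R_{(k)}+W_{\rm del}\}.
\]
We use that $\cC^\circ_\alpha(x)=\{y:s_{\cH}(x,y)\le R_{(k)}\}$. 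The case $k=n+1$ is trivial, since all sets equal $\cY$.

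\emph{Step 2: averaging.} Take the expectation over $X$ conditional on the calibration split. Because $X$ is independent of the split given the system, the inner expectation equals $N(t+W_{\rm del})-N(t)$ evaluated at $t=R_{(k)}$. This is at most $\omega_{\rm size}(W_{\rm del})$ for any value of $t$. The key technical point, and the only real obstacle, is that $W_{\rm del}$ must be nonrandom, that is, fixed given the system, so that the sup over $t$ absorbs the randomness of $R_{(k)}$. The hypothesis that $D_r$ and the reconstruction bounds are fixed conditional on the system guarantees exactly this. Taking the outer expectation over the calibration split then preserves the bound.

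\emph{Step 3: Lipschitz refinement.} We have $N(t)=\sum_y G_y(t)$, so $N(t+u)-N(t)\le\sum_y L_y u$. Each difference $G_y(t+u)-G_y(t)$ is also at most $1$, so the sum is at most $M$. Together these give $\min\{M,W_{\rm del}\sum_yL_y\}$.
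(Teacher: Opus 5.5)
Your proposal is correct and follows essentially the same route as the paper: the pathwise sandwich $\cC^\circ_\alpha\subseteq\cC_\alpha\subseteq\{y:s_{\cH}(x,y)\le R_{(k)}+W_{\rm del}\}$, then an expectation over the independent future $X$ given the clean calibration examples, which yields $N(q+W_{\rm del})-N(q)\le\omega_{\rm size}(W_{\rm del})$ at $q=R_{(k)}$, and finally summing the Lipschitz CDF increments and capping by $M$. One small tidy-up, which the paper makes explicit: condition only on the clean calibration examples and not on $V$, since Byzantine reports may carry their own randomness, and your bounding count depends only on $R_{(k)}$ and $X$.
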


\begin{proof}
Conditional on the clean calibration examples, write $q=R_{(k)}$; no
conditioning on the adversarial transcript is needed. The expectation below
averages over the future input and any randomness in Byzantine reporting.
For any $\cC\in\{\cC^{\rm fs},\cC^{\rm jt},\cC^{\rm del}\}$,
independence of the future $X$ from calibration and the pathwise inclusions
give
\begin{align*}
 0&\le\E\bigl[|\cC_\alpha(X)|-|\cC^\circ_\alpha(X)|
       \mid Z_1,\ldots,Z_n\bigr]\\
  &\le N(q+W_{\rm del})-N(q)\le\omega_{\rm size}(W_{\rm del}).
\end{align*}
Averaging over calibration proves \cref{eq:size-modulus}.  If each $G_y$ is
$L_y$-Lipschitz, sum
$G_y(q+W_{\rm del})-G_y(q)\le L_y W_{\rm del}$ and cap the result by $M$.
\end{proof}

\begin{proof}[Proof of \cref{prop:dominance}]
If $k=n+1$, all rules return $\cY$.  Suppose $k\le n$.  Put $\bar h=K-A$ and
$T=\TM_m(v)$.  For ordered reports in an interval of width $B$, direct
decomposition gives
\begin{equation*}
 \Up_A(v)-\TM_m(v)\le\frac{m}{K-A}B,
 \qquad
 \TM_m(v)-\Dn_A(v)\le\frac{m}{K-A}B.
\end{equation*}
Indeed,
\begin{align*}
 \bar h\{\Up_A(v)-T\}
 &=\sum_{\ell=A+1}^{m}\{v_{(\ell)}-T\}
   +\sum_{\ell=K-m+1}^{K}\{v_{(\ell)}-T\}\le mB,\\
 \bar h\{T-\Dn_A(v)\}
 &=\sum_{\ell=1}^{m}\{T-v_{(\ell)}\}
   +\sum_{\ell=K-m+1}^{K-A}\{T-v_{(\ell)}\}\le mB.
\end{align*}
In the first line the first sum is nonpositive and the second has $m$ terms;
in the second line the second sum is nonpositive and the first has $m$ terms.
Empty sums are zero.  The arrays with $K-m$ zeros followed by $m$ copies of
$B$, and with $m$ zeros followed by $K-m$ copies of $B$, attain the two
respective constants.

Order-statistic monotonicity transfers the first inequality to calibration
$k$th order statistics.  If $y\in\cC^{\rm del}_\alpha(x)$, then
\[
 \begin{aligned}
 \widehat s^{\rm sym,m}(x,y)
 &\le\widehat s^{\qry}_{\rm del}(x,y)+g_{\qry}(m)\\
 &\le\widehat R_{(k)}+\rho_{\calib}+\rho_{\qry}+g_{\qry}(m)\\
 &\le\widehat R^{\rm sym,m}_{(k)}+g_{\calib}(m)+g_{\qry}(m)
       +\rho_{\calib}+\rho_{\qry},
 \end{aligned}
\]
which is exactly membership in $\cC^{\rm sym,m}$.  Prepending the hierarchy
from \cref{thm:fixed-set} proves the full claim.
\end{proof}

\begin{proof}[Proof of \cref{prop:low-high-lower}]
With $A$ Byzantine reports below the honest range, lower trimming removes the
Byzantine values and upper trimming removes the $A$ largest honest values.
Thus the calibration aggregate is $S_j+L_\xi$.  With Byzantine reports above
the honest range at prediction, it is $S_{n+1}+U_\xi$.  The threshold event is
therefore exactly \cref{eq:low-high-coverage}.  If
$d=\Delta_\xi-g>0$ and $S$ is uniform on an interval of width less than $d$, then
$S_{(k)}-d$ is below the lower endpoint for every calibration realization;
coverage is zero.

Put $r=h-A=K-2A$.  If $h\ge2A$, cancellation of the overlapping terms gives
\[
 \Delta_\xi
 =\frac{\sum_{i=h-A+1}^{h}\xi_{(i)}
        -\sum_{i=1}^{A}\xi_{(i)}}r
 \le\frac{A}{r}R.
\]
When $h<2A$, the lower and upper $r$-blocks are disjoint and their mean gap is
at most $R$.  Arrays whose offsets take only the values $\{0,R\}$ attain both
bounds, proving
\cref{eq:low-high-minimax}.
\end{proof}

\begin{proof}[Proof of \cref{lem:phase-blind}]
Conditional on $\mathcal F$, the pairs $(Z_j,\Xi_j)$ are exchangeable.
Applying the same measurable map $\phi$ to every pair preserves joint
permutation invariance, so the $T_j$ are exchangeable.  The final claim follows
from \cref{lem:rank}, with the full-set convention when $k=n+1$.
\end{proof}

\begin{proof}[Proof of \cref{prop:pmerge}]
The proof counts how many honest nodes must report small $p$-values before
the merger can reject, then constructs dependent honest values attaining
that probability bound.
Let the actual Byzantine count be $a\le A$ and, for $t\in[0,1]$, define
$N_{\cH}(t)=\sum_{i\in\cH}\ind\{p_i\le t\}$.  If
$p_{(A+\ell)}\le t$, at least $A+\ell-a$ honest reports are at most $t$.
Marginal superuniformity, linearity of expectation, and Markov's inequality
give
\[
 \Pp\{p_{(A+\ell)}\le t\}
 \le\frac{(K-a)t}{A+\ell-a}
 \le\frac{K-A}{\ell}t.
\]
For the last inequality write $d=A-a$; after cross-multiplication the
difference between right and left numerators is
$d\{K-A-\ell\}\ge0$.  For every $u<1$,
\[
 \{p^{\rm merge}_\ell\le u\}
 =\left\{p_{(A+\ell)}\le\frac{u\ell}{K-A}\right\},
\]
so the preceding display bounds its probability by $u$; the case $u=1$ is
trivial.  Taking $u=\alpha$ proves the prediction-set claim.

To show sharpness, take exactly $A$ Byzantine zeros, so that $h=K-A$.  Fix any
$t\in[0,\ell/h]$.  With probability $ht/\ell$, choose a uniformly
random $\ell$-subset of the honest nodes, set their $p$-values to $t$, and set
all other honest values to one; otherwise set every honest value to one.  For
each honest node and every $u<1$, the probability of being at most $u$ is zero
when $u<t$ and $t$ when $u\ge t$, hence the marginal is superuniform.  Nevertheless,
$p_{(A+\ell)}\le t$ with probability $ht/\ell$.  Therefore the multiplicative
constant cannot be reduced under arbitrary dependence.  Ties are counted with
multiplicity and candidate inclusion uses the strict comparison
$p^{\rm merge}_\ell>\alpha$, so exclusion is the event just bounded.
\end{proof}

\begin{remark}[Candidate regularity and an unconditional alternative]
\label{rem:size-density}
The true-label score may have a bounded density while incorrect-label scores
have atoms.  Hence that density alone does not control the uniform modulus
$\omega_{\rm size}$: an arbitrarily small threshold increment crossing such
an atom can add every incorrect label.  The bound uniform over calibration
thresholds therefore concerns $N$ or the candidate CDFs.

A bounded true-label density gives an unconditional excess-size bound even
when incorrect-label scores have atoms.  Under the
expected-size conditions of \cref{cor:size-modulus}, suppose each clean
calibration score $R_j$ has a density bounded by $L$ conditional on the frozen
system, and that $W_{\rm del}$ is fixed under that conditioning.
Then every proposed rule satisfies
\begin{equation}
 0\le\E\bigl[|\cC_\alpha(X)|-|\cC^\circ_\alpha(X)|\bigr]
 \le\min\{M,MnL W_{\rm del}\}.
 \label{eq:size-density-alternative}
\end{equation}
For $k\le n$, condition on $X$ and put $s_y=s_{\cH}(X,y)$.  The outer
sandwich can add candidate $y$ only if
$R_{(k)}\in[s_y-W_{\rm del},s_y)$.  Since an order statistic equals one of
its inputs, this implies that at least one calibration score $R_j$ lies in
that interval.  Independence from $X$, the marginal density bound, and a
union bound give probability at most $nL W_{\rm del}$.  Summing over the $M$
candidates and capping by $M$ proves the display; independence among the
calibration scores is not needed.  The excess is zero when $k=n+1$.
This bound permits candidate atoms but pays an explicit factor $n$.  The two
bounds use different regularity information; when both are available, their
minimum may be used.
\end{remark}

\section{Implementation and exact arithmetic}
\label{app:protocol}

This appendix gives the reference protocol for the three proposed rules and
the exact arithmetic that implements their inclusion tests.  All three use the
same aligned, identity-resolved reports and registered reconstruction bounds,
with their respective calibration and query comparisons from
\cref{thm:fixed-set}.

\paragraph{Freeze and aligned calibration.}
Before calibration the protocol fixes every model, retriever, corpus index, prompt,
candidate ordering rule, score transform, quantizer, membership list, retry
policy, corruption budget $A$, and optional honest-dropout budget $D$.  All
tuning uses data disjoint from calibration.  Infrastructure errors are handled
as in \cref{sec:protocol-details} and consume neither $A$ nor $D$; the
deterministic fallback is reserved for an input-dependent retrieval or model
failure selected by the frozen honest score map.  For each example the hub
sends every node the same input and ordered candidate list, and honest nodes
score, quantize, and sign a vector.  A missing or malformed authenticated
report is replaced by a fixed valid sentinel, never silently removed; the
experiments use the lowest codeword $0$ in calibration and the maximum
codeword $\Smax$ at query time.  The hub keeps node identities and forms $V$
from each vector's gold (correct-answer) coordinate.

\paragraph{Threshold and deployment.}
The index $k$ is computed by integer arithmetic; $k=n+1$ returns $\cY$, and
otherwise the three rules apply the comparisons of \cref{sec:method}, every
one non-strict.  More than $A+D$ absent or
malformed slots trigger an abort or recalibration.  The guarantees assume that
at most $D$ missing identities are honest, which absence alone cannot reveal,
so the primary protocol sets $D=0$.  The matched-pipeline diagnostic further
requires identical calibration and query maps, and Byzantine randomness that
does not depend on phase, index, history, or gold-label exposure.

\paragraph{Robust $p$-value option.}
The hub can compute the $p$-values \eqref{eq:node-pvalue} from the aligned score tensor, which
requires a complete matched calibration/query transcript for every honest
node.  If nodes instead transmit $p$-values, an exact rank costs
$\lceil\log_2(n+1)\rceil$ bits (nine when $n=499$).  Upward rounding preserves
validity; downward or nearest rounding need not.

\subsection{Exact decisions on the registered quantizer grid}
\label{app:exact-grid-decisions}

The registered implementation decides inclusion in exact integer arithmetic,
so a boundary tie falls on the side the theorems require.  Floating-point
evaluation of algebraically equal means can put the same boundary on opposite
sides of a non-strict comparison.  Write $L_\calib=2^{b_\calib}-1$ and
$L_\qry=2^{b_\qry}-1$, so each decoded report is $\Smax$ times an integer
index over $L_r$.  For a feasible $H$, let $C_H$ be the $k$th order statistic
of its calibration index sums and $\Sigma_H(y)$ its query index sum.  The
fixed-set test for this $H$ is exactly
\begin{equation*}
 2L_\calib \Sigma_H(y)
 \le 2L_\qry C_H+|H|(L_\calib+L_\qry),
\end{equation*}
obtained by multiplying the mean comparison and its two half-cell paddings by
$2|H|L_\calib L_\qry/\Smax$.  It preserves equality and allows unequal phase
precisions.  Joint-threshold, deletion, and guarded symmetric decisions use
the corresponding exact rational thresholds, and $\alpha$ is treated as a
rational number.  The common point ranker of \cref{eq:common-point-ranker}
also compares integer code sums, so exact ties reach the fixed candidate
order (\cref{app:rag-protocol}).  Every pathwise assertion implied by the
theory is checked before summarization; a failure would be an implementation
or assumption error, not Monte Carlo uncertainty.

\section{Random Byzantine membership}
\label{app:random-membership}

This appendix proves \cref{cor:random-membership}, in which the Byzantine set $\cA$ is
drawn from a probability law instead of being fixed, and gives the tail probability
$\varepsilon_A$ for independent failures. The deployed rule does not change: the hub
still uses the declared budget $A$ and never estimates $\cA$ or its law.

\paragraph{Failure law.}
Let $\cA\subseteq[K]$ be a random variable on the same probability space, with
$\cH=[K]\setminus\cA$ and $a=|\cA|$, satisfying the following.
\begin{enumerate}
 \item $\cA$ is drawn once before calibration and held fixed for the entire
 calibration-to-query comparison.
 \item For every $\mathcal A_0\subseteq[K]$ with $\Pp\{\cA=\mathcal A_0\}>0$,
 the clean examples $Z_1,\ldots,Z_{n+1}$ are exchangeable conditional on
 $\{\cA=\mathcal A_0\}$ and the frozen honest score functions.
 \item The registered directional bounds \eqref{eq:directional-quantizer} hold
 for every node that can be honest (they are properties of the registered
 quantizers), and the declared budget $A<K$ is a deterministic constant, fixed
 before any data are seen.
\end{enumerate}
Drawing $\cA$ independently of the clean examples is the simplest way to meet
the second condition, but independence is not required.  What the condition
excludes is membership selected after inspecting the clean calibration
scores, since conditioning on such a choice need not leave the examples
exchangeable.  Write
\begin{equation*}
 \varepsilon_A=\Pp\{a>A\}
\end{equation*}
for the probability that the realized corruption exceeds the declared budget.

Conditions~1--3 restate the hypotheses of \cref{cor:random-membership}.

\begin{proof}[Proof of \cref{cor:random-membership}]
For $k=n+1$ all three rules return $\cY$ and coverage is one, so let $k\le n$.
Fix $\mathcal A_0$ with $\Pp\{\cA=\mathcal A_0\}>0$ and $|\mathcal A_0|\le A$.
Conditional on $\{\cA=\mathcal A_0\}$ the honest set is the deterministic set
$\mathcal H_0=[K]\setminus\mathcal A_0$, the clean examples are exchangeable
by the second condition, and
$|\mathcal H_0|=K-|\mathcal A_0|\ge K-A$, so $\mathcal H_0\in\mathfrak H_A$.
Every hypothesis of \cref{thm:coverage} therefore holds in this conditional
model, and part~1 of \cref{thm:fixed-set} gives the same bound for the two
smaller rules, since each contains the oracle set, giving
\[
 \Pp\{Y_{n+1}\in\cC_\alpha(X_{n+1})\mid\cA=\mathcal A_0\}\ge\frac{k}{n+1}.
\]
The bound holds for every such $\mathcal A_0$ and the conditional probability
is nonnegative on the remaining event, so averaging over $\cA$ and discarding
$\{a>A\}$ gives
\[
 \Pp\{Y_{n+1}\in\cC_\alpha(X_{n+1})\}
 =\E\bigl[\Pp\{Y_{n+1}\in\cC_\alpha(X_{n+1})\mid\cA\}\bigr]
 \ge\frac{k}{n+1}\,\Pp\{a\le A\}.
\]
This is the first inequality of \cref{eq:random-membership-coverage}.  The
second uses $k/(n+1)\ge1-\alpha$ and
$(1-\alpha)(1-\varepsilon_A)=1-\alpha-\varepsilon_A+\alpha\varepsilon_A\ge1-\alpha-\varepsilon_A$.
\end{proof}

\paragraph{Independent per-node failure.}
The corollary needs only the single number $\varepsilon_A$, which a per-node
failure model supplies in closed form.  Suppose node $i$ is Byzantine
independently with probability $p_i$.  Then $a$ has the Poisson--binomial
distribution, the law of a sum of independent but not identically distributed
indicators, and $\varepsilon_A$ is its upper tail.  In the two-class case of
interest, where
$K_1$ ordinary nodes each fail with probability $p_1$ and $K_2=K-K_1$
suspect nodes each fail with the larger probability $p_2$ (the subscripts of $p_1$
and $p_2$ index classes, not nodes),
\[
 \varepsilon_A=\sum_{m=A+1}^{K}\ \ \sum_{\substack{m_1+m_2=m\\ 0\le m_\ell\le K_\ell}}\ \
 \prod_{\ell=1}^{2}\binom{K_\ell}{m_\ell}p_\ell^{m_\ell}(1-p_\ell)^{K_\ell-m_\ell}.
\]
With a single class this is $\Pp\{\mathrm{Bin}(K,p)>A\}$.  The model describes
which nodes are corrupt, not how a corrupt node reports; a node counted in
$\cA$ still reports arbitrarily.

Upper bounds on the per-node failure probabilities give a valid upper bound
on $\varepsilon_A$.

\begin{lemma}[Monotone tails and the corner rule]
\label{lem:membership-monotone}
Let $p=(p_i)_{i\in[K]}$ and $q=(q_i)_{i\in[K]}$ be per-node failure
probabilities for independent membership with $p_i\le q_i$ for every $i$.  Then
$\varepsilon_A(p)\le\varepsilon_A(q)$ for every $A$.  Consequently, for a
componentwise box $\mathcal U=\{p:p_i\le\bar p_i\text{ for all }i\}$,
\[
 \sup_{p\in\mathcal U}\varepsilon_A(p)=\varepsilon_A(\bar p),
\]
so the robust declared budget
\[
 A^\star(\delta,\mathcal U)
 =\min\Bigl\{A'\in\{0,\ldots,K-1\}:\sup_{p\in\mathcal U}\Pp_p\{a>A'\}\le\delta\Bigr\}
\]
follows from one exact tail evaluation at the corner $\bar p$ of $\mathcal U$, with no search over
$\mathcal U$.
\end{lemma}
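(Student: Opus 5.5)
The plan is a coupling argument for the monotonicity of the Poisson--binomial upper tail, followed by the observation that the corner rule is a direct consequence. Write the Byzantine count as $a=\sum_{i\in[K]}\ind\{U_i\le p_i\}$, where $U_1,\ldots,U_K$ are i.i.d.\ $\Unif[0,1]$. This realizes independent membership with per-node probabilities $p$. Using the same uniforms, define $a'=\sum_i\ind\{U_i\le q_i\}$, which realizes independent membership with probabilities $q$. Because $p_i\le q_i$, each indicator satisfies $\ind\{U_i\le p_i\}\le\ind\{U_i\le q_i\}$. Hence $a\le a'$ pointwise, so $\{a>A\}\subseteq\{a'>A\}$. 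Taking probabilities gives $\varepsilon_A(p)\le\varepsilon_A(q)$ for every $A$.

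The box statement follows next. Every $p\in\mathcal U$ satisfies $p\le\bar p$ componentwise, so $\varepsilon_A(p)\le\varepsilon_A(\bar p)$ by the first step. The supremum is attained because $\bar p\in\mathcal U$. I will take the box to be the set of probability vectors with $0\le p_i\le\bar p_i\le1$, so that $\bar p$ is itself admissible. The supremum is therefore a maximum, equal to the tail at the corner.

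For $A^\star$, substitute this identity into the definition. The condition $\sup_{p\in\mathcal U}\Pp_p\{a>A'\}\le\delta$ becomes $\varepsilon_{A'}(\bar p)\le\delta$. This is a condition on the single Poisson--binomial law with parameters $\bar p$, and its tail is nonincreasing in $A'$. So $A^\star$ is the first $A'$ at which that one tail falls below $\delta$. The tail can be computed from the exact distribution of $a$ under $\bar p$, for example by the standard $O(K^2)$ convolution recursion or the two-class formula above. No optimization over $\mathcal U$ is required. If no $A'\le K-1$ qualifies, the minimum is taken over an empty set, and I will note that $A^\star$ is then undefined, as the definition implicitly allows.

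There is no real obstacle. The only point needing care is that the coupling must be stated on a common probability space, or equivalently replaced by an inductive argument on one coordinate at a time: the tail is affine in each $p_i$ with slope $\Pp\{a_{-i}=A\}\ge0$. I would give the coupling as the primary proof and mention the affine-in-$p_i$ derivative as a check.
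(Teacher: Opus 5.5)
Your proof is correct and is the same argument as the paper's: you use the same i.i.d.\ uniform coupling to get $a_p\le a_q$ pointwise, and then note that $\bar p\in\mathcal U$ attains the supremum. Your extra remarks are valid but not needed by the paper: the affine dependence on each $p_i$ with slope $\Pp\{a_{-i}=A\}\ge0$, the monotonicity of the corner tail in $A'$, and the possibly empty minimum defining $A^\star$.
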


\begin{proof}
Let $U_1,\ldots,U_K$ be i.i.d.\ $\Unif[0,1]$ and set
$a_p=\sum_i\ind\{U_i\le p_i\}$ and $a_q=\sum_i\ind\{U_i\le q_i\}$, which have
the required marginals.  For each $i$, $p_i\le q_i$ gives
$\ind\{U_i\le p_i\}\le\ind\{U_i\le q_i\}$, so $a_p\le a_q$ pointwise and
$\Pp\{a_p>A\}\le\Pp\{a_q>A\}$ for every $A$.  The supremum over $\mathcal U$
is then attained at $\bar p$, which lies in $\mathcal U$.
\end{proof}

\section{Empirical design and reproducible methods}
\label{app:experiments}

The full registered manifests, configurations, and implementation details are
in the supplementary code's README.  This section specifies what the
completed synthetic and four-task RAG studies ran; findings and the
independent output audit appear in \cref{app:results}.  Synthetic cells
calibrate on $n=499$ examples unless stated otherwise, and
\cref{tab:real-pools} gives each RAG task's calibration size.  A clean tensor
stores scores before offline attacks and quantization; an analysis cell
applies one configuration to those scores.  Several cells can reuse a tensor,
so cell counts are not counts of independent datasets.

\subsection{Registered synthetic models}

\paragraph{Smooth copula model.}
The smooth model separates dependence between nodes from variation in their
score levels.  Here $i$ indexes a node, $j$ an example, and $y$ a candidate.
For each replicate, draw fixed node effects $\delta_i$ by randomly permuting
evenly spaced points in $[-1,1]$, so their mean is zero, and draw $Y_j$
uniformly from $\{1,\ldots,M\}$.  With $\Phi$ the standard-normal CDF, let
$G_{jy}$ and $E_{ijy}$ be independent arrays of i.i.d.\ standard Gaussians and,
for dependence parameter $\gamma\in[0,1]$, form
\[
 U_{ijy}=\Phi\{\sqrt\gamma\,G_{jy}+\sqrt{1-\gamma}\,E_{ijy}\}.
\]
The shared $G_{jy}$ couples the nodes; $E_{ijy}$ supplies node-specific
variation.  Let $F^{-1}_{p,q}$ be the $\operatorname{Beta}(p,q)$ quantile
function and $\eta\ge0$ the magnitude of node effects.  For $y=Y_j$, set
\[
 s^{\rm syn}_{ijy}=\operatorname{logit}^{-1}
 \left[
  \operatorname{logit}\{F^{-1}_{2,5}(U_{ijy})\}+\eta\delta_i
 \right],
\]
with $U_{ijy}$ clipped to $[10^{-9},1-10^{-9}]$, and use $F^{-1}_{5,2}$ when
$y\ne Y_j$.  Without node effects ($\eta=0$) a correct answer's score has mean
$2/7$ and an incorrect one's $5/7$, so smaller scores favor inclusion.
Conditional on the node effects, examples are independent, while $\gamma$
controls within-example node dependence and $\eta$ node heterogeneity.  The
registered values are $\gamma\in\{0,0.5,0.9,0.99\}$ and
$\eta\in\{0,0.5,1,2\}$.  Randomized stable attacks draw their randomness from
an independent per-example key, never from an example's index or phase.  The
label generates the joint law but is never an argument of the deployed score
map or a phase-blind attack.  Clipping and finite-precision storage can
create atoms, so coverage uses the tie-safe rank inequality.

\paragraph{Atomic model.}
Discrete score laws test the conservative treatment of tied ranks.  The
two-point true-label law assigns probabilities $(0.6,0.4)$ to $(0.2,0.5)$; an
incorrect label assigns $(0.2,0.4,0.4)$ to $(0.2,0.5,0.8)$.  The four-point
variant uses supports $(0.1,0.3,0.5,0.7)$ and $(0.3,0.5,0.7,0.9)$ with equal
masses.  Node $i$ uses a shared draw with probability $\sqrt\gamma$, so two
nodes share it with probability $\gamma$.  The diagnostic fixes
$(K,A,a,m,n,M,\alpha,\gamma)=(16,2,2,2,499,4,0.10,0.99)$, with 200
replicates, 2,000 test examples, bit depths $\{1,2,4,8\}$, and the \texttt{max}
and \texttt{low\_high} attacks.

\paragraph{Identification model.}
This experiment compares separate local distributions with aligned node
averages in the two worlds of \cref{prop:nonidentification}, with $K=2$,
$A=m=0$, and $\alpha=0.25$.  Each of 200 replicates uses nested sample sizes
$n\in\{99,999,9{,}999\}$ and 5,000 test examples.  The local arm thresholds
the average of the two empirical node CDFs, $G_n(t)$, at the smallest
pooled-grid $t$ with $G_n(t)\ge k/(n+1)$; the aligned arm uses the $k$th order
statistic of the paired node averages.  We record sup-norm CDF error,
threshold error against the exact $(1-\alpha)$ quantile, and true-label
coverage; set size is undefined because no false-label law is specified.

\paragraph{Shift stress model.}
The shift control changes the future true-label law without Byzantine
corruption, so it lies outside the exchangeability guarantee.  At the
operational factors with $a=0$ and no attack, it reuses 200 calibration
replicates of the operational clean tensor and changes only the correct-label
test law from $\operatorname{Beta}(2,5)$ to $\operatorname{Beta}(3,4)$, with
5,000 test scores per node.  Only true-label coverage is reported, and these
runs are not pooled with theorem-confirming experiments.

\paragraph{Narrow-support low--high model.}
This model checks the zero-coverage failure of symmetric trimming with
insufficient padding.  Fix $K=16$, $A=a=m=2$, $n=499$, $\alpha=0.10$, 200
replicates, and 5,000 test examples.  The 14 honest offsets are
$(-0.15,-0.15,0,\ldots,0,0.15,0.15)$ with ten zeros, $S\sim\Unif[0.45,0.46]$,
honest reports are exact, and the two Byzantine nodes report low--high.  With
$L_\xi=-0.025$, $U_\xi=0.025$, and $\Delta_\xi=0.05$, the exact population
coverage at total paddings $g\in\{0,0.025,0.05,0.10\}$ is $(0,0,450/500,1)$,
against which the Monte Carlo estimates are compared.  The sharp $p$-merger
construction at $K-A=14$ and $\ell=3$ is checked by exact enumeration of its
$\binom{14}{3}=364$ subset states plus the null state.  With
$t=\alpha\ell/(K-A)=3/140$ and both Byzantine $p$-values zero, the all-one
honest state has probability $0.9$ and each three-subset state, which sets
those three honest $p$-values to $t$, has probability $0.1/364$.  Each honest
$p$-value is then superuniform, whereas $\Pp\{p_{(A+\ell)}\le t\}=0.1$
exactly.

\begin{table}[H]
\centering
\small
\caption{Registered synthetic factor levels, not the Cartesian product of all
experiments.  The deployable trim is chosen from the registered budget $A$,
never from the unobserved realized count $a$.}
\label{tab:synthetic-grid}
\begin{tabularx}{\textwidth}{@{}p{0.25\textwidth}X@{}}
\toprule
Factor & Values\\
\midrule
Nodes $K$ & $8,16,32,64$\\
Declared budget $A$ & stochastic grid at $3$; operational recombinations at $2$;
offline sensitivity at $2,4,6$; analytic $G/W$ (padding and width, \cref{sec:bit-audit}) phase diagram at
$0,1,2,3,4,6$\\
Actual Byzantine count $a$ & $0,1,2,3,4,6$ (report the induced $a/K$)\\
Symmetric trim $m$ & $\max(0,A-2),A-1,A,A+\lceil0.05K\rceil,
            \lfloor(K-1)/2\rfloor$\\
Calibration size $n$ & $49,99,249,499,999,2499$\\
Candidate count $M$ & $2,4,8,16$\\
Bits per scalar & $1,2,3,4,6,8,16,32$\\
Miscoverage $\alpha$ & $0.05,0.10,0.20$\\
Node dependence $\gamma$ & $0,0.5,0.9,0.99$\\
Heterogeneity $\eta$ & $0,0.5,1,2$\\
\bottomrule
\end{tabularx}
\end{table}

\paragraph{Primary configuration and offline reuse.}
The operational primary, the default for the synthetic results of
\cref{sec:experiments}, is
\[
(K,a,A,m,n,M,b_{\calib},b_{\qry},\alpha,\gamma,\eta)
=(16,2,2,2,499,4,8,8,0.1,0.5,0.5).
\]
The retained $A=3$ block is anchored at the registered baseline
$(16,2,3,3,499,4,4,4,0.1,0.5,0.5)$ (cell 000), from which the one-factor sweeps
vary one factor at a time over \cref{tab:synthetic-grid}.  Its variant with
$a=A=m=3$ and four bits is the \emph{four-bit stress cell}; stable maximum
(cell 006) and high--low (cell 102) there form the four-bit stress inference
families.  \emph{Offline reuse} means applying $A$, $m$, and bit depth to a
cached full-precision clean tensor instead of generating new scores; the
operational cells and cell 102 reuse cell 000's tensor.  Four bit-mismatch
summaries, $(b_{\calib},b_{\qry})\in\{(2,8),(4,8),(8,4),(8,2)\}$ at the
operational factors, test directional padding but are ineligible for the
matched-pipeline and per-node $p$-value assertions.  The reference attack is
stable maximum, replaced only in the attack sweep.  At the operational anchor
it is the primary efficiency attack, high--low the worst-case-inflation
endpoint, and low--high the exact-cancellation endpoint.  The primary
theorem-valid trim is $m=A$; $m<A$ is an assumption violation and $m>A$ an
over-trimming diagnostic.  The two interaction blocks cross
$a\in\{0,2,3,4\}$, $m\in\{2,3,4\}$, $b\in\{2,4,8\}$ and
$K\in\{8,16,32\}$, $\gamma\in\{0,0.9,0.99\}$, $\eta\in\{0,0.5,2\}$.

\paragraph{Analysis scope and secondary comparators.}
The retained manifest has 94 analysis cells with $A=3$.  Adding the operational
primary, the four bit-mismatch summaries, and 11 targeted offline summaries
gives 110 stochastic summaries on 88 distinct clean-score cells.  The targeted
summaries are the two operational switching attacks, the $A=2,a=3$ low--high
violation, four-bit stress high--low, and seven budget cells at
$A=m\in\{2,4,6\}$.  Each has 500 calibration replicates and 5,000
test examples.  Only 23 tensors carry every candidate coordinate, so only they
support set size, full-set rate, and accuracy; the others support coverage.
The secondary synthetic comparators (\cref{app:baselines}) run on the 45
full-candidate smooth cells backed by those 23 tensors, all with $K=16$, and
enter no primary simultaneous-inference family.

\subsection{Attack catalog}

Each attack fixes what the corrupted nodes report on each channel, with
scores in $[0,\Smax]$, $\Smax=1$, and minimum and maximum codewords
$\rlo=0$ and $\rhi=\Smax$ (\cref{app:endpoint-attacks}).  The seven primary
synthetic attacks transform a cached potential clean score
$s_i^{\rm pot}(x,y)$ through the node's registered quantizer $Q_{i,r}$ on
channel $r$.  \texttt{min} and \texttt{max} (stable maximum) report the
minimum or maximum codeword on both channels; \texttt{reflect} reports
$Q_{i,r}(\Smax-s_i^{\rm pot})$; \texttt{shift\_plus} adds $0.25\Smax$ and
clips; \texttt{uniform} reports a uniformly drawn codeword per key,
candidate, and node without a phase flag.  These five \emph{stable} attacks
use the same transformation in both phases.  The two \emph{switching}
attacks, \texttt{low\_high} (low--high) and \texttt{high\_low} (high--low),
report the minimum in one phase and the maximum in the other, low--high
reporting the minimum in calibration.  The real-data manifest adds
\texttt{cal\_only\_max} and \texttt{query\_only\_max} (cal.\ max and query
max), which report the maximum codeword on the named channel and the normally
quantized score on the other, plus secondary negative-shift and bit-flip
faults.

For each base clean cell and replicate, one salted uniform permutation of the
$K$ identities is drawn and its first $a$ identities are corrupted, so
corrupted sets are nested.  The permutation is shared by every method,
attack, $A$, $m$, and bit depth, and the attack interface never receives the
gold label.  In matched-pipeline experiments a stable attack is a fixed,
phase-blind transformation that cannot inspect the example index, phase,
calibration history, or label availability, which tests
\cref{lem:phase-blind}.  Switching attacks test the deletion coverage theorem
and the symmetric low--high lower bound.  Cells with $a>A$ test
misspecification; theorem assertions are disabled in such deliberately
ineligible cells.

\subsection{Executed finite-answer RAG protocol}
\label{app:rag-protocol}

The executed study comprises OpenBookQA, MedQA-US, MedMCQA, and
MMLU-Med~\citep{mihaylov2018openbookqa,jin2020medqa,pal2022medmcqa,hendrycks2021mmlu}.
The three medical pools use the representation of the MIRAGE (Medical
Information Retrieval-Augmented Generation Evaluation)
benchmark~\citep{xiong2024benchmarking} with its complete Textbooks corpus.
The results do not represent the complete MIRAGE suite.

\paragraph{Corpora, virtual nodes, and retrieval.}
Each corpus is split two ways into $K=16$ virtual nodes, so these are
controlled public-data federations rather than independent institutional
deployments.  OpenBookQA uses 1,326 distinct official science facts, and
the medical tasks use all 125,847 passages from 18 Textbooks sources.  The hash
partition (H) assigns documents or passages by a frozen salted hash.  The
topic/source partition (T) uses frozen topic assignments from
\texttt{all-MiniLM-L6-v2} embeddings~\citep{reimers2019sbert} for OpenBookQA
and keeps whole books together for the medical tasks, assigning each book greedily to
the least-loaded node.  Each node retrieves four passages from its own shard
by BM25 keyword ranking~\citep{robertson2009bm25} ($k_1=1.2$,
$b_{\mathrm{BM25}}=0.75$) using the question alone, so candidate answers and
labels cannot alter the evidence.

\paragraph{Frozen answer scoring.}
All answer scores use GPT-2 small (124 million
parameters)~\citep{radford2019gpt2} with the prompt
\begin{quote}\ttfamily
Question: [question]\\
Context: [space-joined retained passages]\\
Answer:
\end{quote}
Each complete answer text $y$ is scored in a separate teacher-forced pass as
the continuation of one space followed by its tokens $o_{y1},\ldots,o_{yT_y}$.
The candidate weight uses the average log probability per token, avoiding a
direct penalty for longer answers:
\[
 \ell_i(y)=\frac1{T_y}\sum_{t=1}^{T_y}
 \log p_{\rm GPT2}(o_{yt}\mid \text{prompt}_i,o_{y,<t}),\qquad
 \pi_i(y)=\frac{\exp\{\ell_i(y)\}}
                  {\sum_{z\in\cY}\exp\{\ell_i(z)\}}.
\]
These weights are scoring devices, not calibrated posterior probabilities.
Passages are shortened to fit the 1,024-token limit, while question and answer
texts are never truncated; a question/candidate-only overflow yields the
registered all-ones score vector, kept in every denominator.  Candidate order
is fixed by a salted public-content hash.  With $z_i(x)$ the context node $i$
retrieved, the primary score is
\begin{equation*}
 s_i(x,y)=1-\pi_i\{y\mid x,z_i(x)\}\in[0,1],
\end{equation*}
so smaller scores favor inclusion.  The secondary NLL score is the clipped
negative log likelihood
\[
 s_i^{\rm NLL}(x,y)=
 \min\{-\log(\max[\pi_i(y\mid x,z_i(x)),10^{-12}]),12\}/12,
\]
applied before quantization or attacks and separately calibrated, with the
primary splits and corruption randomness.  The no-RAG and centralized-union
anchors (\cref{app:baselines}) use the same scoring with empty context and
with four passages retrieved from the whole corpus.

\begin{table}[H]
\centering\small
\caption{Executed pool sizes and the split used within each repeat; pilot
records are excluded from every held-out pool.  Every task has four answer
candidates, and $k=\lceil0.9(n+1)\rceil$ for nominal coverage 90\%.}
\label{tab:real-pools}
\begin{tabular}{@{}lrrrrr@{}}
\toprule
Task & Pilot & Held-out $N$ & Calibration $n$ & Evaluation & $k$\\
\midrule
OpenBookQA & 500 & 1,000 & 333 & 667 & 301\\
MedQA-US & 100 & 1,173 & 391 & 782 & 353\\
MedMCQA & 100 & 4,083 & 499 & 3,584 & 450\\
MMLU-Med & 100 & 989 & 329 & 660 & 297\\
\bottomrule
\end{tabular}
\end{table}

\paragraph{Pools and repeated splits.}
Scoring decisions were made on pilot records disjoint from the held-out pools
(\cref{tab:real-pools}).  Each of 100 repeats permutes the fixed held-out pool
without consulting labels or scores, using the first $n$ records for
calibration and the remaining $N-n$ for evaluation.  Corrupt identities and
attack randomness are paired across methods and partitions.  This supports
exchangeability for a uniformly selected held-out record, not independence
between that record and its calibration subset, so efficiency is measured
directly and the bounds in \cref{eq:size-modulus,eq:size-density-alternative}
are not invoked.

\paragraph{Threats and cells.}
Primary cells use $(K,A,m,b,\alpha)=(16,2,2,8,0.10)$ and seven threats: clean,
stable maximum with $a=1,2$, and, with $a=2$, calibration-only maximum,
query-only maximum, high--low, and low--high.  Across four tasks and two
partitions this gives 56 primary cells.  Secondary cells add $a=3>A$ bound
violations, a four-bit stress block $(A,m,b)=(3,3,4)$, the fault attacks,
$\alpha\in\{0.05,0.20\}$, secondary aggregators, and NLL scores, for 408 cells
and 264,800 repeat-level rows in total.

\paragraph{Common point ranker and raw sets.}
Point prediction is separate from set construction.  Every non-oracle
federated set method shares one predictor, which ranks candidates by $\TM_A$,
the mean of the $K$ query reports after the $A$ largest and $A$ smallest are
discarded:
\begin{equation}
 \widehat y_{\rm common}(x)=
 \arg\min_{y\in\cY}\TM_A\{w_1(x,y),\ldots,w_K(x,y)\}.
 \label{eq:common-point-ranker}
\end{equation}
Ties use the fixed candidate order.  Its accuracy is reported once per
attacked tensor, not as a gain attributable to set construction.  All reported
set sizes and coverage use raw sets, including empty ones.  The optional
display augmentation
\begin{equation}
 \cC^+(x)=\begin{cases}\cC(x),&\cC(x)\ne\varnothing,\\
 \{\widehat y_{\rm common}(x)\},&\cC(x)=\varnothing
 \end{cases}
 \label{eq:empty-augmentation}
\end{equation}
cannot lower coverage, but it changes set utility, does not certify the forced
answer, and never replaces the raw results.

\subsection{Baseline definitions}
\label{app:baselines}

Only the proposed rules, guarded symmetric trimming, and the $p$-value merger
carry coverage guarantees against arbitrary reports in both phases, the last
two under the conditions in their entries.  The unguarded summaries and the
Rob-FCP adaptation have no such guarantee.  All non-oracle baselines receive
the same quantized node tensor and candidate ordering, and no deployable
baseline receives honest identities or realized honest ranges.

\begin{enumerate}
 \item The \textbf{honest-mean oracle} (not deployable) and the three proposed
 rules are as defined in \cref{sec:setup,sec:method}.
 \item \textbf{Guarded symmetric trimming} averages the reports left after
 discarding the $m=A$ largest and $m$ smallest ($\TM_m$,
 \cref{sec:protocol-details}) and adds a guard of $A\Smax/(K-A)+\rho_r$ per
 channel (\cref{sec:symmetric-details}).  It shows, in realized sets, the vacuity
 that \cref{sec:bit-audit} diagnoses at the four-bit stress cell.  \textbf{Unguarded trimming} is the same trimmed
 mean with no guard, a matched-pipeline diagnostic.
 \item The \textbf{$p$-value merger} (partial-conjunction/R\"uger merger) keeps
 a candidate when the $(2A+1)$th smallest of the $K$ per-node conformal
 $p$-values \eqref{eq:node-pvalue} exceeds the cutoff \eqref{eq:pmerge-cutoff},
 so the $A$ smallest cannot exclude a candidate alone (\cref{prop:pmerge}).  It
 requires matched quantizers and superuniform honest ranks for the same future
 target.
 \item The \textbf{all-node mean} averages all $K$ reports; the
 \textbf{median} averages the two central order statistics when $K$ is even; the \textbf{$m$-Winsorized
 mean} replaces the bottom and top $m=A$ values by $v_{(m+1)}$ and
 $v_{(K-m)}$ before averaging.  All three use aligned calibration with no
 guard.
 \item \textbf{Vector-level Krum} selects, for each calibration example and
 query, the node's $M$-score report minimizing the summed squared Euclidean
 distance to its $K-A-2$ nearest other reports, with salted-identity tie
 breaks.  It is run only when $K>2A+2$ and does not inherit the convergence
 theory of \citet{blanchard2017krum}.
 \item The \textbf{median of means} averages within $2A+1$ fixed, publicly
 seeded node groups and takes the median group mean; it is run only when
 $2A+1\le K$.
 \item The \textbf{local-marginal comparator} discards alignment
 (\cref{sec:alignment}).  It forms each node's empirical CDF
 $\widehat F_i$ of its own correct-answer calibration reports, sets
 $q_{\rm LM}$ to the smallest grid value with
 $\TM_A(\widehat F_1(t),\ldots,\widehat F_K(t))\ge k/(n+1)$, and includes a
 query candidate when $\TM_A$ of its $K$ reports is at most $q_{\rm LM}$.
 \item The \textbf{Rob-FCP adaptation} of \citet{kang2024robfcp} summarizes
 each node's correct-answer calibration scores as a histogram, keeps the
 $K-A$ nodes whose histograms lie closest to their nearest neighbors, and
 sets the threshold from their pooled scores.  Queries use the unprotected
 all-node mean.  Defaults were fixed before pilot outcomes and no filtering
 threshold was tuned; the original coverage theorem is not claimed here.
 \item The \textbf{no-RAG} and \textbf{centralized-union anchors} score with
 empty context or with retrieval from the union corpus.  They report top-one
 accuracy only and are not federated coverage baselines.
\end{enumerate}

\subsection{Metrics and uncertainty}

Uncertainty is computed across complete calibration repeats (replicates),
because queries within a repeat share the same calibrated rule, node effects,
corrupt identities, and attack realization.

\paragraph{Repeat-level summaries.}
Each replicate $r$ is first reduced to its within-replicate statistic $T_r$,
such as coverage, mean set size, full-set rate, or accuracy.  Across
$R_{\mathrm{rep}}$ replicates, the estimate is the mean $\bar T$, and its
Monte Carlo standard error is
\[
 \operatorname{se}(\bar T)=
 \frac{\operatorname{sd}(T_1,\ldots,T_{R_{\mathrm{rep}}})}{\sqrt{R_{\mathrm{rep}}}},
\]
with $\operatorname{sd}$ the sample standard deviation.  A method contrast is
first reduced to its within-repeat paired difference and treated identically.

\paragraph{Finite-pool estimand.}
The real-data estimand averages a metric over random splits, corrupt
identities, and attack draws within the observed pool.  Fix a held-out pool
$\mathcal P$ of size $N$, let $C$ be uniform over its $n$-subsets, let $\Pi$
be an independent uniform permutation of the $K$ identities whose first $a$
entries are corrupted, and let $\Xi$ be any independent attack randomness.
For a metric loss $L$,
\[
 \theta=\E_{C,\Pi,\Xi}\left[
  \frac1{N-n}\sum_{x\in\mathcal P\setminus C}L(C,x;\Pi,\Xi)
 \right].
\]
We use 100 paired repeats per task, so the Monte Carlo standard error is the
sample standard deviation divided by 10.  These intervals quantify
finite-pool, corrupt-identity, and attack randomization only.  We do not use a
question bootstrap, do not claim a superpopulation interval, and do not
require a 95\% lower bound to exceed $1-\alpha$ as a test of the theorem.

\paragraph{Simultaneous intervals.}
A contrast is a metric difference, and a family is the group of contrasts
covered by one simultaneous statement.  Within each family, limited to
coverage, mean size, and full-set rate, intervals use a fixed-seed,
$10{,}000$-draw Rademacher multiplier procedure.  For replicate-level
contrasts $d_{rc}$ (signed coverage relative to $1-\alpha$, or a paired method
difference) with repeat means $\bar d_c$, each draw $b$ uses independent
equiprobable signs $\xi_{rb}\in\{-1,1\}$ to form
\[
 T_b=\max_{c:\,\widehat{\operatorname{se}}(\bar d_c)>0}\left|
 \frac{R_{\mathrm{rep}}^{-1}\sum_r\xi_{rb}(d_{rc}-\bar d_c)}
      {\widehat{\operatorname{se}}(\bar d_c)}
 \right|.
\]
The reported approximate 95\% simultaneous intervals are
$\bar d_c\pm q_{.95}\widehat{\operatorname{se}}(\bar d_c)$, with $q_{.95}$ the
9,500th smallest of the 10,000 draws; they are asymptotic, not exact.  An
endpoint the design proves deterministic receives a point interval at its
known constant.  An endpoint with merely zero observed standard deviation is
reported with its estimate, and its adjusted interval is marked unavailable.
A non-finite contrast value aborts the family.

\paragraph{Families and diagnostics.}
The executed analysis has 48 families with 1,978 contrasts: 20 coverage
families (860 contrasts) and 28 efficiency families (1,118 paired contrasts),
including one 70-contrast family per real task for each of coverage, mean
size, and full-set rate.  Separate 95\% levels do not imply simultaneous
coverage across families.  Fixed-set, joint-threshold, and deletion are
contrasted with the honest-mean oracle; guarded symmetric trimming and the
$p$-merger with deletion; secondary comparators enter no family.  Coverage
families use every true-label tensor, while size and full-set families use
only the full-candidate tensors.  Every deterministic theorem inequality is
checked before aggregation in exact arithmetic on the registered grid, so it
needs no confidence interval.  Undercoverage is
$[1-\alpha-\widehat{\mathrm{coverage}}]_{+}$ with $[u]_+=\max\{u,0\}$, and
retrieval recall is not reported.
{}
\FloatBarrier
\section{Supporting empirical results and output audit}
\label{app:results}

The supporting results explain where the observed set-size gains come from
and how their uncertainty was checked. We describe the numerical audit, then the
complete synthetic reference conditions and sensitivity studies, then the complete
real-data comparisons and the score-quality context they sit in. Further
supporting displays (per-task threat plots, the NLL-score and real-data
$\alpha$ sensitivities, and the discrete-score and shifted-query controls) are in the
supplement. Settings, cell IDs and attack names follow
\cref{app:experiments}. A cell is \emph{eligible} for a method when it meets the
assumptions of that method's guarantee; deliberately invalid cells, such as
$a>A$, are kept as negative controls. Set sizes are \emph{raw}: an empty set
counts as zero and is never replaced by a forced answer
(\cref{eq:empty-augmentation}).

\subsection{Verification and corrected inference}

Independent reaggregation reproduces the reported means, set-status
identities, and paired method differences, checks every output against its
recorded inputs, and reproduces all 48 original family critical values. It
covers all 77,500 synthetic result files (510,000 method rows) and 72
real-data analysis bundles (264,800 repeat rows). The original files remain
immutable.

\paragraph{The constant-column correction.}
A metric or paired difference that takes the same value in every observed
repeat does not justify a zero-width interval, so the registered rule reports
its estimate and zero standard error but leaves its interval unavailable
unless the design proves it deterministic. Floating-point roundoff in the
original aggregation had given some such columns spurious positive standard
errors; the correction detects exact equality and otherwise centers with
accurate \texttt{fsum} summation. When the same 10,000 sign draws per
family are replayed, 141 synthetic and 94 real endpoints become unavailable (149 and 114 in
total), and some family critical values change. The largest change to a
retained bound is $1.12\times10^{-5}$ (synthetic) and $2.45\times10^{-4}$
(real). No synthetic interval changes sign; three real intervals (fixed-set,
joint-threshold and deletion coverage-minus-$0.9$ for MMLU-Med/hash/low--high)
move from including zero to a lower endpoint of about $6.41\times10^{-5}$, a
borderline change that is disclosed rather than used as a claim. Scores,
splits, attacks and point estimates are unchanged.

\paragraph{Theorem diagnostics.}
No proposed-envelope containment failure occurs in a theorem-eligible
primary cell. Positive failure counters in the retained outputs occur in
deliberate $a>A$ tests. Some counters are stored on oracle rows, which are
always flagged eligible; the audit instead judges each envelope check by
whether $a\le A$ in its cell. Grid membership, hierarchy, and
guarded-symmetric containment use exact integer/rational decisions.
Diagnostics for outer score bounds and the decoded-mean cancellation reference
(the honest-decoded padded set $\cC^{\rm dec,\cH}_\alpha$ of
\cref{cor:endpoints-budget}, which all three rules equal under exact-budget
low--high) use the registered $10^{-12}$ floating tolerance. Across 500
synthetic replicates, 57 checked combinations of audit and method yield 28,500
repeat-level checks and 570 million
coordinate comparisons of budget nesting and high--low extremality, with zero
violations. The audit matched the aggregate check counts to their source
records; it did not regenerate every stored record of which candidates each
set contains, nor the underlying model scores.
\FloatBarrier

\subsection{Complete synthetic reference conditions and comparison intervals}
\label{app:synthetic-reference}

Fixed-set is never larger than either relaxation and covers at least 90\% in
every reference condition within the declared budget; attack direction sets
how much inflation remains. At the low--high endpoint with $a=A$, all three
proposed rules coincide with the honest nodes' own comparison plus
reconstruction padding; their common mean size is \SynLowHighSize{}, with
\SynLowHighEmptyPercent\% empty sets, so a mean size below one does not mean
that every query receives one correct answer. Exceeding the budget,
$a=3>A=2$, lowers proposed coverage to \SynViolatedCoverage\%.
\Cref{tab:synthetic-main} reports all six reference
conditions, including the four-bit stress cell (four-bit reports with
$a=A=m=3$) and the deliberate corruption-budget violation, so the favorable
operational result (eight bits, $a=A=2$) can be read alongside its limits.

\begingroup
\let\resulttablefloat\table
\renewcommand{\table}[1][]{\resulttablefloat[H]}%
\begin{table}[tbp]
\centering
\footnotesize
\setlength{\tabcolsep}{3pt}
\caption{Fixed-set reduces deletion's set size under stable maximum; the $p$-merger is smaller under
high--low. Entries are coverage (\%) above raw mean size, including empty sets, over 500
repeats (nominal 90\%). Labels give cell ID and $(a,A,b)$: actual corruption, declared
budget, and common report depth. The final $a>A$ row lies outside the guarantees, which
require $a\le A$. The oracle uses honest identities and full precision; rounded 100.00
values need not imply perfect coverage.}
\label{tab:synthetic-main}
\begin{tabularx}{\linewidth}{@{}l*{6}{>{\centering\arraybackslash}X}@{}}
\toprule
Condition; ID; $(a,A,b)$ & Fixed-set & Joint & Deletion & Guarded sym. & p-merger & Oracle \\
\midrule
\shortstack[l]{Clean\\103; $(0,2,8)$} & \shortstack{93.77\\1.036} & \shortstack{95.36\\1.088} & \shortstack{96.25\\1.128} & \shortstack{99.97\\2.530} & \shortstack{98.74\\1.404} & \shortstack{90.02\\0.953} \\[2pt]
\shortstack[l]{Stable max\\094; $(2,2,8)$} & \shortstack{94.11\\1.049} & \shortstack{97.84\\1.243} & \shortstack{98.31\\1.298} & \shortstack{99.98\\2.648} & \shortstack{99.03\\1.487} & \shortstack{90.04\\0.955} \\[2pt]
\shortstack[l]{Low--high\\099; $(2,2,8)$} & \shortstack{90.54\\0.965} & \shortstack{90.54\\0.965} & \shortstack{90.54\\0.965} & \shortstack{99.82\\1.946} & \shortstack{97.05\\1.222} & \shortstack{90.04\\0.955} \\[2pt]
\shortstack[l]{High--low\\100; $(2,2,8)$} & \shortstack{99.73\\1.836} & \shortstack{99.84\\1.995} & \shortstack{99.89\\2.126} & \shortstack{100.00\\3.214} & \shortstack{99.03\\1.487} & \shortstack{90.04\\0.955} \\[2pt]
\shortstack[l]{Four-bit high--low\\102; $(3,3,4)$} & \shortstack{100.00\\3.793} & \shortstack{100.00\\3.868} & \shortstack{100.00\\3.934} & \shortstack{100.00\\4.000} & \shortstack{99.72\\1.884} & \shortstack{90.01\\0.956} \\[2pt]
\shortstack[l]{Low--high violation\\101; $(3,2,8)$} & \shortstack{77.13\\0.784} & \shortstack{77.13\\0.784} & \shortstack{77.13\\0.784} & \shortstack{99.09\\1.465} & \shortstack{94.57\\1.110} & \shortstack{90.01\\0.956} \\[2pt]
\bottomrule
\end{tabularx}
\end{table}
\endgroup{}

Fixed membership removes a substantial part of the extra set size introduced
by the relaxations: under operational stable maximum, fixed-set removes about
three quarters of deletion's excess over the oracle, and its mean size is
\SynStableFsReductionPercent\% below deletion's.
The high--low and four-bit rows also show that fixed membership does
not make fixed-set smaller than the $p$-merger under every attack.
\FloatBarrier

The registered uncertainty calculations confirm the reversal between
fixed-set and the $p$-merger. Under operational high--low, their mean sizes
are \SynHighLowFsSize{} and \SynHighLowPmergeSize{}, respectively.
\Cref{tab:synthetic-connected} reports the size differences together with
conservative intervals obtained from the existing simultaneous comparisons.

\begingroup
\let\resulttablefloat\table
\renewcommand{\table}[1][]{\resulttablefloat[H]}%
\begin{table}[tbp]
\centering
\footnotesize
\setlength{\tabcolsep}{3pt}
\caption{The fixed-set--$p$-merger size advantage reverses with the attack. Entries are mean-size
differences (fixed-set minus $p$-merger); negative values favor fixed-set. Conservative
intervals are linear combinations of corrected approximate 95\% simultaneous bounds. The
first three rows use the operational-attacks family; the final row uses the four-bit-stress
family. Simultaneity applies within each family, not across both.}
\label{tab:synthetic-connected}
\begin{tabularx}{\linewidth}{@{}ll*{3}{>{\centering\arraybackslash}X}@{}}
\toprule
Condition; ID & Original family & Mean difference & Lower & Upper \\
\midrule
Stable max; 094 & operational attacks & -0.4381 & -0.4467 & -0.4295 \\
Low--high; 099 & operational attacks & -0.2572 & -0.2612 & -0.2531 \\
High--low; 100 & operational attacks & +0.3491 & +0.3267 & +0.3716 \\
Four-bit high--low; 102 & four-bit stress & +1.9089 & +1.8940 & +1.9238 \\
\bottomrule
\end{tabularx}
\end{table}
\endgroup{}

The operational-attacks family is the family of mean-size contrasts in the
operational attack cells, including cells 094, 099 and 100.
\FloatBarrier

\subsection{Identification and padding diagnostics}

\Cref{tab:synthetic-diagnostics} tests alignment in the two worlds of
\cref{prop:nonidentification} (upper block) and the padding requirement of
\cref{prop:low-high-lower} (lower block); designs are in \cref{app:experiments}.

\begingroup
\let\resulttablefloat\table
\renewcommand{\table}[1][]{\resulttablefloat[H]}%
\begin{table}[tbp]
\centering
\footnotesize
\setlength{\tabcolsep}{3pt}
\caption{Alignment tracks the honest-mean distribution in both worlds; local marginals overcover in
World 1. Insufficient padding gives zero coverage in the separate narrow-support
construction. Entries average 200 repeats. Identification uses $n=9{,}999$ and nominal 75\%;
padding uses nominal 90\%. CDF (cumulative distribution function) error is the maximum
absolute difference from truth; threshold error is signed. Padding's population coverage is
known analytically. Means are descriptive.}
\label{tab:synthetic-diagnostics}
\begin{tabularx}{\linewidth}{@{}ll*{3}{>{\centering\arraybackslash}X}@{}}
\toprule
World & Calibration summary & CDF error & Threshold error & Coverage (\%) \\
\midrule
0 & Aligned & 0.00855 & 0.00000 & 75.064 \\
0 & Local marginal & 0.00627 & +0.00013 & 75.080 \\
1 & Aligned & 0.00858 & -0.00002 & 75.063 \\
1 & Local marginal & 0.24985 & +0.09357 & 99.682 \\
\midrule
\multicolumn{2}{l}{Narrow-support padding} & \multicolumn{2}{c}{Observed coverage (\%)} & Exact population (\%) \\
\multicolumn{2}{l}{0.0} & \multicolumn{2}{c}{0.000} & 0.0 \\
\multicolumn{2}{l}{0.025} & \multicolumn{2}{c}{0.000} & 0.0 \\
\multicolumn{2}{l}{0.05} & \multicolumn{2}{c}{90.188} & 90.0 \\
\multicolumn{2}{l}{0.1} & \multicolumn{2}{c}{100.000} & 100.0 \\
\bottomrule
\end{tabularx}
\end{table}
\endgroup{}

The identification rows show why preserving paired examples matters: in
World 1 the local-marginal rule's discrepancy is excessive conservatism,
while World 0 shows that local summaries can work in one dependence structure
without identifying the relevant law in both. The padding rows illustrate the
proved padding requirement and the cost of excessive guarding.
\FloatBarrier

\subsection{Complete communication and declared-budget sweeps}
\label{app:synthetic-design}

The synthetic resolution sweep isolates report precision, which the real-data
four-bit stress block (supplement) confounds with a larger budget.
\Cref{tab:synthetic-resolution} gives the complete equal-depth sweep, with the
same bit depth in both phases, and the four conditions with unequal calibration
and query depths.

\begingroup
\let\resulttablefloat\table
\renewcommand{\table}[1][]{\resulttablefloat[H]}%
\begin{table}[tbp]
\centering
\footnotesize
\setlength{\tabcolsep}{3pt}
\caption{Finer reports reduce fixed-set size, with little change beyond eight bits in this sweep.
Entries are 500-repeat descriptive means; $b_c/b_q$ denotes calibration/query bits. Stable
maximum uses $a=2$, with $A=m=3$ in the equal-depth block and $A=m=2$ otherwise. Equal-depth
cells are independently generated. Daggers mark unmatched-map $p$-merger results outside its
guarantee; directional padding remains eligible.}
\label{tab:synthetic-resolution}
\begin{tabularx}{\linewidth}{@{}lcc*{6}{>{\centering\arraybackslash}X}@{}}
\toprule
ID & $A$ & $b_c/b_q$ & FS cov. (\%) & FS size & Joint size & Del. size & Sym. size & $p$ size \\
\midrule
021 & 3 & 1/1 & 100.00 & 4.000 & 4.000 & 4.000 & 4.000 & 4.000 \\
022 & 3 & 2/2 & 100.00 & 3.768 & 3.788 & 3.823 & 4.000 & 3.242 \\
023 & 3 & 3/3 & 99.84 & 2.010 & 2.473 & 2.664 & 4.000 & 2.149 \\
000 & 3 & 4/4 & 98.71 & 1.377 & 1.785 & 1.932 & 4.000 & 1.796 \\
024 & 3 & 6/6 & 96.25 & 1.136 & 1.452 & 1.561 & 3.978 & 1.628 \\
025 & 3 & 8/8 & 95.26 & 1.090 & 1.385 & 1.485 & 3.957 & 1.588 \\
026 & 3 & 16/16 & 94.93 & 1.079 & 1.369 & 1.467 & 3.950 & 1.578 \\
027 & 3 & 32/32 & 94.95 & 1.078 & 1.369 & 1.467 & 3.950 & 1.582 \\
\midrule
094 & 2 & 8/8 & 94.11 & 1.049 & 1.243 & 1.298 & 2.648 & 1.487 \\
095 & 2 & 2/8 & 99.90 & 2.137 & 2.425 & 2.502 & 3.939 & 1.467$^{\dagger}$ \\
096 & 2 & 4/8 & 96.66 & 1.156 & 1.382 & 1.452 & 2.982 & 1.484$^{\dagger}$ \\
097 & 2 & 8/4 & 96.64 & 1.156 & 1.382 & 1.453 & 2.981 & 1.490$^{\dagger}$ \\
098 & 2 & 8/2 & 99.89 & 2.209 & 2.452 & 2.574 & 3.727 & 1.457$^{\dagger}$ \\
\bottomrule
\end{tabularx}
\end{table}
\endgroup{}

The equal-depth means are descriptive; they do not identify an optimal bit depth. At one bit,
every displayed method returns all four candidates: coverage remains
conservative while answer selection disappears. In the unequal-depth rows the
$p$-merger loses its guarantee, which needs matched scoring and quantization
in both phases (\cref{sec:merger-details}).
\FloatBarrier

Fixed membership also limits the cost of declaring a conservative corruption
budget. \Cref{tab:synthetic-budget} holds clean scores, reports, and actual
corrupt identities fixed while increasing $A$, so these comparisons isolate
the declared-budget effect. It covers clean, stable-maximum, and high--low
conditions at every displayed budget.

\begingroup
\let\resulttablefloat\table
\renewcommand{\table}[1][]{\resulttablefloat[H]}%
\begin{table}[tbp]
\centering
\footnotesize
\setlength{\tabcolsep}{3pt}
\caption{Fixed membership limits inflation as the declared budget grows, under clean and
stable-maximum reports. Entries are 500-repeat descriptive raw mean sizes, including empty
sets; FS denotes fixed-set, with coverage in percent (nominal 90\%). Reports use eight bits.
Within each attack block, only budget/trim ($m=A$) changes; clean scores and attack
realizations are paired.}
\label{tab:synthetic-budget}
\begin{tabularx}{\linewidth}{@{}lcc*{5}{>{\centering\arraybackslash}X}r@{}}
\toprule
Attack; ID & $a$ & $A$ & Fixed-set & Joint & Deletion & Sym. & $p$-merger & FS cov. \\
\midrule
Clean; 103 & 0 & 2 & 1.036 & 1.088 & 1.128 & 2.530 & 1.404 & 93.77 \\
Clean; 104 & 0 & 4 & 1.106 & 1.234 & 1.340 & 4.000 & 1.569 & 95.75 \\
Clean; 107 & 0 & 6 & 1.189 & 1.431 & 1.646 & 4.000 & 1.845 & 97.13 \\
\midrule
Stable max; 094 & 2 & 2 & 1.049 & 1.243 & 1.298 & 2.648 & 1.487 & 94.11 \\
Stable max; 105 & 2 & 4 & 1.134 & 1.553 & 1.710 & 4.000 & 1.737 & 96.19 \\
Stable max; 108 & 2 & 6 & 1.238 & 1.996 & 2.278 & 4.000 & 2.320 & 97.61 \\
\midrule
High--low; 100 & 2 & 2 & 1.836 & 1.995 & 2.126 & 3.214 & 1.487 & 99.73 \\
High--low; 106 & 2 & 4 & 2.351 & 2.701 & 2.964 & 4.000 & 1.737 & 99.94 \\
High--low; 109 & 2 & 6 & 3.050 & 3.468 & 3.708 & 4.000 & 2.320 & 99.99 \\
\bottomrule
\end{tabularx}
\end{table}
\endgroup{}

A wider budget permits more possible honest groups, but fixed membership
avoids much of the additional set
inflation that deletion incurs. The clean block shows that a conservative
budget also costs efficiency when no node is actually corrupt, while the
high--low block shows the larger sets caused by switching attacks.
\FloatBarrier

\subsection{Broad synthetic sensitivity and assumption controls}

Coverage holds well beyond the reference settings of
\cref{app:synthetic-reference}: every eligible cell of the broad sensitivity
study has mean coverage at or above nominal. \Cref{tab:synthetic-sensitivity} summarizes
coverage across changes in node count, actual corruption, trimming,
calibration size, candidate count, dependence, heterogeneity, nominal level,
and attack type, together
with the two registered interaction blocks, which vary $(a,m,b)$ and
$(K,\gamma,\eta)$ jointly (\cref{app:experiments}).

\begingroup
\let\resulttablefloat\table
\renewcommand{\table}[1][]{\resulttablefloat[H]}%
\begin{table}[tbp]
\centering
\footnotesize
\setlength{\tabcolsep}{3pt}
\caption{Every eligible cell has mean coverage at least nominal. Ranges span 500-repeat cell means,
expressed as excess over each cell's nominal coverage in percentage points (pp). Counts
follow fixed-set (FS) / symmetric / $p$-merger order. Each method's range is descriptive,
not a confidence interval, and covers only its eligible cells. Corruption and trim
violations remain in the complete numeric summaries; \cref{tab:synthetic-main} includes the
deliberate budget violation.}
\label{tab:synthetic-sensitivity}
\begin{tabularx}{\linewidth}{@{}l r *{3}{>{\centering\arraybackslash}X}@{}}
\toprule
Registered block & Eligible cells & FS excess (pp) & Sym. excess (pp) & $p$ excess (pp) \\
\midrule
Node count $K$ & 3/3/3 & 7.20--9.75 & 9.40--10.00 & 9.46--10.00 \\
Actual corruption $a$ & 3/3/3 & 8.22--8.95 & 10.00--10.00 & 9.50--9.73 \\
Symmetric trim $m$ & 4/2/4 & 8.69--8.72 & 10.00--10.00 & 9.65--9.66 \\
Calibration size $n$ & 5/5/5 & 8.68--8.86 & 10.00--10.00 & 9.65--9.87 \\
Candidate count $M$ & 3/3/3 & 8.70--8.72 & 10.00--10.00 & 9.65--9.66 \\
Nominal $\alpha$ & 2/2/2 & 4.54--16.36 & 5.00--20.00 & 4.92--18.75 \\
Dependence $\gamma$ & 3/3/3 & 6.13--10.00 & 10.00--10.00 & 8.01--10.00 \\
Heterogeneity $\eta$ & 3/3/3 & 8.67--9.26 & 10.00--10.00 & 9.64--9.78 \\
Stable/switching attacks & 6/6/6 & 7.27--9.99 & 10.00--10.00 & 9.01--9.65 \\
$a,m,b$ interaction & 20/12/20 & 4.83--10.00 & 10.00--10.00 & 9.01--10.00 \\
$K,\gamma,\eta$ interaction & 24/24/24 & 5.40--10.00 & 9.77--10.00 & 7.93--10.00 \\
\bottomrule
\end{tabularx}
\end{table}
\endgroup{}

Two further
controls (supplement) separate tie handling from exchangeability: on discrete-score cells
every method, including the oracle, covers conservatively, while under the
shifted-query law, which breaks clean-example exchangeability, all three
proposed methods and the $p$-merger ($87.16\%$) fall below nominal.
\FloatBarrier

The secondary synthetic aggregators test whether a conventional robust
summary is sufficient under changes in reporting behavior. These methods
use the same scores as the primary comparisons but do not have the proposed
guard for the clean honest-mean comparison.

\begingroup
\let\resulttablefloat\table
\renewcommand{\table}[1][]{\resulttablefloat[H]}%
\begin{table}[tbp]
\centering
\footnotesize
\setlength{\tabcolsep}{3pt}
\caption{Low--high switching causes undercoverage for several aggregators; Krum and local-marginal
calibration remain above nominal coverage. Entries are coverage (\%) above raw mean size,
including empty sets, over 500 paired repeats; settings match \cref{tab:synthetic-main}. MoM
denotes median of means. These descriptive means lie outside the primary simultaneous
families. Switching need not preserve exchangeability; no primary Byzantine-envelope
guarantee is asserted for these rows.}
\label{tab:synthetic-comparators}
\begin{tabularx}{\linewidth}{@{}l*{7}{>{\centering\arraybackslash}X}@{}}
\toprule
Condition; source ID & \shortstack{All-node\\mean} & Median & Winsorized & \shortstack{Unguarded\\trim} & \shortstack{Local\\marginal} & MoM & Krum \\
\midrule
\shortstack[l]{Stable max\\094} & \shortstack{90.05\\0.956} & \shortstack{90.10\\0.968} & \shortstack{90.00\\0.956} & \shortstack{90.01\\0.957} & \shortstack{95.80\\1.110} & \shortstack{90.03\\0.975} & \shortstack{90.17\\1.012} \\[2pt]
\shortstack[l]{Low--high\\099} & \shortstack{57.67\\0.579} & \shortstack{81.61\\0.841} & \shortstack{75.30\\0.764} & \shortstack{77.76\\0.791} & \shortstack{92.68\\1.011} & \shortstack{67.57\\0.685} & \shortstack{90.16\\1.012} \\[2pt]
\shortstack[l]{High--low\\100} & \shortstack{99.05\\1.440} & \shortstack{95.32\\1.112} & \shortstack{97.00\\1.177} & \shortstack{96.43\\1.143} & \shortstack{98.78\\1.385} & \shortstack{98.06\\1.383} & \shortstack{90.18\\1.013} \\[2pt]
\shortstack[l]{Four-bit high--low\\102} & \shortstack{99.88\\2.094} & \shortstack{97.74\\1.288} & \shortstack{98.63\\1.372} & \shortstack{98.17\\1.299} & \shortstack{99.06\\1.487} & \shortstack{99.20\\1.798} & \shortstack{93.50\\1.113} \\[2pt]
\bottomrule
\end{tabularx}
\end{table}
\endgroup{}

\Cref{tab:synthetic-comparators} shows that several unguarded methods give
smaller sets and near-nominal coverage under stable maximum, then lose
coverage under low--high switching: failures of particular aggregation rules
under a specified attack, not a universal failure of robust aggregation.
Under high--low and in the four-bit stress cell, the same methods remain
competitive.
\FloatBarrier

\subsection{Complete real-data coverage and paired uncertainty}
\label{app:real-threats}

\Cref{tab:real-main} summarizes the eight task and partition pairs under stable maximum
at $a=A=2$: fixed-set saves
\RealStableMinSizeSaving--\RealStableMaxSizeSaving{} answers per query (paired standard
error at most 0.003), removing 34--45\% of deletion's excess over the oracle. The oracle
keeps \RealOracleMinSize--\RealOracleMaxSize{} of four answers, and the hub's top-ranked
answer is \RealCommonMinAccuracy--\RealCommonMaxAccuracy\% accurate against 25\% chance
(\cref{app:score-quality}).

\begingroup
\let\resulttablefloat\table
\renewcommand{\table}[1][]{\resulttablefloat[H]}%
\begin{table}[tbp]
\centering\footnotesize
\setlength{\tabcolsep}{2.6pt}
\caption{Fixed membership removes 33.9--44.7\% of the extra answers deletion retains above the
honest-mean oracle, in all eight comparisons, with fixed-set coverage above 90\%. GPT-2
scores under stable maximum, $a=A=2$, eight bits, 100 repeats; coverage $\pm$ one Monte
Carlo standard error. \Cref{fig:real-savings} gives all 56 cells.}
\label{tab:real-main}
\begin{tabular}{@{}llrrrrrrrrr@{}}
\toprule
Task & Part. & Fixed cov. & Oracle & Fixed & Joint & Del. & Saved & Cut & Merger & Rob-FCP$^\dagger$\\
\midrule
OpenBookQA & H & $92.76\pm 0.21$ & 3.525 & 3.654 & 3.673 & 3.721 & \textbf{0.066} & \textbf{33.9\%} & 4.000 & 91.54/3.591\\
OpenBookQA & T & $92.84\pm 0.20$ & 3.540 & 3.674 & 3.691 & 3.746 & \textbf{0.071} & \textbf{34.7\%} & 4.000 & 91.70/3.612\\
MedQA-US & H & $92.28\pm 0.19$ & 3.557 & 3.659 & 3.690 & 3.736 & \textbf{0.077} & \textbf{43.3\%} & 3.929 & 90.96/3.604\\
MedQA-US & T & $92.38\pm 0.15$ & 3.567 & 3.667 & 3.696 & 3.737 & \textbf{0.070} & \textbf{41.4\%} & 3.931 & 91.33/3.619\\
MedMCQA & H & $92.42\pm 0.15$ & 3.540 & 3.647 & 3.674 & 3.721 & \textbf{0.073} & \textbf{40.5\%} & 3.964 & 91.04/3.589\\
MedMCQA & T & $92.47\pm 0.14$ & 3.544 & 3.654 & 3.678 & 3.727 & \textbf{0.073} & \textbf{39.9\%} & 3.970 & 91.04/3.594\\
MMLU-Med & H & $91.56\pm 0.21$ & 3.519 & 3.596 & 3.630 & 3.653 & \textbf{0.057} & \textbf{42.4\%} & 3.895 & 90.06/3.529\\
MMLU-Med & T & $91.60\pm 0.19$ & 3.509 & 3.590 & 3.625 & 3.656 & \textbf{0.065} & \textbf{44.7\%} & 3.895 & 90.40/3.537\\
\bottomrule\end{tabular}
\par\smallskip\raggedright\footnotesize
H/T: hash/topic-source partition. Sizes are mean answers retained; Saved is
$\text{Del.}-\text{Fixed}$ and Cut is that saving as a share of deletion's excess over
the oracle, both before rounding. Merger: partial-conjunction $p$-value merger; guarded
symmetric trimming returns all four answers and is omitted. $^\dagger$Coverage/size of
the Rob-FCP adaptation, which defends calibration only, so it lacks a guarantee here; no
tested attack targets it.
\end{table}
\endgroup{}

\Cref{fig:real-savings} gives the paired saving from fixed membership in every
primary cell, the complete version of the comparison \cref{sec:experiments}
summarizes. Fixed-set is smaller than deletion in 40 of the 56 cells, in all 100
repeats of each, and equal in the remaining 16, never larger, as the proved
hierarchy requires. The 16 ties are exactly the two switching columns: low--high,
where the rules coincide (\cref{app:synthetic-reference}), and high--low, which
drives all of them to the full answer set.

\begin{figure}[H]
 \centering
 \includegraphics[width=\linewidth]{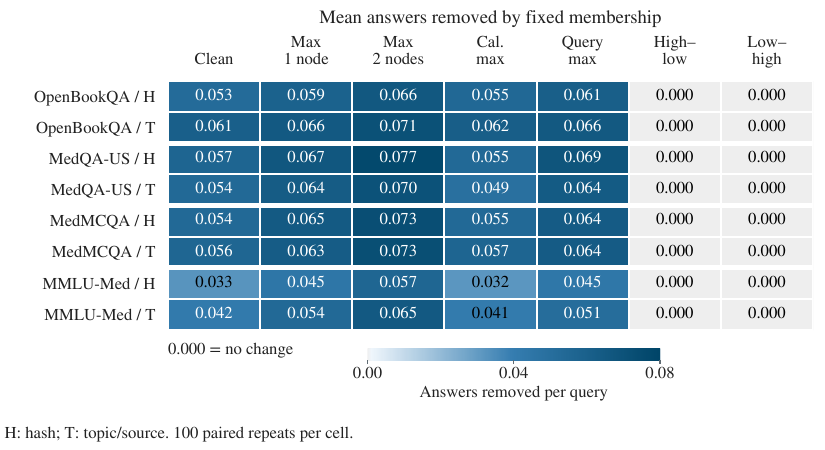}
 \caption{Fixed membership removes answers in every cell except the two switching
 columns, where the rules coincide or saturate.
 Deletion minus fixed-set mean raw set size over 100 paired repeats for each of the 56
 primary real-data cells. Rows are task and partition; columns are the seven threats, with
 \emph{Max, $j$ node(s)} denoting stable maximum at $j$ corrupt nodes, and \emph{Cal.\ max}
 and \emph{Query max} denoting the maximum sent on one channel only. Coverage for every cell is
 in \cref{tab:real-all}.}
 \label{fig:real-savings}
\end{figure}
\FloatBarrier

The full primary table (\cref{tab:real-all}) gives coverage and size for all
tasks, partitions, and threats at $K=16$, $A=2$, eight-bit reports, and
nominal 90\% coverage. Every task shows the same attack-dependent pattern.
Fixed-set is smaller than deletion under stable maximum. High--low drives them to all four
answers while the $p$-merger stays smaller on the medical tasks, and guarded
symmetric trimming returns all four candidates throughout. The consistency
across partitions concerns these fixed question pools and score functions; it
is not evidence of useful clinical answer selection.

The Rob-FCP adaptation remains empirically competitive across the complete
primary design. It is smaller than fixed-set in \RealRobSizeLowerCells{}
of 56 cells and has lower coverage in \RealRobCoverageLowerCells{}.
Secondary comparator ranges appear in \cref{tab:real-comparators}; threat
labels in \cref{tab:real-all,tab:real-comparators} follow \cref{fig:real-savings}.
\FloatBarrier

\begingroup
\let\resulttablefloat\table
\renewcommand{\table}[1][]{\resulttablefloat[H]}%
\begingroup\scriptsize
\setlength{\tabcolsep}{2.8pt}
\begin{longtable}{@{}lllrrrrrr@{}}
\caption{Fixed-set mean coverage exceeds 90\% in every primary cell, with smaller sets than deletion
except where they coincide. Entries are coverage (\%) / raw mean set size over 100 repeats
on fixed question pools, including empty sets. H/T denote hash/topic-source partitions.
These are repeat means; registered stable-maximum uncertainty appears in
\cref{tab:real-adjusted}.}\label{tab:real-all}\\
\toprule
Task & Part. & Threat & Oracle & Fixed & Joint & Deletion & Sym. & Merger\\\midrule
\endfirsthead
\toprule Task & Part. & Threat & Oracle & Fixed & Joint & Deletion & Sym. & Merger\\\midrule
\endhead
\bottomrule\endfoot
OpenBookQA & H & Clean & 90.10/3.526 & 92.50/3.638 & 92.70/3.647 & 93.56/3.691 & 100.00/4.000 & 100.00/4.000\\
OpenBookQA & H & Max, 1 & 90.06/3.525 & 92.62/3.647 & 92.93/3.661 & 93.88/3.706 & 100.00/4.000 & 100.00/4.000\\
OpenBookQA & H & Max, 2 & 90.07/3.525 & 92.76/3.654 & 93.17/3.673 & 94.11/3.721 & 100.00/4.000 & 100.00/4.000\\
OpenBookQA & H & Cal. max & 90.07/3.525 & 93.35/3.680 & 93.52/3.690 & 94.42/3.735 & 100.00/4.000 & 100.00/4.000\\
OpenBookQA & H & Query max & 90.07/3.525 & 91.93/3.614 & 92.30/3.630 & 93.22/3.675 & 100.00/4.000 & 100.00/4.000\\
OpenBookQA & H & High--low & 90.07/3.525 & 100.00/4.000 & 100.00/4.000 & 100.00/4.000 & 100.00/4.000 & 100.00/4.000\\
OpenBookQA & H & Low--high & 90.07/3.525 & 91.49/3.593 & 91.49/3.593 & 91.49/3.593 & 100.00/4.000 & 100.00/4.000\\
OpenBookQA & T & Clean & 90.01/3.541 & 92.38/3.656 & 92.58/3.667 & 93.64/3.717 & 100.00/4.000 & 100.00/4.000\\
OpenBookQA & T & Max, 1 & 89.98/3.540 & 92.57/3.664 & 92.88/3.678 & 93.90/3.730 & 100.00/4.000 & 100.00/4.000\\
OpenBookQA & T & Max, 2 & 89.96/3.540 & 92.84/3.674 & 93.20/3.691 & 94.22/3.746 & 100.00/4.000 & 100.00/4.000\\
OpenBookQA & T & Cal. max & 89.96/3.540 & 93.35/3.700 & 93.56/3.711 & 94.55/3.762 & 100.00/4.000 & 100.00/4.000\\
OpenBookQA & T & Query max & 89.96/3.540 & 91.93/3.632 & 92.18/3.646 & 93.27/3.698 & 100.00/4.000 & 100.00/4.000\\
OpenBookQA & T & High--low & 89.96/3.540 & 100.00/4.000 & 100.00/4.000 & 100.00/4.000 & 100.00/4.000 & 100.00/4.000\\
OpenBookQA & T & Low--high & 89.96/3.540 & 91.45/3.609 & 91.45/3.609 & 91.45/3.609 & 100.00/4.000 & 100.00/4.000\\
MedQA-US & H & Clean & 90.02/3.557 & 91.96/3.645 & 92.19/3.656 & 93.33/3.702 & 100.00/4.000 & 98.60/3.924\\
MedQA-US & H & Max, 1 & 90.01/3.556 & 92.10/3.652 & 92.62/3.674 & 93.72/3.719 & 100.00/4.000 & 98.62/3.926\\
MedQA-US & H & Max, 2 & 89.99/3.557 & 92.28/3.659 & 93.05/3.690 & 94.08/3.736 & 100.00/4.000 & 98.64/3.929\\
MedQA-US & H & Cal. max & 89.99/3.557 & 93.27/3.699 & 93.52/3.710 & 94.46/3.753 & 100.00/4.000 & 98.64/3.929\\
MedQA-US & H & Query max & 89.99/3.557 & 91.21/3.614 & 91.66/3.634 & 92.84/3.683 & 100.00/4.000 & 98.40/3.914\\
MedQA-US & H & High--low & 89.99/3.557 & 100.00/4.000 & 100.00/4.000 & 100.00/4.000 & 100.00/4.000 & 98.64/3.929\\
MedQA-US & H & Low--high & 89.99/3.557 & 91.01/3.606 & 91.01/3.606 & 91.01/3.606 & 100.00/4.000 & 98.38/3.913\\
MedQA-US & T & Clean & 89.96/3.565 & 92.16/3.656 & 92.47/3.669 & 93.46/3.711 & 100.00/4.000 & 98.49/3.926\\
MedQA-US & T & Max, 1 & 89.96/3.567 & 92.26/3.661 & 92.77/3.682 & 93.81/3.724 & 100.00/4.000 & 98.55/3.928\\
MedQA-US & T & Max, 2 & 90.01/3.567 & 92.38/3.667 & 93.12/3.696 & 94.06/3.737 & 100.00/4.000 & 98.62/3.931\\
MedQA-US & T & Cal. max & 90.01/3.567 & 93.36/3.705 & 93.59/3.716 & 94.50/3.754 & 100.00/4.000 & 98.62/3.931\\
MedQA-US & T & Query max & 90.01/3.567 & 91.45/3.625 & 91.87/3.645 & 92.94/3.689 & 100.00/4.000 & 98.17/3.913\\
MedQA-US & T & High--low & 90.01/3.567 & 100.00/4.000 & 100.00/4.000 & 100.00/4.000 & 100.00/4.000 & 98.62/3.931\\
MedQA-US & T & Low--high & 90.01/3.567 & 91.25/3.617 & 91.25/3.617 & 91.25/3.617 & 100.00/4.000 & 98.10/3.910\\
MedMCQA & H & Clean & 89.86/3.538 & 92.16/3.635 & 92.39/3.645 & 93.31/3.689 & 100.00/4.000 & 99.13/3.960\\
MedMCQA & H & Max, 1 & 89.87/3.538 & 92.27/3.640 & 92.70/3.659 & 93.65/3.705 & 100.00/4.000 & 99.17/3.962\\
MedMCQA & H & Max, 2 & 89.89/3.540 & 92.42/3.647 & 92.99/3.674 & 93.94/3.721 & 100.00/4.000 & 99.21/3.964\\
MedMCQA & H & Cal. max & 89.89/3.540 & 93.22/3.684 & 93.42/3.693 & 94.30/3.738 & 100.00/4.000 & 99.21/3.964\\
MedMCQA & H & Query max & 89.89/3.540 & 91.43/3.605 & 91.87/3.623 & 92.88/3.669 & 100.00/4.000 & 99.06/3.956\\
MedMCQA & H & High--low & 89.89/3.540 & 100.00/4.000 & 100.00/4.000 & 100.00/4.000 & 100.00/4.000 & 99.21/3.964\\
MedMCQA & H & Low--high & 89.89/3.540 & 91.22/3.596 & 91.22/3.596 & 91.22/3.596 & 100.00/4.000 & 98.86/3.946\\
MedMCQA & T & Clean & 89.89/3.543 & 92.14/3.640 & 92.41/3.651 & 93.40/3.696 & 100.00/4.000 & 99.28/3.966\\
MedMCQA & T & Max, 1 & 89.88/3.543 & 92.30/3.646 & 92.70/3.665 & 93.69/3.710 & 100.00/4.000 & 99.30/3.968\\
MedMCQA & T & Max, 2 & 89.89/3.544 & 92.47/3.654 & 92.99/3.678 & 94.04/3.727 & 100.00/4.000 & 99.35/3.970\\
MedMCQA & T & Cal. max & 89.89/3.544 & 93.22/3.688 & 93.44/3.698 & 94.43/3.745 & 100.00/4.000 & 99.35/3.970\\
MedMCQA & T & Query max & 89.89/3.544 & 91.48/3.612 & 91.88/3.629 & 92.95/3.676 & 100.00/4.000 & 99.20/3.963\\
MedMCQA & T & High--low & 89.89/3.544 & 100.00/4.000 & 100.00/4.000 & 100.00/4.000 & 100.00/4.000 & 99.35/3.970\\
MedMCQA & T & Low--high & 89.89/3.544 & 91.24/3.602 & 91.24/3.602 & 91.24/3.602 & 100.00/4.000 & 98.95/3.950\\
MMLU-Med & H & Clean & 89.81/3.518 & 91.40/3.589 & 91.66/3.600 & 92.17/3.622 & 100.00/4.000 & 97.99/3.891\\
MMLU-Med & H & Max, 1 & 89.80/3.519 & 91.50/3.593 & 91.99/3.615 & 92.59/3.639 & 100.00/4.000 & 98.03/3.893\\
MMLU-Med & H & Max, 2 & 89.82/3.519 & 91.56/3.596 & 92.36/3.630 & 92.98/3.653 & 100.00/4.000 & 98.06/3.895\\
MMLU-Med & H & Cal. max & 89.82/3.519 & 92.62/3.640 & 92.89/3.649 & 93.45/3.672 & 100.00/4.000 & 98.06/3.895\\
MMLU-Med & H & Query max & 89.82/3.519 & 90.69/3.558 & 91.19/3.579 & 91.71/3.603 & 100.00/4.000 & 97.67/3.874\\
MMLU-Med & H & High--low & 89.82/3.519 & 100.00/4.000 & 100.00/4.000 & 100.00/4.000 & 100.00/4.000 & 98.06/3.895\\
MMLU-Med & H & Low--high & 89.82/3.519 & 90.53/3.551 & 90.53/3.551 & 90.53/3.551 & 100.00/4.000 & 97.66/3.873\\
MMLU-Med & T & Clean & 89.74/3.510 & 91.42/3.583 & 91.72/3.595 & 92.38/3.625 & 100.00/4.000 & 97.89/3.888\\
MMLU-Med & T & Max, 1 & 89.75/3.510 & 91.54/3.587 & 92.08/3.611 & 92.75/3.641 & 100.00/4.000 & 97.95/3.891\\
MMLU-Med & T & Max, 2 & 89.73/3.509 & 91.60/3.590 & 92.37/3.625 & 93.11/3.656 & 100.00/4.000 & 98.01/3.895\\
MMLU-Med & T & Cal. max & 89.73/3.509 & 92.63/3.635 & 92.95/3.648 & 93.66/3.677 & 100.00/4.000 & 98.01/3.895\\
MMLU-Med & T & Query max & 89.73/3.509 & 90.61/3.549 & 91.12/3.571 & 91.80/3.600 & 100.00/4.000 & 97.57/3.870\\
MMLU-Med & T & High--low & 89.73/3.509 & 100.00/4.000 & 100.00/4.000 & 100.00/4.000 & 100.00/4.000 & 98.01/3.895\\
MMLU-Med & T & Low--high & 89.73/3.509 & 90.43/3.542 & 90.43/3.542 & 90.43/3.542 & 100.00/4.000 & 97.56/3.869\\
\end{longtable}
\endgroup
\endgroup{}
\FloatBarrier

The registered real-data contrasts quantify remaining robustness overhead
and the cost of alternative constructions. In \cref{tab:real-adjusted},
fixed-set minus oracle measures that overhead; symmetric or merger minus
deletion compares those alternatives with the scalable deletion rule.
The corrected intervals are simultaneous within each registered task/metric
family, without adjustment across all tasks or all three metrics. ``Full''
rows report the full-set rate, the share of queries whose set
contains all $M$ candidates.

\begingroup
\let\resulttablefloat\table
\renewcommand{\table}[1][]{\resulttablefloat[H]}%
\begin{table}[tbp]\centering\scriptsize
\setlength{\tabcolsep}{3pt}
\caption{Under stable maximum ($a=A=2$), all displayed contrasts indicate excess conservatism.
Entries are paired mean [lower, upper] with approximate 95\% simultaneous Monte Carlo
intervals within each 70-contrast task/metric family, for fixed question pools. Size is in
candidates; full-set rate is in percentage points (pp). H/T denote hash/topic-source
partitions. Positive values mean the first method exceeds its reference.}
\label{tab:real-adjusted}
\begin{tabular}{@{}lllrrr@{}}\toprule
Task & Part. & Metric & Fixed $-$ oracle & Sym. $-$ del. & Merger $-$ del.\\\midrule
OpenBookQA & H & Size & 0.129 [0.123, 0.136] & 0.279 [0.257, 0.301] & 0.279 [0.257, 0.301]\\
OpenBookQA & H & Full (pp) & 8.180 [7.762, 8.598] & 19.135 [17.778, 20.492] & 19.135 [17.778, 20.492]\\
OpenBookQA & T & Size & 0.134 [0.127, 0.141] & 0.254 [0.230, 0.278] & 0.254 [0.230, 0.278]\\
OpenBookQA & T & Full (pp) & 8.168 [7.709, 8.627] & 17.577 [16.031, 19.123] & 17.577 [16.031, 19.123]\\
MedQA-US & H & Size & 0.101 [0.096, 0.106] & 0.264 [0.250, 0.278] & 0.193 [0.180, 0.205]\\
MedQA-US & H & Full (pp) & 7.231 [6.863, 7.600] & 23.331 [22.220, 24.442] & 16.446 [15.447, 17.445]\\
MedQA-US & T & Size & 0.100 [0.095, 0.104] & 0.263 [0.250, 0.276] & 0.194 [0.181, 0.206]\\
MedQA-US & T & Full (pp) & 7.522 [7.160, 7.884] & 23.347 [22.312, 24.381] & 16.673 [15.723, 17.622]\\
MedMCQA & H & Size & 0.108 [0.104, 0.111] & 0.279 [0.263, 0.296] & 0.243 [0.225, 0.261]\\
MedMCQA & H & Full (pp) & 7.346 [7.090, 7.602] & 23.052 [21.840, 24.264] & 19.855 [18.401, 21.309]\\
MedMCQA & T & Size & 0.110 [0.106, 0.114] & 0.273 [0.258, 0.289] & 0.243 [0.226, 0.260]\\
MedMCQA & T & Full (pp) & 7.449 [7.174, 7.724] & 22.524 [21.349, 23.699] & 19.828 [18.451, 21.205]\\
MMLU-Med & H & Size & 0.077 [0.073, 0.082] & 0.347 [0.327, 0.368] & 0.242 [0.224, 0.261]\\
MMLU-Med & H & Full (pp) & 5.177 [4.847, 5.508] & 27.764 [26.243, 29.284] & 18.835 [17.412, 20.258]\\
MMLU-Med & T & Size & 0.081 [0.077, 0.085] & 0.344 [0.325, 0.363] & 0.239 [0.223, 0.255]\\
MMLU-Med & T & Full (pp) & 5.468 [5.164, 5.773] & 27.668 [26.279, 29.057] & 18.774 [17.518, 20.030]\\
\bottomrule\end{tabular}\end{table}
\endgroup{}

These contrasts are not the direct fixed-set-minus-deletion saving, which
remains a descriptive paired comparison, not a registered interval contrast;
its paired estimate and Monte Carlo standard error are retained as
machine-readable artifacts. To express the saving relative to deletion's excess size, let $\bar S_u$ be method $u$'s
repeat-mean raw set size. The reported ratio, the Cut column of \cref{tab:real-main}, is
$(\bar S_{\rm del}-\bar S_{\rm fs})/
(\bar S_{\rm del}-\bar S_{\rm oracle})$, computed separately for each
stable-maximum task/partition cell. This ratio of repeat means is neither
a mean of repeat-level ratios nor an additional registered confidence contrast.
\FloatBarrier

\subsection{Score-quality anchors and secondary calibration}
\label{app:score-quality}

The real-data gains reduce robustness overhead in a weak-score regime.
\Cref{tab:accuracy-anchors} compares the common ranker
(\cref{eq:common-point-ranker}) with no-retrieval and whole-corpus anchors.
With broad oracle sets and near-chance point accuracy, a smaller robust set
here is a reduction in overhead, not reliable forced-answer selection.

\begingroup
\let\resulttablefloat\table
\renewcommand{\table}[1][]{\resulttablefloat[H]}%
\begin{table}[tbp]\centering\small
\caption{Whole-corpus retrieval has the highest observed accuracy, but all scorers remain near 25\%
chance. Entries are percentage means $\pm$ one Monte Carlo standard error over 100 splits of
fixed question pools. The common ranker is shared by federated methods; No RAG omits
retrieval, and Union RAG retrieves four whole-corpus passages. No RAG and Union RAG scores
are reused across partitions.}
\label{tab:accuracy-anchors}
\begin{tabular}{@{}llrrr@{}}\toprule
Task & Partition & Common ranker & No RAG & Union RAG\\\midrule
OpenBookQA & Hash & $24.62\pm 0.09$ & $25.00\pm 0.09$ & $27.38\pm 0.10$\\
OpenBookQA & Topic/source & $24.59\pm 0.09$ & $25.00\pm 0.09$ & $27.38\pm 0.10$\\
MedQA-US & Hash & $24.10\pm 0.09$ & $24.57\pm 0.08$ & $26.03\pm 0.09$\\
MedQA-US & Topic/source & $24.71\pm 0.08$ & $24.57\pm 0.08$ & $26.03\pm 0.09$\\
MedMCQA & Hash & $28.12\pm 0.03$ & $26.89\pm 0.02$ & $29.72\pm 0.03$\\
MedMCQA & Topic/source & $27.52\pm 0.02$ & $26.89\pm 0.02$ & $29.72\pm 0.03$\\
MMLU-Med & Hash & $27.21\pm 0.10$ & $27.40\pm 0.10$ & $28.83\pm 0.11$\\
MMLU-Med & Topic/source & $27.17\pm 0.11$ & $27.40\pm 0.10$ & $28.83\pm 0.11$\\
\bottomrule\end{tabular}\end{table}
\endgroup{}
\FloatBarrier

The registered secondary analyses (supplement) show that the full sets are
partly a matter of score design. Under the separately calibrated NLL score of
\cref{app:rag-protocol}, no rule returns all four candidates for every query in
every repeat of any cell; across its 56 cells fixed-set mean size is 3.53--3.92
and mean coverage 90.20--98.58\%. It is a secondary analysis, not an
outcome-selected replacement for the primary score, and leaves the weak-model
limitation intact. The real-data $\alpha$ sensitivity shows that lower nominal
coverage permits smaller sets.
\FloatBarrier

\subsection{Real-data boundary controls and secondary comparators}

Exceeding the declared budget can break coverage. The real-data violation
repeats the low--high test with $a=3>A=2$, outside the eligible primary cells;
real fixed-set coverage falls to
\RealViolationMinCoverage--\RealViolationMaxCoverage\%.

\begingroup
\let\resulttablefloat\table
\renewcommand{\table}[1][]{\resulttablefloat[H]}%
\begin{table}[tbp]\centering\small
\caption{Underestimating corruption lowers all three proposed rules' coverage to 68.74--75.92\%,
below nominal 90\%. The deliberate violation uses $a=3>A=2$, low--high reporting, and
eight-bit reports. Entries are descriptive mean coverage percentages on fixed question
pools; the guarantees, which require $a\le A$, do not apply.}
\label{tab:real-bound-violation}
\begin{tabular}{@{}llrrrrr@{}}\toprule
Task & Partition & Fixed & Joint & Deletion & Symmetric & Merger\\\midrule
OpenBookQA & Hash & 68.74 & 68.74 & 68.74 & 100.00 & 100.00\\
OpenBookQA & Topic/source & 69.38 & 69.38 & 69.38 & 100.00 & 100.00\\
MedQA-US & Hash & 72.65 & 72.65 & 72.65 & 100.00 & 98.06\\
MedQA-US & Topic/source & 72.51 & 72.51 & 72.51 & 100.00 & 97.76\\
MedMCQA & Hash & 72.90 & 72.90 & 72.90 & 100.00 & 98.52\\
MedMCQA & Topic/source & 72.08 & 72.08 & 72.08 & 100.00 & 98.46\\
MMLU-Med & Hash & 75.92 & 75.92 & 75.92 & 100.00 & 97.35\\
MMLU-Med & Topic/source & 74.94 & 74.94 & 74.94 & 100.00 & 97.25\\
\bottomrule\end{tabular}\end{table}
\endgroup{}

The symmetric and merger results remain at or above nominal in these cells,
but that does not restore a guarantee whose declared budget has been
exceeded.
\FloatBarrier

The secondary real-data comparators show both competitive outcomes and
attack-sensitive failures. \Cref{tab:real-comparators} shows ranges over
all four tasks and both partitions at four reference threats; the complete
numerical artifact retains all seven threats and individual standard errors.

\begingroup
\let\resulttablefloat\table
\renewcommand{\table}[1][]{\resulttablefloat[H]}%
\begingroup\scriptsize
\setlength{\tabcolsep}{3pt}
\begin{longtable}{@{}llrrr@{}}
\caption{Low--high reporting drives several comparators below 90\% coverage; Krum and the Rob-FCP
adaptation remain above it in these cells. Entries span repeat means across four tasks and
both partitions (eight fixed-pool cells), not confidence intervals. Coverage and full-set
rate are percentages; size counts retained answers. All seven threats and individual Monte
Carlo standard errors remain in the numeric artifact.}\label{tab:real-comparators}\\
\toprule Method & Threat & Coverage range & Size range & Full-set range\\\midrule
\endfirsthead
\toprule Method & Threat & Coverage range & Size range & Full-set range\\\midrule
\endhead\bottomrule\endfoot
All-node mean & Clean & 89.74--90.15 & 3.509--3.565 & 62.11--70.00\\
All-node mean & Max, 2 & 89.70--90.10 & 3.508--3.568 & 62.02--69.94\\
All-node mean & High--low & 100.00--100.00 & 4.000--4.000 & 100.00--100.00\\
All-node mean & Low--high & 54.57--60.94 & 2.126--2.329 & 5.83--8.34\\
Median & Clean & 90.08--90.72 & 3.523--3.602 & 63.45--72.38\\
Median & Max, 2 & 90.04--90.79 & 3.520--3.603 & 63.22--72.50\\
Median & High--low & 91.13--91.88 & 3.571--3.644 & 66.61--75.21\\
Median & Low--high & 88.64--89.30 & 3.460--3.548 & 59.38--68.80\\
Winsorized & Clean & 89.77--90.04 & 3.507--3.582 & 62.23--70.60\\
Winsorized & Max, 2 & 89.80--90.10 & 3.508--3.582 & 62.41--70.49\\
Winsorized & High--low & 91.88--92.96 & 3.609--3.685 & 68.94--76.43\\
Winsorized & Low--high & 86.39--87.28 & 3.384--3.444 & 54.49--62.74\\
Unguarded trim & Clean & 89.85--90.07 & 3.510--3.581 & 62.50--70.73\\
Unguarded trim & Max, 2 & 89.84--90.08 & 3.512--3.582 & 62.74--70.56\\
Unguarded trim & High--low & 91.55--92.50 & 3.597--3.671 & 68.16--75.69\\
Unguarded trim & Low--high & 86.99--87.90 & 3.407--3.465 & 55.97--63.99\\
Median of means & Clean & 89.82--90.20 & 3.511--3.585 & 62.44--70.69\\
Median of means & Max, 2 & 89.85--90.19 & 3.513--3.586 & 62.66--70.77\\
Median of means & High--low & 92.32--93.01 & 3.626--3.694 & 70.14--76.81\\
Median of means & Low--high & 85.95--87.15 & 3.360--3.437 & 53.02--62.92\\
Krum & Clean & 90.16--90.93 & 3.524--3.614 & 63.53--72.63\\
Krum & Max, 2 & 90.18--90.90 & 3.522--3.615 & 63.02--72.47\\
Krum & High--low & 90.19--90.90 & 3.521--3.615 & 63.00--72.40\\
Krum & Low--high & 90.18--90.95 & 3.522--3.613 & 63.02--72.60\\
Local marginal & Clean & 90.16--91.27 & 3.533--3.620 & 64.10--73.81\\
Local marginal & Max, 2 & 89.85--90.89 & 3.515--3.599 & 63.00--72.76\\
Local marginal & High--low & 91.55--92.84 & 3.596--3.685 & 68.10--77.66\\
Local marginal & Low--high & 89.42--90.40 & 3.496--3.580 & 61.74--71.54\\
Rob-FCP adaptation & Clean & 91.05--92.29 & 3.574--3.661 & 66.67--75.85\\
Rob-FCP adaptation & Max, 2 & 90.06--91.70 & 3.529--3.619 & 63.65--74.29\\
Rob-FCP adaptation & High--low & 100.00--100.00 & 4.000--4.000 & 100.00--100.00\\
Rob-FCP adaptation & Low--high & 90.06--91.70 & 3.529--3.619 & 63.65--74.29\\
\end{longtable}
\endgroup
\endgroup{}

Several robust summaries remain near nominal under stable maximum but
undercover under low--high in \cref{tab:real-comparators}. Krum and the
Rob-FCP adaptation remain above nominal throughout the displayed low--high
cells, while local-marginal coverage spans both sides of nominal. Thus an
aggregator can be competitive on these fixed tasks without inheriting the
proposed guarantee against arbitrary reports in both phases.

The scoring audit confirms complete model outputs for all 246,330 held-out score vectors,
with zero fallbacks caused by a question or answer exceeding the token limit.
\FloatBarrier
{}
\FloatBarrier
\section{Assumption and claim checklist}
\label{app:checklist}

The coverage and minimality guarantees (\cref{thm:coverage,thm:fixed-set})
rest on the five conditions below and on nothing stronger hidden in the proofs;
items 2 and 3 also record what the baselines and the efficiency bounds add.

\begin{enumerate}
 \item Clean calibration and future examples are exchangeable (their joint law
 is unchanged by reordering) conditional on the scoring system frozen before
 calibration.
 \item The honest node set is fixed, authenticated, and contains at least
 $K-A$ nodes, with $A<K$ (baseline requirements:
 \cref{sec:symmetric-details,prop:pmerge}).
 \item Honest reports satisfy registered directional reconstruction-error
 bounds.  Neither a bounded decoded-report alphabet nor an honest-range
 certificate (a registered bound on the range of the honest scores) is required
 for the proposed rules' coverage when every honest node responds.  A
 bounded decoded-report alphabet is required for the pathwise upper efficiency
 sandwich; the expected-size bounds additionally require fixed certificate
 widths and the future $X$ to be independent of the clean calibration split
 conditional on the frozen system, as stated in \cref{cor:size-modulus}.
 The alternative bound \eqref{eq:size-density-alternative} also assumes a bounded
 true-label density.
 \item Honest nodes evaluate every shared calibration candidate and every
 future/query candidate with the same frozen score functions $s_i$, whose honest
 average $s_{\cH}$ (\cref{eq:honest-mean}) is the clean target.
 \item The finite candidate space, candidate ordering, and coordinate
 convention are frozen across phases.  Honest nodes evaluate every candidate;
 during calibration the hub selects the correct-answer (gold) coordinate only after receiving
 the complete score vectors.
\end{enumerate}

Beyond \cref{thm:coverage}'s stated exclusions, the theorems require no
i.i.d.\ nodes, unbiased noise, symmetric quantization or
Dvoretzky--Kiefer--Wolfowitz (DKW) approximation; the $p$-merger's extra
assumption is in \cref{prop:pmerge}.
{}
\FloatBarrier
\section{Restricted subset families and a prompt-induced panel failure}
\label{app:judge-panel}

This section makes two points. First, the fixed-set rule of \cref{sec:method}
can search any family of subsets chosen from data instead of $\mathfrak H_A$.
Coverage then costs one tail probability, but clean-score integrity holds only
while the true honest set is still searched (\cref{prop:restricted-family}).
Second, when bad reports are \emph{induced} rather than constructed, the
proposed rules hold coverage in every regime, while the unguarded all-node mean
does not when only calibration is misled.

A misled judge emits a finite report, so \cref{thm:coverage} already covers it;
the measurements locate ordinary misbehavior inside the proved envelope rather
than test the worst case. The clean reference \eqref{eq:honest-mean} is the
unquantized faithful-instruction mean over the judges that drew the faithful
instruction.

\paragraph{The panel.}
Sixteen nodes each hold a disjoint shard of the $1{,}326$ OpenBookQA evidence
facts. Each retrieves four passages per question by
BM25~\citep{robertson2009bm25} on its own shard, querying with the question and
its four options. Each then scores all four options of $1{,}000$ held-out
questions with \texttt{Llama-3.2-3B-Instruct}~\citep{grattafiori2024llama3}. The
score of an option is one minus the model's probability of its letter after
``Answer:'', renormalized over the four letters, so smaller scores favor
inclusion as in \cref{sec:setup}. Every question is scored under a faithful
instruction, to answer from the retrieved evidence, and an injected one, to
answer with the first listed option whatever the evidence says.

Each of 500 replicates permutes the questions into an audit pool of 100, a
calibration split of $n=333$ and 567 evaluation questions. It draws the misled
judges independently, fourteen with probability $0.02$ and two with probability
$0.5$, and reports eight-bit scores at $K=16$ and budgets $A=2,\ldots,5$. Only
the two trust-based comparators read the audit pool. A judge's \emph{audit
error} is the fraction of audit questions on which its smallest reported score
is not the correct answer's alone (ties count as errors). Over all judge--question
pairs, the audit error is $0.2990$ under the faithful instruction and $0.5874$
under the injected one. The injection thus makes a judge content-blind in one
direction rather than noisy. The injected audit error stays well below one partly because the correct
answer is listed first on $26.4\%$ of questions.

\subsection{A subset family chosen from data}

For a family $\mathcal G$ of nonempty subsets of $[K]$, define the restricted
margin and rule by
\begin{align*}
 \Delta_{\mathcal G,k}(V,w)
 &=\min_{H\in\mathcal G}
   \left\{\frac1{|H|}\sum_{i\in H}w_i-T_k(H;V)\right\},\\
 \cC^{\mathcal G}_\alpha(x)
 &=\left\{y:\Delta_{\mathcal G,k}\{V,w(x,y)\}\le g_\rho\right\},
\end{align*}
so that $\cC^{\mathfrak H_A}_\alpha$ is the fixed-set rule
\eqref{eq:fixed-set-rule}. Extending \cref{eq:honest-mean}, write
$s_H(x,y)=|H|^{-1}\sum_{i\in H}s_i(x,y)$ for the clean mean over a nonempty
$H\subseteq[K]$, $R^H_j=s_H(X_j,Y_j)$ for its clean calibration scores, and
$R^H_{(k)}$ for the $k$th smallest of $R^H_1,\ldots,R^H_n$.

\begin{proposition}[Restricted subset families]
\label{prop:restricted-family}
Assume $k\le n$ and that the registered directional bounds
\eqref{eq:directional-quantizer} hold for the honest nodes. Assume also that $\cA$ is
drawn once before calibration and held fixed for the whole calibration-to-query
comparison; this is the first condition of the failure law in
\cref{app:random-membership} and has a deterministic honest set as its
degenerate case. Let $\mathcal G$ be a random family of nonempty subsets of
$[K]$, chosen as a measurable function of audit data $\mathcal D$, and assume
the clean examples $Z_1,\ldots,Z_{n+1}$ are exchangeable conditional on
$(\mathcal D,\cA)$. Write
\begin{equation*}
 \varepsilon_{\mathcal G}
 =\Pp\{\,\text{no }H\in\mathcal G\text{ satisfies }H\subseteq\cH\,\}.
\end{equation*}
Then, uniformly over all finite Byzantine calibration and query reports,
\begin{equation}
 \Pp\{Y_{n+1}\in\cC^{\mathcal G}_\alpha(X_{n+1})\}
 \ \ge\ \frac{k}{n+1}\left(1-\varepsilon_{\mathcal G}\right).
 \label{eq:restricted-coverage}
\end{equation}
Moreover, clean-score integrity for $s_{\cH}$, that is
$\cC^\circ_\alpha(x)\subseteq\cC^{\mathcal G}_\alpha(x)$ for every query and
admissible transcript, holds whenever $\cH\in\mathcal G$, and it can fail when
$\cH\notin\mathcal G$. For $k=n+1$, $\cC^{\mathcal G}_\alpha$ returns $\cY$ and
coverage is one.
\end{proposition}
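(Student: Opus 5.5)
The plan is to reduce coverage to a clean rank event for a \emph{surviving} honest subset rather than for the full honest set $\cH$, and then average over the audit data. Fix any deterministic ordering of the nonempty subsets of $[K]$. On the event $E=\{\exists H\in\mathcal G: H\subseteq\cH\}$, let $H^\star$ be the first such subset in that ordering; off $E$, leave $H^\star$ undefined. Because $\mathcal G$ is a measurable function of $\mathcal D$ and $\cH=[K]\setminus\cA$, the set $H^\star$ is a measurable function of $(\mathcal D,\cA)$. So conditional on $(\mathcal D,\cA)$ it is a fixed subset of honest nodes.

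First key step: conditional on $(\mathcal D,\cA)$ with $E$ holding, the frozen score functions and $H^\star$ are fixed. Hence $R^{H^\star}_j=s_{H^\star}(X_j,Y_j)$, $j\in[n+1]$, is one fixed measurable map applied to exchangeable examples, and these scores are exchangeable (as in \cref{lem:phase-blind}, with no extra randomness). \Cref{lem:rank} then gives
\[
\Pp\{R^{H^\star}_{n+1}\le R^{H^\star}_{(k)}\mid \mathcal D,\cA\}\ge k/(n+1).
\]

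Second key step, which is pathwise: on this rank event, the candidate $Y_{n+1}$ is kept. Every node of $H^\star$ is honest, so \cref{eq:directional-quantizer} gives $c_j(H^\star;V)\ge R^{H^\star}_j-\rho_{\calib}^-$ for each $j$. Order-statistic monotonicity then yields $T_k(H^\star;V)\ge R^{H^\star}_{(k)}-\rho_{\calib}^-$. The same bounds give $\bar w_{H^\star}(X_{n+1},Y_{n+1})\le R^{H^\star}_{n+1}+\rho_{\qry}^+$. Chaining these, the margin at $H=H^\star\in\mathcal G$ is at most $g_\rho$, so $\Delta_{\mathcal G,k}\le g_\rho$ and $Y_{n+1}\in\cC^{\mathcal G}_\alpha(X_{n+1})$, whatever the Byzantine entries are. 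This is exactly the argument in the first inclusion of \cref{thm:fixed-set}, run with $H^\star$ in place of $\cH$.

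Averaging: taking expectation over $(\mathcal D,\cA)$, and bounding the conditional coverage below by $0$ off $E$, gives
\[
\Pp\{Y_{n+1}\in\cC^{\mathcal G}_\alpha(X_{n+1})\}\ge \frac{k}{n+1}\,\Pp(E)=\frac{k}{n+1}\,(1-\varepsilon_{\mathcal G}),
\]
exactly as in the proof of \cref{cor:random-membership}. The case $k=n+1$ is the full-set convention.

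Clean-score integrity when $\cH\in\mathcal G$ is the same pathwise computation with $H=\cH$. If $s_{\cH}(x,y)\le R_{(k)}$, then the $\cH$-margin is at most $g_\rho$, so $\cC^\circ_\alpha\subseteq\cC^{\mathcal G}_\alpha$.

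For failure when $\cH\notin\mathcal G$, I would reuse the example after \cref{thm:fixed-set}. Take $K=3$, $A=1$, $n=2$, $k=1$, zero padding, no Byzantine nodes (so $\cH=[3]$), and
\[
V=\begin{pmatrix}0&1\\1&0\\1&1\end{pmatrix},\qquad w=(3/5,3/5,3/5).
\]
Let $\mathcal G$ be the three two-node subsets, a legitimate (constant) data-chosen family. The oracle keeps the candidate because $3/5\le 2/3$. Every $H\in\mathcal G$ has $T_k(H;V)=1/2<3/5$, so $\cC^{\mathcal G}$ drops it. Note that $\varepsilon_{\mathcal G}=0$ here, which shows that coverage and integrity genuinely separate.

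The main obstacle is the first step. I need to justify that conditioning on $(\mathcal D,\cA)$ keeps $R^{H^\star}$ exchangeable even though $H^\star$ is data-chosen. This is where the hypothesis of exchangeability conditional on $(\mathcal D,\cA)$, and the measurable tie-breaking selection of $H^\star$, are used. The audit data must not be the calibration examples themselves.
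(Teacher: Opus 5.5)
Your proposal is correct and follows the paper's proof step for step. The lexicographically least $H\in\mathcal G$ with $H\subseteq\cH$ is fixed given $(\mathcal D,\cA)$, so \cref{lem:rank} applies to its clean mean. The directional bounds then give $\Delta_{\mathcal G,k}\le g_\rho$ pathwise, averaging over the $(\mathcal D,\cA)$-measurable event yields the factor $1-\varepsilon_{\mathcal G}$, and integrity is the same chain at $H=\cH$.

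The only difference is the failure example. The paper uses $K=2$, $\mathcal G=\{\{1\}\}$, zero calibration reports and query reports $(1,-1)$. Your $K=3$ example also works, since every two-node subset has $T_1(H;V)=1/2<3/5\le 2/3$. You should state that $k=1$ with $n=2$ requires $\alpha\ge 2/3$.
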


\begin{proof}
Let $E_{\mathcal G}$ be the event that some $H\in\mathcal G$ satisfies
$H\subseteq\cH$, so $\Pp(E_{\mathcal G})=1-\varepsilon_{\mathcal G}$.

\emph{Step 1: one all-honest subset suffices, pathwise.}
Fix any $H_0\in\mathcal G$ with $H_0\subseteq\cH$; it is nonempty. Averaging
the directional bounds \eqref{eq:directional-quantizer} over $i\in H_0$ gives,
for each calibration example $j$,
\[
 R^{H_0}_j-\rho_{\calib}^-\ \le\ c_j(H_0;V)\ \le\ R^{H_0}_j+\rho_{\calib}^+,
\]
and by monotonicity of order statistics
\[
 R^{H_0}_{(k)}-\rho_{\calib}^-\ \le\ T_k(H_0;V)
 \ \le\ R^{H_0}_{(k)}+\rho_{\calib}^+.
\]
At the query, on the true label, honesty of $H_0$ gives
\[
 \frac1{|H_0|}\sum_{i\in H_0}w_i(X_{n+1},Y_{n+1})
 \ \le\ R^{H_0}_{n+1}+\rho_{\qry}^+.
\]
The minimum defining $\Delta_{\mathcal G,k}$ runs over a family containing
$H_0$, so
\[
 \Delta_{\mathcal G,k}
 \ \le\ R^{H_0}_{n+1}+\rho_{\qry}^+-R^{H_0}_{(k)}+\rho_{\calib}^-.
\]
Hence on the clean rank event $\{R^{H_0}_{n+1}\le R^{H_0}_{(k)}\}$ for the
$H_0$-mean, $\Delta_{\mathcal G,k}\le\rho_{\qry}^++\rho_{\calib}^-=g_\rho$ and
$Y_{n+1}\in\cC^{\mathcal G}_\alpha(X_{n+1})$.

\emph{Step 2: the choice of $H_0$ does not break exchangeability.}
Condition on $(\mathcal D,\cA)$. On $E_{\mathcal G}$, let $H_0$ be the
lexicographically least $H\in\mathcal G$ with $H\subseteq\cH$. Since
$\mathcal G$ is a function of $\mathcal D$ and $\cH=[K]\setminus\cA$, $H_0$ is
deterministic given $(\mathcal D,\cA)$ and does not depend on the calibration
scores. The clean examples are exchangeable given $(\mathcal D,\cA)$, and
$R^{H_0}_j$ applies one fixed function to each, so \cref{lem:rank} gives
\[
 \Pp\{R^{H_0}_{n+1}\le R^{H_0}_{(k)}\mid\mathcal D,\cA\}\ \ge\ \frac{k}{n+1}.
\]

\emph{Step 3: average.}
The event $E_{\mathcal G}$ is $(\mathcal D,\cA)$-measurable, so combining
Steps~1 and~2 and discarding the complementary event,
\[
 \Pp\{Y_{n+1}\in\cC^{\mathcal G}_\alpha(X_{n+1})\}
 \ \ge\ \E\left[\ind_{E_{\mathcal G}}\,\frac{k}{n+1}\right]
 \ =\ \frac{k}{n+1}\left(1-\varepsilon_{\mathcal G}\right),
\]
which is \cref{eq:restricted-coverage}.

\emph{Step 4: integrity when $\cH\in\mathcal G$.}
Apply Step~1 with $H_0=\cH$. The clean rank event for the $\cH$-mean is
$\{Y_{n+1}\in\cC^\circ_\alpha(X_{n+1})\}$ by \cref{eq:clean-rank-event}, and the
same pathwise chain for an arbitrary candidate $y$ gives
$\cC^\circ_\alpha(x)\subseteq\cC^{\mathcal G}_\alpha(x)$. This is the
containment argument of \cref{thm:fixed-set} with $\mathfrak H_A$ replaced by
$\mathcal G$.

\emph{Step 5: integrity can fail when $\cH\notin\mathcal G$.}
Take $K=2$, $a=0$, so $\cH=\{1,2\}$, and $\mathcal G=\{\{1\}\}$. Let the
registered reconstruction errors be zero, so $g_\rho=0$, and take $n=1$ and
$\alpha=1/2$, so $k=1$. Let both nodes report $0$ at calibration, so
$T_1(\{1\};V)=0$ and $R_{(1)}=0$. For a candidate $y$, let node $1$ report $+1$
and node $2$ report $-1$. The oracle's query score is $0\le R_{(1)}$, so it
admits $y$; but $\Delta_{\mathcal G,1}=1>0=g_\rho$, so
$\cC^{\mathcal G}_\alpha$ excludes it.

Finally, for $k=n+1$ the full-set convention of \cref{sec:setup} returns $\cY$.
\end{proof}

An audit run on its own held-out block of questions, as here, meets the
exchangeability hypothesis; what the hypothesis excludes is an audit that inspects the clean
calibration scores. Neither $\varepsilon_{\mathcal G}$ nor whether
$\cH\in\mathcal G$ is observable at deployment.

\subsection{The two comparators as corollaries}

Both comparators act on the judges' audit errors. \emph{V1, trust pruning},
removes the two judges with the highest audit error and runs the fixed-set rule
on the other fourteen at budget $\max(0,A-2)$. \emph{V2, an audited budget},
flags the judges whose audit error exceeds $0.4$ and runs the fixed-set rule on
all sixteen at the audited budget $A'$. This is the smallest budget whose tail
under the corner rule of \cref{lem:membership-monotone}, with flagged judges
failing with probability $\bar p_2=0.5$ and the rest with $\bar p_1=0.02$, is at
most $\delta=0.01$.

\begin{corollary}[An audited budget]
\label{cor:judge-lowered-budget}
Let $A'<K$ be an audited budget, chosen from the audit data as in
\cref{prop:restricted-family}, and take $\mathcal G=\mathfrak H_{A'}$. Then
$\cH\in\mathcal G$ if and only if $a\le A'$, so
$\varepsilon_{\mathcal G}=\Pp\{a>A'\}$, and integrity holds on $\{a\le A'\}$.
If $A'\ge A_0$ on every realization for a constant $A_0<K$, then
$\varepsilon_{\mathcal G}\le\varepsilon_{A_0}$, the failure law's own tail.
If the audit returns a deterministic $A'$, the proposition reduces to
\cref{cor:random-membership} at $A'$.
\end{corollary}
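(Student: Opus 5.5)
The plan is to reduce every claim to \cref{prop:restricted-family} with $\mathcal G=\mathfrak H_{A'}$. The only new ingredient is a set-theoretic characterization of when $\mathfrak H_{A'}$ contains an all-honest subset, or the honest set itself.

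First, I will prove $\cH\in\mathfrak H_{A'}\iff a\le A'$. This follows from the definition in \cref{eq:feasible-honest-sets}: $\cH\in\mathfrak H_{A'}$ iff $|\cH|=K-a\ge K-A'$. Next, I will show that $\varepsilon_{\mathcal G}=\Pp\{a>A'\}$, which means checking that the event ``some $H\in\mathfrak H_{A'}$ with $H\subseteq\cH$'' coincides with $\{a\le A'\}$. If $a\le A'$, take $H=\cH$. Conversely, if $H\subseteq\cH$ with $|H|\ge K-A'$, then $K-a=|\cH|\ge|H|\ge K-A'$, so $a\le A'$. Hence the complement of $E_{\mathcal G}$ is exactly $\{a>A'\}$. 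Integrity on $\{a\le A'\}$ is then Step~4 of the proposition's proof, applied realization by realization: on that event $\cH\in\mathcal G$, and the pathwise containment $\cC^\circ_\alpha\subseteq\cC^{\mathcal G}_\alpha$ holds.

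For the comparison with $A_0$, I will observe that $A'\ge A_0$ pointwise gives $\{a>A'\}\subseteq\{a>A_0\}$. Taking probabilities yields $\varepsilon_{\mathcal G}\le\Pp\{a>A_0\}=\varepsilon_{A_0}$, where $\varepsilon_{A_0}$ is the failure law's tail as defined in \cref{app:random-membership}. Finally, if $A'$ is deterministic, then $\mathcal G=\mathfrak H_{A'}$ is a fixed family, so $\cC^{\mathcal G}_\alpha=\cC^{\rm fs}_\alpha$ at budget $A'$. The bound \cref{eq:restricted-coverage} then reads $\frac{k}{n+1}(1-\varepsilon_{A'})$, which is the fixed-set case of \cref{cor:random-membership}. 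The exchangeability hypotheses agree because conditioning on a trivial $\mathcal D$ is conditioning on $\cA$ alone.

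I expect no real obstacle here; the only point needing care is the measurability requirement of the proposition. I will state that $A'$ is a measurable function of the audit data $\mathcal D$, so $\mathcal G=\mathfrak H_{A'}$ is too. I will also note that the requirement $A'<K$ keeps every member of $\mathcal G$ nonempty, which the proposition needs.
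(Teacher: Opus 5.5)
Your proposal is correct and follows essentially the same route as the paper. Like the paper, you show that both $\cH\in\mathfrak H_{A'}$ and the existence of an all-honest $H\in\mathfrak H_{A'}$ reduce to $a\le A'$ via the size condition $|H|\ge K-A'$, then use $\{a>A'\}\subseteq\{a>A_0\}$, and read \cref{eq:restricted-coverage} as \cref{eq:random-membership-coverage} when $A'$ is deterministic, with $A'<K$ guaranteeing nonempty members.
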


\begin{proof}
Since $A'<K$, every member of $\mathfrak H_{A'}$ is nonempty, and a set
$H\subseteq\cH$ with $|H|\ge K-A'$ exists if and only if $a\le A'$, which is
also the condition $\cH\in\mathfrak H_{A'}$. If $A'\ge A_0$ always, then
$\{a>A'\}\subseteq\{a>A_0\}$; for deterministic $A'$,
\cref{eq:restricted-coverage} is \cref{eq:random-membership-coverage} at $A'$.
\end{proof}

\begin{corollary}[Removing the least-trusted nodes]
\label{cor:judge-trust-pruning}
Let $S\subseteq[K]$ with $|S|\le A$ be a removed set chosen as in
\cref{prop:restricted-family}, and take
$\mathcal G=\{H\in\mathfrak H_A:H\cap S=\emptyset\}$, the family the fixed-set
rule enumerates when it is run on the $K-|S|$ survivors at budget $A-|S|$. Then
$\cH\in\mathcal G$ if and only if $a\le A$ and every removed node was Byzantine,
$S\subseteq\cA$; on that event \cref{prop:restricted-family}'s integrity
guarantee applies. Further,
\[
 \varepsilon_{\mathcal G}=\Pp\{|\cH\setminus S|<K-A\}.
\]
\end{corollary}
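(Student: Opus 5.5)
The corollary reduces to \cref{prop:restricted-family}, so the plan is to check three things: that $S$ is an admissible audit-chosen object, that $\cH\in\mathcal G$ holds exactly on the stated event, and what $\varepsilon_{\mathcal G}$ is. Since $S$ is chosen as a measurable function of the audit data $\mathcal D$, the family $\mathcal G=\{H\in\mathfrak H_A:H\cap S=\emptyset\}$ is also a measurable function of $\mathcal D$. All its members are nonempty because $|H|\ge K-A\ge1$. Hence every hypothesis of the proposition holds, and \cref{eq:restricted-coverage} applies directly.

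First I will confirm that $\mathcal G$ is exactly the family enumerated by fixed-set on the survivors at budget $A-|S|$. The survivors form $[K]\setminus S$, of size $K-|S|$. A subset of the survivors is feasible at budget $A-|S|$ exactly when its size is at least $(K-|S|)-(A-|S|)=K-A$. That is precisely the condition $H\in\mathfrak H_A$ together with $H\cap S=\emptyset$.

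Next I will establish the membership equivalence. We have $\cH\in\mathcal G$ if and only if $|\cH|\ge K-A$ and $\cH\cap S=\emptyset$. The first condition is $a\le A$. The second, since $\cA=[K]\setminus\cH$, is $S\subseteq\cA$. On this event the integrity part of \cref{prop:restricted-family} gives $\cC^\circ_\alpha\subseteq\cC^{\mathcal G}_\alpha$.

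Finally I will compute $\varepsilon_{\mathcal G}$. I need to show that some $H\in\mathcal G$ with $H\subseteq\cH$ exists if and only if $|\cH\setminus S|\ge K-A$.
\begin{itemize}[itemsep=0pt,topsep=2pt]
\item[] \emph{Forward direction.} Any such $H$ lies in $\cH\setminus S$ and has size at least $K-A$, so $|\cH\setminus S|\ge K-A$.
\item[] \emph{Reverse direction.} The set $H=\cH\setminus S$ is disjoint from $S$, is contained in $\cH$, and has size at least $K-A$, so it belongs to $\mathfrak H_A$ and hence to $\mathcal G$.
\end{itemize}
Taking complements gives $\varepsilon_{\mathcal G}=\Pp\{|\cH\setminus S|<K-A\}$.

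The only delicate point is this last equivalence. One must notice that the family needs only some all-honest member, not $\cH$ itself. The coverage tail is therefore governed by $|\cH\setminus S|$, not by the stricter integrity event $S\subseteq\cA$.
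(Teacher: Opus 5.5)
Your proposal is correct and follows essentially the paper's own argument. The paper likewise observes that $H\in\mathcal G$ with $H\subseteq\cH$ amounts to $H\subseteq\cH\setminus S$ with $|H|\ge K-A$, so such an $H$ exists iff $|\cH\setminus S|\ge K-A$ (using $K-A\ge1$), and that $\cH\in\mathcal G$ reduces to $a\le A$ and $\cH\cap S=\emptyset$, i.e.\ $S\subseteq\cA$. Your extra checks that $\mathcal G$ is a function of the audit data and coincides with the survivors' family at budget $A-|S|$ are correct, and the paper leaves them implicit.
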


\begin{proof}
A set $H$ lies in $\mathcal G$ and satisfies $H\subseteq\cH$ exactly when
$H\subseteq\cH\setminus S$ and $|H|\ge K-A$. Since $K-A\ge1$, such an $H$ exists
if and only if $|\cH\setminus S|\ge K-A$. Membership $\cH\in\mathcal G$ asks
$a\le A$ and $\cH\cap S=\emptyset$, that is $S\subseteq\cA$.
\end{proof}

An audited budget keeps every node in the reference, but its advertised tail can
understate the true one, since the audit supplies a realized count of flagged
nodes rather than a bound on the law. Removal changes the estimand to the
survivors' mean, so V1's smaller sets answer a different question.

\subsection{What the measurements show}

\paragraph{The proposed rules hold coverage in every regime; the unguarded mean does not.}
A judge that drew the injected instruction is misled at calibration only, at the
query only, or in both phases (\cref{tab:judge-coverage}); the three regimes
share each replicate's misled set, oracle and audit. The three proposed rules
clear $k/(n+1)$ in every regime and budget, over all replicates and over the
budget-respecting ones ($a\le A$). Misleading only calibration drops the all-node mean
to $0.8862$; with per-cell Monte Carlo standard errors of about $0.001$, that is about 15
standard errors below $k/(n+1)$.

\paragraph{Why the calibration channel is the weak point.}
The injection compresses the upper tail of the correct-answer score, which
lowers the calibration threshold. Across all questions it raises the node-mean
correct-answer score (median $0.294$ to $0.694$). But the threshold reads an
upper order statistic, and on the 100 questions in the faithful panel's upper
decile the injected judges score \emph{lower}: $0.656$ against $0.925$. The
node-mean score's ninetieth percentile is lowest when about a quarter of the panel is misled, and the realized
corruption averages $1.27$ judges, so the sparse regime is the most exposed.

\paragraph{Fixed membership removes 45--75\% of deletion's excess.}
At $A=2$ fixed-set saves $0.21$--$0.30$ answers per query over deletion across
the three regimes, 45--75\% of deletion's excess over the honest-mean oracle
(\cref{tab:judge-sizes}). Each extra unit of budget costs deletion more than
fixed-set, so from $A=2$ to $A=5$ the saving roughly doubles, to $0.43$--$0.58$
answers per query (\cref{tab:judge-sizes}; \cref{fig:judge-panel}b shows the
both-phases regime).

\paragraph{Trust-based pruning answers a different question.}
At $A=2$, V1 loses clean-score integrity on 308 of the 464 budget-respecting
replicates, while fixed-set and V2 lose it on none (\cref{tab:judge-trust}). The
losses sit where pruning discards honest judges (\cref{cor:judge-trust-pruning});
the table notes give the breakdown and the limits of V2's tail.

\raggedbottom
\begingroup
\let\resulttablefloat\table
\renewcommand{\table}[1][]{\resulttablefloat[H]}%
\begin{table}[H]
\centering
\footnotesize
\caption{Every proposed rule clears its guaranteed floor in every regime and at every declared budget, while misleading only the calibration channel drops the unguarded all-node mean to 0.8862, below the nominal 0.90. Panel headings name the phases in which a misled judge misreports. Because membership is drawn, $\varepsilon_A$ is the exact probability that more than $A$ judges are misled and the guaranteed floor is $\{k/(n+1)\}(1-\varepsilon_A)$ (\cref{cor:random-membership}). The final panel keeps, for the proposed rules, only replicates with $a\le A$. Every entry in the final panel clears $k/(n+1)=0.9012$, the tightest being fixed-set at $A=2$ with only the calibration channel misled, at $0.9115$ over 464 replicates. The honest-mean oracle (not deployable) and the all-node mean (no guarantee) are references. Means over 500 replicates; $n=333$.}
\label{tab:judge-coverage}
\begin{tabular}{@{}lrrrr@{}}
\toprule
Rule & $A=2$ & $A=3$ & $A=4$ & $A=5$\\
\midrule
\multicolumn{5}{@{}l}{\emph{Misled at: both phases}}\\
$\varepsilon_A$ & 0.0777 & 0.0090 & 0.0007 & $3.7\times10^{-5}$\\
Guaranteed floor & 0.8312 & 0.8931 & 0.9006 & 0.9012\\
\midrule
Honest-mean oracle & 0.9010 & 0.9010 & 0.9010 & 0.9010\\
All-node mean & 0.9009 & 0.9009 & 0.9009 & 0.9009\\
Fixed-set & 0.9247 & 0.9316 & 0.9379 & 0.9444\\
Joint-threshold & 0.9332 & 0.9420 & 0.9495 & 0.9564\\
Deletion & 0.9516 & 0.9642 & 0.9735 & 0.9791\\
$p$-merger & 0.9814 & 0.9808 & 0.9802 & 0.9779\\
\midrule
\multicolumn{5}{@{}l}{\emph{Misled at: calibration only}}\\
$\varepsilon_A$ & 0.0777 & 0.0090 & 0.0007 & $3.7\times10^{-5}$\\
Guaranteed floor & 0.8312 & 0.8931 & 0.9006 & 0.9012\\
\midrule
Honest-mean oracle & 0.9010 & 0.9010 & 0.9010 & 0.9010\\
All-node mean & 0.8862 & 0.8862 & 0.8862 & 0.8862\\
Fixed-set & 0.9105 & 0.9172 & 0.9237 & 0.9301\\
Joint-threshold & 0.9151 & 0.9240 & 0.9325 & 0.9405\\
Deletion & 0.9362 & 0.9523 & 0.9636 & 0.9689\\
$p$-merger & 0.9820 & 0.9812 & 0.9805 & 0.9782\\
\midrule
\multicolumn{5}{@{}l}{\emph{Misled at: query only}}\\
$\varepsilon_A$ & 0.0777 & 0.0090 & 0.0007 & $3.7\times10^{-5}$\\
Guaranteed floor & 0.8312 & 0.8931 & 0.9006 & 0.9012\\
\midrule
Honest-mean oracle & 0.9010 & 0.9010 & 0.9010 & 0.9010\\
All-node mean & 0.9172 & 0.9172 & 0.9172 & 0.9172\\
Fixed-set & 0.9349 & 0.9411 & 0.9468 & 0.9523\\
Joint-threshold & 0.9384 & 0.9456 & 0.9523 & 0.9585\\
Deletion & 0.9539 & 0.9654 & 0.9730 & 0.9784\\
$p$-merger & 0.9797 & 0.9795 & 0.9790 & 0.9770\\
\midrule
\multicolumn{5}{@{}l}{\emph{Proposed rules, restricted to replicates with $a\le A$}}\\
\midrule
Fixed-set (both phases) & 0.9243 & 0.9317 & 0.9379 & 0.9444\\
Joint-threshold (both phases) & 0.9329 & 0.9421 & 0.9495 & 0.9564\\
Deletion (both phases) & 0.9516 & 0.9643 & 0.9735 & 0.9791\\
Fixed-set (calibration only) & 0.9115 & 0.9173 & 0.9237 & 0.9301\\
Joint-threshold (calibration only) & 0.9162 & 0.9241 & 0.9325 & 0.9405\\
Deletion (calibration only) & 0.9373 & 0.9524 & 0.9636 & 0.9689\\
Fixed-set (query only) & 0.9339 & 0.9411 & 0.9468 & 0.9523\\
Joint-threshold (query only) & 0.9374 & 0.9457 & 0.9523 & 0.9585\\
Deletion (query only) & 0.9534 & 0.9655 & 0.9730 & 0.9784\\
\bottomrule
\end{tabular}
\end{table}
\endgroup{}

\begingroup
\let\resulttablefloat\table
\renewcommand{\table}[1][]{\resulttablefloat[H]}%
\begin{table}[H]
\centering
\footnotesize
\caption{Fixed-set is the smallest of the three proposed rules in every panel and removes 45--75\% of deletion's excess over the oracle at $A=2$; each extra unit of declared budget costs it 0.048 to 0.067 candidates per query, against 0.108 to 0.175 for deletion. Entries are mean numbers of candidates per query out of $M=4$. No rule returned an empty set on any replicate, so every miss is a nonempty set that excludes the gold answer. Panels are split by misled phase, as in \cref{tab:judge-coverage}. The oracle's gap to a proposed rule is the price of not knowing which judges are honest; the all-node mean carries no guarantee. Means over 500 replicates; $n=333$.}
\label{tab:judge-sizes}
\begin{tabular}{@{}lrrrr@{}}
\toprule
Rule & $A=2$ & $A=3$ & $A=4$ & $A=5$\\
\midrule
\multicolumn{5}{@{}l}{\emph{Mean set size}, misled at: both phases}\\
\midrule
Honest-mean oracle & 1.718 & 1.718 & 1.718 & 1.718\\
All-node mean & 1.653 & 1.653 & 1.653 & 1.653\\
Fixed-set & 1.869 & 1.935 & 2.000 & 2.067\\
Joint-threshold & 1.959 & 2.056 & 2.145 & 2.230\\
Deletion & 2.167 & 2.342 & 2.505 & 2.642\\
$p$-merger & 2.820 & 2.756 & 2.726 & 2.651\\
\midrule
\multicolumn{5}{@{}l}{\emph{Mean set size}, misled at: calibration only}\\
\midrule
Honest-mean oracle & 1.718 & 1.718 & 1.718 & 1.718\\
All-node mean & 1.606 & 1.606 & 1.606 & 1.606\\
Fixed-set & 1.787 & 1.839 & 1.890 & 1.938\\
Joint-threshold & 1.831 & 1.901 & 1.964 & 2.024\\
Deletion & 1.995 & 2.133 & 2.257 & 2.365\\
$p$-merger & 2.833 & 2.771 & 2.741 & 2.669\\
\midrule
\multicolumn{5}{@{}l}{\emph{Mean set size}, misled at: query only}\\
\midrule
Honest-mean oracle & 1.718 & 1.718 & 1.718 & 1.718\\
All-node mean & 1.806 & 1.806 & 1.806 & 1.806\\
Fixed-set & 1.986 & 2.053 & 2.119 & 2.186\\
Joint-threshold & 2.022 & 2.103 & 2.181 & 2.263\\
Deletion & 2.203 & 2.364 & 2.495 & 2.625\\
$p$-merger & 2.785 & 2.728 & 2.700 & 2.624\\
\bottomrule
\end{tabular}
\end{table}
\endgroup{}

\begingroup
\let\resulttablefloat\table
\renewcommand{\table}[1][]{\resulttablefloat[H]}%
\begin{table}[H]
\centering
\scriptsize
\setlength{\tabcolsep}{3pt}
\caption{Fixed-set never loses clean-score integrity. Pruning the two least-trusted judges (V1) returns sets 0.034 to 0.054 candidates per query smaller across declared budgets when only the calibration channel is misled but excludes an answer the honest-mean oracle admits on up to 308 of 464 budget-respecting replicates; the audited budget (V2) keeps every judge in the reference and loses integrity on no budget-respecting replicate here. ``Lost integrity'' counts budget-respecting replicates on which a rule excluded at least one query--candidate pair the honest-mean oracle admits, so fixed-set's zeros confirm part 1 of \cref{thm:fixed-set}. The notes after the table define both comparators and qualify their rows. Means over 500 replicates; $n=333$.}
\label{tab:judge-trust}
\begin{tabular}{@{}llrrrrr@{}}
\toprule
Misled at & Rule & Coverage & Cov.\ ($a\le A$) & Size & Nodes in ref. & Lost integrity ($a\le A$)\\
\midrule
\multicolumn{7}{@{}l}{\emph{By regime, declared $A=2$}}\\
\midrule
Both phases & Fixed-set & 0.9247 & 0.9243 & 1.869 & 16 & 0 / 464\\
 & V1, trust pruning & 0.9041 & 0.9039 & 1.736 & 14 & 308 / 464\\
 & V2, audited budget & 0.9316 & 0.9303 & 1.940 & 16 & 0 / 464\\
\midrule
Calibration only & Fixed-set & 0.9105 & 0.9115 & 1.787 & 16 & 0 / 464\\
 & V1, trust pruning & 0.9030 & 0.9039 & 1.733 & 14 & 308 / 464\\
 & V2, audited budget & 0.9172 & 0.9172 & 1.840 & 16 & 0 / 464\\
\midrule
Query only & Fixed-set & 0.9349 & 0.9339 & 1.986 & 16 & 0 / 464\\
 & V1, trust pruning & 0.9052 & 0.9039 & 1.745 & 14 & 308 / 464\\
 & V2, audited budget & 0.9408 & 0.9396 & 2.055 & 16 & 0 / 464\\
\midrule
\multicolumn{7}{@{}l}{\emph{By declared budget, misled at calibration only}}\\
\midrule
$A=2$ & Fixed-set & 0.9105 & 0.9115 & 1.787 & 16 & 0 / 464\\
 & V1, trust pruning & 0.9030 & 0.9039 & 1.733 & 14 & 308 / 464\\
 & V2, audited budget & 0.9172 & 0.9172 & 1.840 & 16 & 0 / 464\\
\midrule
$A=3$ & Fixed-set & 0.9172 & 0.9173 & 1.839 & 16 & 0 / 498\\
 & V1, trust pruning & 0.9116 & 0.9117 & 1.797 & 14 & 242 / 498\\
 & V2, audited budget & 0.9172 & 0.9173 & 1.840 & 16 & 0 / 498\\
\midrule
$A=4$ & Fixed-set & 0.9237 & 0.9237 & 1.890 & 16 & 0 / 500\\
 & V1, trust pruning & 0.9186 & 0.9186 & 1.852 & 14 & 163 / 500\\
 & V2, audited budget & 0.9172 & 0.9172 & 1.840 & 16 & 0 / 500\\
\midrule
$A=5$ & Fixed-set & 0.9301 & 0.9301 & 1.938 & 16 & 0 / 500\\
 & V1, trust pruning & 0.9256 & 0.9256 & 1.904 & 14 & 80 / 500\\
 & V2, audited budget & 0.9172 & 0.9172 & 1.840 & 16 & 0 / 500\\
\midrule
\multicolumn{7}{@{}l}{\emph{What the audit classified, per replicate (common to all three regimes)}}\\
\midrule
$A'$ & & Replicates & Share & Flagged & Misled missed & Faithful flagged\\
2 & & 82 & 0.164 & 0.000 & 0.000 & 0.000\\
3 & & 364 & 0.728 & 1.387 & 0.000 & 0.022\\
4 & & 38 & 0.076 & 3.000 & 0.000 & 0.237\\
5 & & 14 & 0.028 & 4.286 & 0.000 & 2.214\\
7 & & 2 & 0.004 & 7.000 & 0.000 & 4.500\\
\bottomrule
\end{tabular}
\end{table}
{\footnotesize\noindent \emph{Notes to \cref{tab:judge-trust}.} Both comparators read an audit pool of 100 questions disjoint from calibration and evaluation. V1 prunes the two judges with the highest audit error and runs the fixed-set rule on the rest at budget $\max(0,A-2)$; V2 flags judges whose audit error exceeds $0.4$ and runs the fixed-set rule on all sixteen at the audited budget $A'$, the smallest budget whose tail under the corner law (flagged judges at $\bar p_2=0.5$, the rest at $\bar p_1=0.02$) is at most $\delta=0.01$. V2 ignores $A$, so its row repeats down the by-budget panel. Broken out by realized misled count at $A=2$, V1 loses integrity on 86 of the 86 replicates at a realized count of 0, on 222 of the 230 at 1, and on 0 of the 148 at 2: pruning two judges preserves the honest mean only when exactly two were misled and the audit ranked them first, and the losses sit where it discards honest judges instead. The upper panel holds $A=2$ and varies the regime: on budget-respecting replicates the calibration-only and query-only rows differ by 2.2 points for both fixed-set and V2. V1's rows coincide on all 464 of them, an invariance of the construction rather than robustness: the audit is not split by regime, and there the two pruned judges include every misled one. Its marginal coverage moves by 0.2 points only through the 36 over-budget replicates. The audited budget exceeds the declared $A=2$ on 418 of the 500 replicates (modal $A'=3$, on 364), so V2 is a restriction of the declared family only at larger declared budgets; the bottom panel is common to all three regimes. V2's advertised tail assumes the flagged judges bound how many carry a failure probability above $\bar p_1$ (\cref{lem:membership-monotone}); the flagged count is instead a realized one, and it falls below the law's 2 suspects on 305 of the 500 replicates. Where no judge is flagged (82 replicates) it returns $A'=2$ and advertises a tail of 0.0037, against 0.0777 under the law. The audit missed no misled judge but flagged 57 faithful ones in total, so $A'\ge a$ held on all 500 replicates while $A'$ is not a function of the realized count; the tail \cref{prop:restricted-family} spends, $\mathbb{P}\{a>A'\}$, is bounded through the smallest audited budget, $A'\ge2$, by the law's 0.0777. Fixed-set is decided in exact rational arithmetic and both comparators in floating point; the two ways of deciding fixed-set disagree on 2,496 of 13,608,000 decisions.\par}
\endgroup{}
{}
\FloatBarrier

\begin{thebibliography}{62}
\providecommand{\natexlab}[1]{#1}
\providecommand{\url}[1]{\texttt{#1}}
\expandafter\ifx\csname urlstyle\endcsname\relax
  \providecommand{\doi}[1]{doi: #1}\else
  \providecommand{\doi}{doi: \begingroup \urlstyle{rm}\Url}\fi

\bibitem[Angelopoulos \& Bates(2023)Angelopoulos and
  Bates]{angelopoulos2023gentle}
Anastasios~N. Angelopoulos and Stephen Bates.
\newblock Conformal prediction: A gentle introduction.
\newblock \emph{Foundations and Trends in Machine Learning}, 16\penalty0
  (4):\penalty0 494--591, 2023.
\newblock \doi{10.1561/2200000101}.

\bibitem[Barber et~al.(2023)Barber, Cand{\`e}s, Ramdas, and
  Tibshirani]{barber2023beyond}
Rina~Foygel Barber, Emmanuel~J. Cand{\`e}s, Aaditya Ramdas, and Ryan~J.
  Tibshirani.
\newblock Conformal prediction beyond exchangeability.
\newblock \emph{The Annals of Statistics}, 51\penalty0 (2):\penalty0 816--845,
  2023.
\newblock \doi{10.1214/23-AOS2276}.

\bibitem[Bell et~al.(2023)Bell, Gasc{\'o}n, Lepoint, Li, Meiklejohn, Raykova,
  and Yun]{bell2023acorn}
James Bell, Adri{\`a} Gasc{\'o}n, Tancr{\`e}de Lepoint, Baiyu Li, Sarah
  Meiklejohn, Mariana Raykova, and Cathie Yun.
\newblock {ACORN}: Input validation for secure aggregation.
\newblock In \emph{32nd USENIX Security Symposium (USENIX Security 23)}, pp.\
  4805--4822. USENIX Association, 2023.
\newblock URL
  \url{https://www.usenix.org/conference/usenixsecurity23/presentation/bell}.

\bibitem[Benjamini \& Heller(2008)Benjamini and Heller]{benjamini2008partial}
Yoav Benjamini and Ruth Heller.
\newblock Screening for partial conjunction hypotheses.
\newblock \emph{Biometrics}, 64\penalty0 (4):\penalty0 1215--1222, 2008.
\newblock \doi{10.1111/j.1541-0420.2007.00984.x}.

\bibitem[Blanchard et~al.(2017)Blanchard, El~Mhamdi, Guerraoui, and
  Stainer]{blanchard2017krum}
Peva Blanchard, El~Mahdi El~Mhamdi, Rachid Guerraoui, and Julien Stainer.
\newblock Machine learning with adversaries: {B}yzantine tolerant gradient
  descent.
\newblock In \emph{Advances in Neural Information Processing Systems},
  volume~30, pp.\  119--129. Curran Associates, Inc., 2017.
\newblock URL
  \url{https://proceedings.neurips.cc/paper/2017/hash/f4b9ec30ad9f68f89b29639786cb62ef-Abstract.html}.

\bibitem[Bonawitz et~al.(2017)Bonawitz, Ivanov, Kreuter, Marcedone, McMahan,
  Patel, Ramage, Segal, and Seth]{bonawitz2017secure}
Keith Bonawitz, Vladimir Ivanov, Ben Kreuter, Antonio Marcedone, H.~Brendan
  McMahan, Sarvar Patel, Daniel Ramage, Aaron Segal, and Karn Seth.
\newblock Practical secure aggregation for privacy-preserving machine learning.
\newblock In \emph{Proceedings of the 2017 ACM SIGSAC Conference on Computer
  and Communications Security}, pp.\  1175--1191. ACM, 2017.
\newblock \doi{10.1145/3133956.3133982}.

\bibitem[Campos et~al.(2024)Campos, Farinhas, Zerva, Figueiredo, and
  Martins]{campos2024cpnlp}
Margarida Campos, Ant{\'o}nio Farinhas, Chrysoula Zerva, M{\'a}rio A.~T.
  Figueiredo, and Andr{\'e} F.~T. Martins.
\newblock Conformal prediction for natural language processing: A survey.
\newblock \emph{Transactions of the Association for Computational Linguistics},
  12:\penalty0 1497--1516, 2024.
\newblock \doi{10.1162/tacl_a_00715}.

\bibitem[Chakraborty et~al.(2025)Chakraborty, Dahal, and
  Gupta]{chakraborty2025fedragmap}
Abhijit Chakraborty, Chahana Dahal, and Vivek Gupta.
\newblock Federated retrieval-augmented generation: A systematic mapping study.
\newblock In \emph{Findings of the Association for Computational Linguistics:
  EMNLP 2025}, pp.\  7362--7374. Association for Computational Linguistics,
  2025.
\newblock \doi{10.18653/v1/2025.findings-emnlp.388}.

\bibitem[Chen et~al.(2017)Chen, Su, and Xu]{chen2017byzantine}
Yudong Chen, Lili Su, and Jiaming Xu.
\newblock Distributed statistical machine learning in adversarial settings:
  {Byzantine} gradient descent.
\newblock \emph{Proceedings of the ACM on Measurement and Analysis of Computing
  Systems}, 1\penalty0 (2):\penalty0 1--25, 2017.
\newblock \doi{10.1145/3154503}.

\bibitem[Clarkson et~al.(2024)Clarkson, Xu, Cucuringu, and
  Reinert]{clarkson2024contamination}
Jason Clarkson, Wenkai Xu, Mihai Cucuringu, and Gesine Reinert.
\newblock Split conformal prediction under data contamination.
\newblock In \emph{Proceedings of the Thirteenth Symposium on Conformal and
  Probabilistic Prediction with Applications}, volume 230 of \emph{Proceedings
  of Machine Learning Research}, pp.\  5--27. PMLR, 2024.
\newblock URL \url{https://proceedings.mlr.press/v230/clarkson24a.html}.

\bibitem[Dhasade et~al.(2026)Dhasade, Guerraoui, Kermarrec, Petrescu, Pires,
  Randl, and de~Vos]{dhasade2026ragroute}
Akash Dhasade, Rachid Guerraoui, Anne-Marie Kermarrec, Diana Petrescu, Rafael
  Pires, Mathis Randl, and Martijn de~Vos.
\newblock Efficient federated search for retrieval-augmented generation using
  lightweight routing.
\newblock In \emph{Distributed Applications and Interoperable Systems}, volume
  16591 of \emph{Lecture Notes in Computer Science}, pp.\  3--20. Springer,
  2026.
\newblock \doi{10.1007/978-3-032-27358-1_1}.

\bibitem[Dubey \& Huo(2026{\natexlab{a}})Dubey and Huo]{dubeyhuo2026anytime}
Prasanjit Dubey and Xiaoming Huo.
\newblock Anytime-valid federated conformal {RAG} for {LLM} swarms,
  2026{\natexlab{a}}.
\newblock URL \url{https://arxiv.org/abs/2605.29139}.
\newblock arXiv:2605.29139v1.

\bibitem[Dubey \& Huo(2026{\natexlab{b}})Dubey and Huo]{dubeyhuo2026bandwidth}
Prasanjit Dubey and Xiaoming Huo.
\newblock Federated language models under bandwidth budgets: Distillation rates
  and conformal coverage, 2026{\natexlab{b}}.
\newblock URL \url{https://arxiv.org/abs/2605.09986}.
\newblock arXiv:2605.09986v2.

\bibitem[Feng et~al.(2025)Feng, Sui, Hou, Cresswell, and
  Wu]{feng2025conditionalrag}
Naihe Feng, Yi~Sui, Shiyi Hou, Jesse~C. Cresswell, and Ga~Wu.
\newblock Response quality assessment for retrieval-augmented generation via
  conditional conformal factuality.
\newblock In \emph{Proceedings of the 48th International ACM SIGIR Conference
  on Research and Development in Information Retrieval}, pp.\  2832--2836. ACM,
  2025.
\newblock \doi{10.1145/3726302.3730244}.

\bibitem[Gasparin \& Ramdas(2024)Gasparin and Ramdas]{gasparin2024merging}
Matteo Gasparin and Aaditya Ramdas.
\newblock Merging uncertainty sets via majority vote, 2024.
\newblock URL \url{https://arxiv.org/abs/2401.09379}.

\bibitem[Gendler et~al.(2022)Gendler, Weng, Daniel, and
  Romano]{gendler2022adversarial}
Asaf Gendler, Tsui-Wei Weng, Luca Daniel, and Yaniv Romano.
\newblock Adversarially robust conformal prediction.
\newblock In \emph{International Conference on Learning Representations}, 2022.
\newblock URL \url{https://openreview.net/forum?id=9L1BsI4wP1H}.

\bibitem[Grattafiori et~al.(2024)Grattafiori, Dubey, Jauhri, Pandey, Kadian,
  et~al.]{grattafiori2024llama3}
Aaron Grattafiori, Abhimanyu Dubey, Abhinav Jauhri, Abhinav Pandey, Abhishek
  Kadian, et~al.
\newblock The {Llama} 3 herd of models, 2024.
\newblock URL \url{https://arxiv.org/abs/2407.21783}.

\bibitem[Guerraoui et~al.(2024)Guerraoui, Gupta, and
  Pinot]{guerraoui2024primer}
Rachid Guerraoui, Nirupam Gupta, and Rafael Pinot.
\newblock {Byzantine} machine learning: A primer.
\newblock \emph{ACM Computing Surveys}, 56\penalty0 (7):\penalty0 1--39, 2024.
\newblock \doi{10.1145/3616537}.

\bibitem[He et~al.(2025)He, Yuan, Wu, Liu, and Ni]{he2025pfedrag}
Hangyu He, Xin Yuan, Kai Wu, Ren~Ping Liu, and Wei Ni.
\newblock p{F}ed{RAG}: A personalized federated retrieval-augmented generation
  system with depth-adaptive tiered embedding tuning.
\newblock In \emph{Findings of the Association for Computational Linguistics:
  EMNLP 2025}, pp.\  14255--14268, Suzhou, China, 2025. Association for
  Computational Linguistics.
\newblock \doi{10.18653/v1/2025.findings-emnlp.769}.
\newblock URL \url{https://aclanthology.org/2025.findings-emnlp.769/}.

\bibitem[Hendrycks et~al.(2021)Hendrycks, Burns, Basart, Zou, Mazeika, Song,
  and Steinhardt]{hendrycks2021mmlu}
Dan Hendrycks, Collin Burns, Steven Basart, Andy Zou, Mantas Mazeika, Dawn
  Song, and Jacob Steinhardt.
\newblock Measuring massive multitask language understanding.
\newblock In \emph{International Conference on Learning Representations}, 2021.
\newblock URL \url{https://openreview.net/forum?id=d7KBjmI3GmQ}.

\bibitem[Humbert et~al.(2023)Humbert, Le~Bars, Bellet, and
  Arlot]{humbert2023oneshot}
Pierre Humbert, Batiste Le~Bars, Aur{\'e}lien Bellet, and Sylvain Arlot.
\newblock One-shot federated conformal prediction.
\newblock In \emph{Proceedings of the 40th International Conference on Machine
  Learning}, volume 202 of \emph{Proceedings of Machine Learning Research},
  pp.\  14153--14177. PMLR, 2023.
\newblock URL \url{https://proceedings.mlr.press/v202/humbert23a.html}.

\bibitem[Jeary et~al.(2024)Jeary, Kuipers, Hosseini, and
  Paoletti]{jeary2024verifiably}
Linus Jeary, Tom Kuipers, Mehran Hosseini, and Nicola Paoletti.
\newblock Verifiably robust conformal prediction.
\newblock In \emph{Advances in Neural Information Processing Systems},
  volume~37, pp.\  4295--4314. Curran Associates, Inc., 2024.
\newblock \doi{10.52202/079017-0140}.
\newblock URL
  \url{https://proceedings.neurips.cc/paper_files/paper/2024/file/0814a342597d65e0832fc7ec9b42c317-Paper-Conference.pdf}.

\bibitem[Jin et~al.(2021)Jin, Pan, Oufattole, Weng, Fang, and
  Szolovits]{jin2020medqa}
Di~Jin, Eileen Pan, Nassim Oufattole, Wei-Hung Weng, Hanyi Fang, and Peter
  Szolovits.
\newblock What disease does this patient have? {A} large-scale open domain
  question answering dataset from medical exams.
\newblock \emph{Applied Sciences}, 11\penalty0 (14):\penalty0 6421, 2021.
\newblock \doi{10.3390/app11146421}.

\bibitem[Kang et~al.(2024{\natexlab{a}})Kang, G{\"u}rel, Yu, Song, and
  Li]{kang2024crag}
Mintong Kang, Nezihe~Merve G{\"u}rel, Ning Yu, Dawn Song, and Bo~Li.
\newblock C-{RAG}: Certified generation risks for retrieval-augmented language
  models.
\newblock In \emph{Proceedings of the 41st International Conference on Machine
  Learning}, volume 235 of \emph{Proceedings of Machine Learning Research},
  pp.\  22963--23000. PMLR, 2024{\natexlab{a}}.
\newblock URL \url{https://proceedings.mlr.press/v235/kang24a.html}.

\bibitem[Kang et~al.(2024{\natexlab{b}})Kang, Lin, Sun, Xiao, and
  Li]{kang2024robfcp}
Mintong Kang, Zhen Lin, Jimeng Sun, Cao Xiao, and Bo~Li.
\newblock Certifiably {B}yzantine-robust federated conformal prediction.
\newblock In \emph{Proceedings of the 41st International Conference on Machine
  Learning}, volume 235 of \emph{Proceedings of Machine Learning Research},
  pp.\  23022--23057. PMLR, 2024{\natexlab{b}}.
\newblock URL \url{https://proceedings.mlr.press/v235/kang24c.html}.

\bibitem[Karimireddy et~al.(2022)Karimireddy, He, and
  Jaggi]{karimireddy2022bucketing}
Sai~Praneeth Karimireddy, Lie He, and Martin Jaggi.
\newblock {B}yzantine-robust learning on heterogeneous datasets via bucketing.
\newblock In \emph{International Conference on Learning Representations}, 2022.
\newblock URL \url{https://openreview.net/forum?id=jXKKDEi5vJt}.

\bibitem[Kumar et~al.(2023)Kumar, Lu, Gupta, Palepu, Bellamy, Raskar, and
  Beam]{kumar2023conformalmcqa}
Bhawesh Kumar, Charlie Lu, Gauri Gupta, Anil Palepu, David Bellamy, Ramesh
  Raskar, and Andrew Beam.
\newblock Conformal prediction with large language models for multi-choice
  question answering, 2023.
\newblock URL \url{https://arxiv.org/abs/2305.18404}.
\newblock ICML 2023 Workshop on Neural Conversational AI (TEACH).

\bibitem[Lari et~al.(2026{\natexlab{a}})Lari, Arablouei, and
  Werner]{lari2026partial}
Ehsan Lari, Reza Arablouei, and Stefan Werner.
\newblock Partial model sharing improves {Byzantine} resilience in federated
  conformal prediction, 2026{\natexlab{a}}.
\newblock URL \url{https://arxiv.org/abs/2605.11684}.
\newblock Accepted to the 34th European Signal Processing Conference (EUSIPCO
  2026).

\bibitem[Lari et~al.(2026{\natexlab{b}})Lari, Arablouei, and
  Werner]{lari2026prism}
Ehsan Lari, Reza Arablouei, and Stefan Werner.
\newblock Communication-efficient {Byzantine}-robust federated conformal
  prediction via partial sharing.
\newblock \emph{IEEE Transactions on Signal Processing}, pp.\  1--16,
  2026{\natexlab{b}}.
\newblock \doi{10.1109/TSP.2026.3737023}.

\bibitem[Lei et~al.(2018)Lei, G'Sell, Rinaldo, Tibshirani, and
  Wasserman]{lei2018distribution}
Jing Lei, Max G'Sell, Alessandro Rinaldo, Ryan~J. Tibshirani, and Larry
  Wasserman.
\newblock Distribution-free predictive inference for regression.
\newblock \emph{Journal of the American Statistical Association}, 113\penalty0
  (523):\penalty0 1094--1111, 2018.
\newblock \doi{10.1080/01621459.2017.1307116}.

\bibitem[Lewis et~al.(2020)Lewis, Perez, Piktus, Petroni, Karpukhin, Goyal,
  K{\"u}ttler, Lewis, Yih, Rockt{\"a}schel, Riedel, and
  Kiela]{lewis2020retrieval}
Patrick Lewis, Ethan Perez, Aleksandra Piktus, Fabio Petroni, Vladimir
  Karpukhin, Naman Goyal, Heinrich K{\"u}ttler, Mike Lewis, Wen-tau Yih, Tim
  Rockt{\"a}schel, Sebastian Riedel, and Douwe Kiela.
\newblock Retrieval-augmented generation for knowledge-intensive {NLP} tasks.
\newblock In \emph{Advances in Neural Information Processing Systems 33}, pp.\
  9459--9474, 2020.
\newblock URL
  \url{https://proceedings.neurips.cc/paper/2020/hash/6b493230205f780e1bc26945df7481e5-Abstract.html}.

\bibitem[Li et~al.(2024)Li, Park, Lee, and Bastani]{li2024traq}
Shuo Li, Sangdon Park, Insup Lee, and Osbert Bastani.
\newblock {TRAQ}: Trustworthy retrieval augmented question answering via
  conformal prediction.
\newblock In \emph{Proceedings of the 2024 Conference of the North American
  Chapter of the Association for Computational Linguistics: Human Language
  Technologies (Volume 1: Long Papers)}, pp.\  3799--3821, Mexico City, Mexico,
  2024. Association for Computational Linguistics.
\newblock \doi{10.18653/v1/2024.naacl-long.210}.
\newblock URL \url{https://aclanthology.org/2024.naacl-long.210/}.

\bibitem[Lu et~al.(2023)Lu, Yu, Karimireddy, Jordan, and
  Raskar]{lu2023federated}
Charles Lu, Yaodong Yu, Sai~Praneeth Karimireddy, Michael~I. Jordan, and Ramesh
  Raskar.
\newblock Federated conformal predictors for distributed uncertainty
  quantification.
\newblock In \emph{Proceedings of the 40th International Conference on Machine
  Learning}, volume 202 of \emph{Proceedings of Machine Learning Research},
  pp.\  22942--22964. PMLR, 2023.
\newblock URL \url{https://proceedings.mlr.press/v202/lu23i.html}.

\bibitem[Makarov(1982)]{makarov1982fixedmarginals}
G.~D. Makarov.
\newblock Estimates for the distribution function of a sum of two random
  variables when the marginal distributions are fixed.
\newblock \emph{Theory of Probability and Its Applications}, 26\penalty0
  (4):\penalty0 803--806, 1982.
\newblock \doi{10.1137/1126086}.

\bibitem[Mihaylov et~al.(2018)Mihaylov, Clark, Khot, and
  Sabharwal]{mihaylov2018openbookqa}
Todor Mihaylov, Peter Clark, Tushar Khot, and Ashish Sabharwal.
\newblock Can a suit of armor conduct electricity? {A} new dataset for open
  book question answering.
\newblock In \emph{Proceedings of the 2018 Conference on Empirical Methods in
  Natural Language Processing}, pp.\  2381--2391, Brussels, Belgium, 2018.
  Association for Computational Linguistics.
\newblock \doi{10.18653/v1/D18-1260}.
\newblock URL \url{https://aclanthology.org/D18-1260/}.

\bibitem[Mohri \& Hashimoto(2024)Mohri and Hashimoto]{mohri2024conformal}
Christopher Mohri and Tatsunori Hashimoto.
\newblock Language models with conformal factuality guarantees.
\newblock In \emph{Proceedings of the 41st International Conference on Machine
  Learning}, volume 235 of \emph{Proceedings of Machine Learning Research},
  pp.\  36029--36047. PMLR, 2024.
\newblock URL \url{https://proceedings.mlr.press/v235/mohri24a.html}.

\bibitem[Mu \& Li(2026)Mu and Li]{mu2026routinghijacking}
Junjie Mu and Qiongxiu Li.
\newblock A wolf in sheep's clothing: Targeted routing hijacking in federated
  {RAG}, 2026.
\newblock URL \url{https://arxiv.org/abs/2605.28112}.
\newblock Accepted to the EMNLP 2026 Main Conference.

\bibitem[Pal et~al.(2022)Pal, Umapathi, and Sankarasubbu]{pal2022medmcqa}
Ankit Pal, Logesh~Kumar Umapathi, and Malaikannan Sankarasubbu.
\newblock {MedMCQA}: A large-scale multi-subject multi-choice dataset for
  medical domain question answering.
\newblock In \emph{Proceedings of the Conference on Health, Inference, and
  Learning}, volume 174 of \emph{Proceedings of Machine Learning Research},
  pp.\  248--260. PMLR, 2022.
\newblock URL \url{https://proceedings.mlr.press/v174/pal22a.html}.

\bibitem[Park et~al.(2023)Park, Bastani, and Kim]{park2023acon2}
Sangdon Park, Osbert Bastani, and Taesoo Kim.
\newblock {ACon$^2$}: Adaptive conformal consensus for provable blockchain
  oracles.
\newblock In \emph{32nd USENIX Security Symposium (USENIX Security 23)}, pp.\
  3313--3330. USENIX Association, 2023.
\newblock URL
  \url{https://www.usenix.org/conference/usenixsecurity23/presentation/park}.

\bibitem[Pillutla et~al.(2022)Pillutla, Kakade, and
  Harchaoui]{pillutla2022robust}
Krishna Pillutla, Sham~M. Kakade, and Zaid Harchaoui.
\newblock Robust aggregation for federated learning.
\newblock \emph{IEEE Transactions on Signal Processing}, 70:\penalty0
  1142--1154, 2022.
\newblock \doi{10.1109/TSP.2022.3153135}.

\bibitem[Plassier et~al.(2023)Plassier, Makni, Rubashevskii, Moulines, and
  Panov]{plassier2023labelshift}
Vincent Plassier, Mehdi Makni, Aleksandr Rubashevskii, Eric Moulines, and Maxim
  Panov.
\newblock Conformal prediction for federated uncertainty quantification under
  label shift.
\newblock In \emph{Proceedings of the 40th International Conference on Machine
  Learning}, volume 202 of \emph{Proceedings of Machine Learning Research},
  pp.\  27907--27947. PMLR, 2023.
\newblock URL \url{https://proceedings.mlr.press/v202/plassier23a.html}.

\bibitem[Radford et~al.(2019)Radford, Wu, Child, Luan, Amodei, and
  Sutskever]{radford2019gpt2}
Alec Radford, Jeffrey Wu, Rewon Child, David Luan, Dario Amodei, and Ilya
  Sutskever.
\newblock Language models are unsupervised multitask learners.
\newblock Technical report, OpenAI, 2019.
\newblock URL
  \url{https://cdn.openai.com/better-language-models/language_models_are_unsupervised_multitask_learners.pdf}.

\bibitem[Rammal et~al.(2024)Rammal, Gruntkowska, Fedin, Gorbunov, and
  Richt{\'a}rik]{rammal2024compression}
Ahmad Rammal, Kaja Gruntkowska, Nikita Fedin, Eduard Gorbunov, and Peter
  Richt{\'a}rik.
\newblock Communication compression for {B}yzantine robust learning: New
  efficient algorithms and improved rates.
\newblock In \emph{Proceedings of the 27th International Conference on
  Artificial Intelligence and Statistics}, volume 238 of \emph{Proceedings of
  Machine Learning Research}, pp.\  1207--1215. PMLR, 2024.
\newblock URL \url{https://proceedings.mlr.press/v238/rammal24a.html}.

\bibitem[Reimers \& Gurevych(2019)Reimers and Gurevych]{reimers2019sbert}
Nils Reimers and Iryna Gurevych.
\newblock Sentence-{BERT}: Sentence embeddings using {S}iamese {BERT}-networks.
\newblock In \emph{Proceedings of the 2019 Conference on Empirical Methods in
  Natural Language Processing and the 9th International Joint Conference on
  Natural Language Processing (EMNLP-IJCNLP)}, pp.\  3982--3992, Hong Kong,
  China, 2019. Association for Computational Linguistics.
\newblock \doi{10.18653/v1/D19-1410}.
\newblock URL \url{https://aclanthology.org/D19-1410/}.

\bibitem[Robertson \& Zaragoza(2009)Robertson and Zaragoza]{robertson2009bm25}
Stephen Robertson and Hugo Zaragoza.
\newblock The probabilistic relevance framework: {BM25} and beyond.
\newblock \emph{Foundations and Trends in Information Retrieval}, 3\penalty0
  (4):\penalty0 333--389, 2009.
\newblock \doi{10.1561/1500000019}.

\bibitem[R{\"u}ger(1978)]{ruger1978maximale}
B.~R{\"u}ger.
\newblock {Das maximale Signifikanzniveau des Tests: ``Lehne {$H_0$} ab, wenn
  {$k$} unter {$n$} gegebenen Tests zur Ablehnung f{\"u}hren''}.
\newblock \emph{Metrika}, 25\penalty0 (1):\penalty0 171--178, 1978.
\newblock \doi{10.1007/BF02204362}.

\bibitem[Scholten \& G{\"u}nnemann(2025)Scholten and
  G{\"u}nnemann]{scholten2025reliable}
Yan Scholten and Stephan G{\"u}nnemann.
\newblock Provably reliable conformal prediction sets in the presence of data
  poisoning.
\newblock In \emph{International Conference on Learning Representations}, pp.\
  55873--55897, 2025.
\newblock URL
  \url{https://proceedings.iclr.cc/paper_files/paper/2025/hash/8c4d21a4b33361c40c2142db9511c126-Abstract-Conference.html}.

\bibitem[Shi et~al.(2026)Shi, Li, Cao, and Yan]{shi2026jointshift}
Yuanjie Shi, Peihong Li, Xuanyu Cao, and Yan Yan.
\newblock Valid and efficient uncertainty quantification for federated joint
  shift.
\newblock In \emph{Proceedings of the 42nd Conference on Uncertainty in
  Artificial Intelligence}, volume 337 of \emph{Proceedings of Machine Learning
  Research}, pp.\  6232--6260. PMLR, 2026.
\newblock URL \url{https://proceedings.mlr.press/v337/shi26a.html}.

\bibitem[Suresh et~al.(2017)Suresh, Yu, Kumar, and
  McMahan]{suresh2017distributed}
Ananda~Theertha Suresh, Felix~X. Yu, Sanjiv Kumar, and H.~Brendan McMahan.
\newblock Distributed mean estimation with limited communication.
\newblock In \emph{Proceedings of the 34th International Conference on Machine
  Learning}, volume~70 of \emph{Proceedings of Machine Learning Research}, pp.\
   3329--3337. PMLR, 2017.
\newblock URL \url{https://proceedings.mlr.press/v70/suresh17a.html}.

\bibitem[Vovk \& Wang(2020)Vovk and Wang]{vovkwang2020combining}
Vladimir Vovk and Ruodu Wang.
\newblock Combining p-values via averaging.
\newblock \emph{Biometrika}, 107\penalty0 (4):\penalty0 791--808, 2020.
\newblock \doi{10.1093/biomet/asaa027}.

\bibitem[Vovk et~al.(2005)Vovk, Gammerman, and Shafer]{vovk2005algorithmic}
Vladimir Vovk, Alex Gammerman, and Glenn Shafer.
\newblock \emph{Algorithmic Learning in a Random World}.
\newblock Springer, 2005.
\newblock \doi{10.1007/b106715}.
\newblock URL \url{https://doi.org/10.1007/b106715}.

\bibitem[Wang \& Owen(2019)Wang and Owen]{wang2019partial}
Jingshu Wang and Art~B. Owen.
\newblock Admissibility in partial conjunction testing.
\newblock \emph{Journal of the American Statistical Association}, 114\penalty0
  (525):\penalty0 158--168, 2019.
\newblock \doi{10.1080/01621459.2017.1385465}.

\bibitem[Wang et~al.(2026)Wang, He, Zhang, Liu, Liu, Zeng, Qin, Li, Li, Yao,
  An, Liu, Li, Sun, Liu, and Zhu]{wang2026securecollarag}
Zhaoqi Wang, Daqing He, Zijian Zhang, Ye~Liu, Jiamou Liu, Zhirui Zeng, Zhan
  Qin, Zhen Li, Xin Li, Hongwei Yao, Jincheng An, Yong Liu, Yi~Li, Qi~Sun,
  Xiulei Liu, and Liehuang Zhu.
\newblock Combating knowledge corruption in agent systems: A
  {Byzantine}-tolerant secure collaborative {RAG} framework.
\newblock In \emph{Proceedings of the ACM Web Conference 2026}, pp.\
  2661--2672. ACM, 2026.
\newblock \doi{10.1145/3774904.3792200}.

\bibitem[Wen et~al.(2026)Wen, Simeone, and Xing]{wen2026groupconditional}
Haifeng Wen, Osvaldo Simeone, and Hong Xing.
\newblock Efficient federated conformal prediction with group-conditional
  guarantee.
\newblock In \emph{Proceedings of the 42nd Conference on Uncertainty in
  Artificial Intelligence}, volume 337 of \emph{Proceedings of Machine Learning
  Research}, pp.\  7290--7313. PMLR, 2026.
\newblock URL \url{https://proceedings.mlr.press/v337/wen26a.html}.

\bibitem[Xiang et~al.(2026)Xiang, Wu, Zhong, Wagner, Chen, and
  Mittal]{xiang2026robustrag}
Chong Xiang, Tong Wu, Zexuan Zhong, David Wagner, Danqi Chen, and Prateek
  Mittal.
\newblock Certifiably robust {RAG} against retrieval corruption.
\newblock In \emph{2026 IEEE Conference on Secure and Trustworthy Machine
  Learning (SaTML)}, pp.\  392--410. IEEE, 2026.
\newblock \doi{10.1109/SaTML68715.2026.00029}.

\bibitem[Xiong et~al.(2024)Xiong, Jin, Lu, and Zhang]{xiong2024benchmarking}
Guangzhi Xiong, Qiao Jin, Zhiyong Lu, and Aidong Zhang.
\newblock Benchmarking retrieval-augmented generation for medicine.
\newblock In \emph{Findings of the Association for Computational Linguistics:
  ACL 2024}, pp.\  6233--6251, Bangkok, Thailand, 2024. Association for
  Computational Linguistics.
\newblock \doi{10.18653/v1/2024.findings-acl.372}.
\newblock URL \url{https://aclanthology.org/2024.findings-acl.372/}.

\bibitem[Yan et~al.(2024)Yan, Romano, and Weng]{yan2024rscpplus}
Ge~Yan, Yaniv Romano, and Tsui-Wei Weng.
\newblock Provably robust conformal prediction with improved efficiency.
\newblock In \emph{International Conference on Learning Representations}, pp.\
  31974--32010, 2024.
\newblock URL
  \url{https://proceedings.iclr.cc/paper_files/paper/2024/hash/8759c20675d67ef3b91c4607cecbb27e-Abstract-Conference.html}.

\bibitem[Yin et~al.(2018)Yin, Chen, Ramchandran, and
  Bartlett]{yin2018byzantine}
Dong Yin, Yudong Chen, Kannan Ramchandran, and Peter Bartlett.
\newblock {B}yzantine-robust distributed learning: Towards optimal statistical
  rates.
\newblock In \emph{Proceedings of the 35th International Conference on Machine
  Learning}, volume~80 of \emph{Proceedings of Machine Learning Research}, pp.\
   5650--5659. PMLR, 2018.
\newblock URL \url{https://proceedings.mlr.press/v80/yin18a.html}.

\bibitem[Zargarbashi et~al.(2024)Zargarbashi, Akhondzadeh, and
  Bojchevski]{zargarbashi2024robust}
Soroush~H. Zargarbashi, Mohammad~Sadegh Akhondzadeh, and Aleksandar Bojchevski.
\newblock Robust yet efficient conformal prediction sets.
\newblock In \emph{Proceedings of the 41st International Conference on Machine
  Learning}, volume 235 of \emph{Proceedings of Machine Learning Research},
  pp.\  17123--17147. PMLR, 2024.
\newblock URL \url{https://proceedings.mlr.press/v235/h-zargarbashi24a.html}.

\bibitem[Zhong et~al.(2023)Zhong, Huang, Wettig, and Chen]{zhong2023poisoning}
Zexuan Zhong, Ziqing Huang, Alexander Wettig, and Danqi Chen.
\newblock Poisoning retrieval corpora by injecting adversarial passages.
\newblock In \emph{Proceedings of the 2023 Conference on Empirical Methods in
  Natural Language Processing}, pp.\  13764--13775. Association for
  Computational Linguistics, 2023.
\newblock \doi{10.18653/v1/2023.emnlp-main.849}.

\bibitem[Zhu et~al.(2024)Zhu, Zecchin, Park, Guo, Feng, and
  Simeone]{zhu2024wfcp}
Meiyi Zhu, Matteo Zecchin, Sangwoo Park, Caili Guo, Chunyan Feng, and Osvaldo
  Simeone.
\newblock Federated inference with reliable uncertainty quantification over
  wireless channels via conformal prediction.
\newblock \emph{IEEE Transactions on Signal Processing}, 72:\penalty0
  1235--1250, 2024.
\newblock \doi{10.1109/TSP.2024.3358615}.

\bibitem[Zou et~al.(2025)Zou, Geng, Wang, and Jia]{zou2025poisonedrag}
Wei Zou, Runpeng Geng, Binghui Wang, and Jinyuan Jia.
\newblock {PoisonedRAG}: Knowledge corruption attacks to retrieval-augmented
  generation of large language models.
\newblock In \emph{34th USENIX Security Symposium (USENIX Security 25)}, pp.\
  3827--3844. USENIX Association, 2025.
\newblock URL
  \url{https://www.usenix.org/conference/usenixsecurity25/presentation/zou-poisonedrag}.

\end{thebibliography}
\end{document}